%% file: main.tex
\documentclass{article} 
\usepackage{iclr2026_conference,times}

\PassOptionsToPackage{hyperfootnotes=false}{hyperref}
\usepackage{hyperref}
\usepackage{url}

\usepackage[utf8]{inputenc} 
\usepackage[T1]{fontenc}    
\usepackage{hyperref}       
\usepackage{url}            
\usepackage{booktabs}       
\usepackage{amsfonts}       
\usepackage{nicefrac}       
\usepackage{microtype}      
\usepackage{xcolor}         
\usepackage{wrapfig} 
\usepackage{fancyvrb}

\newcommand{\citeposs}[1]{\citeauthor{#1}'s (\citeyear{#1})}

\usepackage{amsmath}
\usepackage{amssymb}
\usepackage{mathtools}
\usepackage{amsthm}
\usepackage{siunitx}
\usepackage{tikz}
\usetikzlibrary{automata, positioning}

\usepackage{pifont}
\newcommand{\cmark}{\ding{51}}\newcommand{\xmark}{\ding{55}}

\usepackage[inline]{enumitem}
\setitemize{noitemsep,topsep=0pt,parsep=0pt,partopsep=0pt,leftmargin=*}
\setenumerate{noitemsep,topsep=0pt,parsep=0pt,partopsep=0pt,leftmargin=*}

\usepackage[capitalize,noabbrev]{cleveref}
\crefname{FancyVerbLine}{line}{lines}
\makeatletter

\renewcommand{\p@FancyVerbLine}{}
\def\FV@refstepcounter#1{%
  \stepcounter{#1}%
  \protected@edef\@currentlabel{\arabic{FancyVerbLine}}%
  \cref@constructprefix{FancyVerbLine}{\cref@result}%
  \protected@edef\cref@currentlabel{[FancyVerbLine][\arabic{FancyVerbLine}][\cref@result]\arabic{FancyVerbLine}}%
}
\makeatother

\makeatletter
\renewenvironment{proof}[1][\proofname]{\par
  \vspace{-\topsep}
  \pushQED{\qed}%
  \normalfont
  \topsep0pt \partopsep0pt
  \trivlist
  \item[\hskip\labelsep
        \itshape
    #1\@addpunct{.}]\ignorespaces
}{%
  \popQED\endtrivlist\@endpefalse
}
\makeatother
\theoremstyle{plain}
\newtheorem{theorem}{Theorem}[section]
\newtheorem{proposition}[theorem]{Proposition}

\newtheorem{myexample}{Example}

\usepackage{latexsym}
\usepackage{cancel}
\usepackage{inconsolata}
\usepackage{enumitem}
\usepackage{amssymb}
\usepackage{mleftright}
\usepackage{xspace}

\usepackage[frozencache,cachedir=minted-cache]{minted}
\setminted{linenos, firstnumber=last, escapeinside=@@, numbersep=4pt}

\usepackage{multicol}
\usepackage{setspace}
\usepackage{framed}
\usepackage{wrapfig}

\usepackage{tikz}
\usetikzlibrary{tikzmark}
\usetikzlibrary{arrows}
\usetikzlibrary{arrows.meta}
\usetikzlibrary{shapes}
\usetikzlibrary{shapes.arrows}
\usetikzlibrary{shapes.misc}
\usetikzlibrary{shapes.geometric}
\usetikzlibrary{automata,arrows}
\usetikzlibrary{positioning}
\usetikzlibrary{backgrounds}
\usetikzlibrary{decorations.pathmorphing}
\usetikzlibrary{decorations.markings}
\tikzset{
  CenterFix/.style={baseline={(current bounding box.center)}},
  TopFix/.style={baseline={(current bounding box.north)}},
}

\usepackage[T1]{fontenc}
\usepackage[utf8]{inputenc}
\usepackage[flushmargin,hang]{footmisc}
\usepackage{microtype}
\usepackage{graphicx}
\usepackage{bbm}
\usepackage{amsmath}
\usepackage{amsfonts}
\usepackage{amssymb}
\usepackage{amsthm}
\usepackage{dsfont}
\usepackage{caption}
\usepackage{subcaption}
\usepackage{xcolor}
\usepackage{soul}
\usepackage{hyperref}
\usepackage[export]{adjustbox}
\usepackage{booktabs}
\usepackage{makecell}
\usepackage{nicefrac}
\usepackage{comment}
\usepackage{thmtools} 
\usepackage{thm-restate}
\usepackage{scalerel}
\usepackage{float}
\usepackage{flafter}
\usepackage{relsize}
\usepackage{mathtools}
\usepackage{tcolorbox}
\tcbuselibrary{skins}
\usepackage{colortbl}
\usepackage{stmaryrd}

\usepackage{colortbl}
\definecolor{ETHBlue}{RGB}{33,92,175}   
\definecolor{ETHGreen}{RGB}{98,115,19}      
\definecolor{ETHPurpleDark}{RGB}{140,10,89} 
\definecolor{ETHPurple}{RGB}{163,7,116} 
\definecolor{ETHGray}{RGB}{111,111,111} 
\definecolor{ETHRed}{RGB}{183,53,45}    
\definecolor{ETHPetrol}{RGB}{0,120,148} 
\definecolor{ETHBronze}{RGB}{142,103,19}    
\definecolor{ETHOrange}{RGB}{230, 100, 50}

\colorlet{MacroColor}{ETHPetrol}
\colorlet{MACROCOLOR}{MacroColor}

\newtcolorbox{validitybox}[1][Correctness]{%
  enhanced,
  frame hidden,
  borderline west={2pt}{0pt}{StormBlue!60},
  colback=black!4,
  coltitle=black,
  fonttitle=\bfseries\scshape\small,
  title={#1.},
  attach title to upper={\enspace},
  before skip=6pt,
  after skip=6pt,
  boxsep=4pt,
  left=6pt, right=6pt, top=2pt, bottom=2pt,
}

\input{macros}

\usepackage{xcolor}

\newcommand{\emojiimg}[1]{\includegraphics[height=1em,width=1em,valign=b]{#1}}

\hypersetup{
  colorlinks=true,
  linkcolor=blue!35!black,   
  citecolor=blue!35!black,   
  urlcolor=blue!35!black     
}

\usepackage{tikz}
\usetikzlibrary{automata, fit, positioning, matrix, arrows.meta, decorations.pathmorphing, shapes, shapes.multipart, chains,shadows.blur}

\usepackage{tikz-qtree}

\usepackage{tikz-dependency}

\usepackage{linguex}
\AtBeginDocument{\setlength{\Extopsep}{0pt}\setlength{\partopsep}{0pt}}

\crefname{section}{\S\!}{\S\S\!}
\crefname{table}{Tab.}{Tab.}
\crefname{figure}{Fig.}{Figs.}
\crefname{algorithm}{Alg.}{}
\crefname{equation}{Eq.}{Eq.}
\crefname{appendix}{\S\!}{\S\S\!}
\crefname{theorem}{Theorem}{Theorems}
\crefname{proposition}{Prop.}{Props.}
\crefname{definition}{Def.}{Defs.}
\crefname{cor}{Corollary}{}
\crefname{myexample}{Example}{Examples}
\crefformat{section}{\S#2#1#3}
\crefname{footnote}{footnote}{footnotes}
\Crefname{footnote}{Footnote}{Footnotes}

\makeatletter
\def\section{\@startsection{section}{1}{\z@}{-1.0ex plus -0.2ex minus -.1ex}{0.4ex plus 0.1ex}{\large\sc\raggedright}}
\def\subsection{\@startsection{subsection}{2}{\z@}{-0.8ex plus -0.2ex minus -.1ex}{0.3ex plus .1ex}{\normalsize\sc\raggedright}}
\def\subsubsection{\@startsection{subsubsection}{3}{\z@}{-0.6ex plus -0.2ex minus -.1ex}{0.2ex plus .1ex}{\normalsize\sc\raggedright}}
\def\paragraph{\@startsection{paragraph}{4}{\z@}{0.5ex plus 0.2ex minus .1ex}{-1em}{\normalsize\bf}}
\makeatother

\AtBeginDocument{%
  \setlength{\abovedisplayskip}{4pt plus 1pt minus 2pt}%
  \setlength{\belowdisplayskip}{4pt plus 1pt minus 2pt}%
  \setlength{\abovedisplayshortskip}{1pt plus 1pt}%
  \setlength{\belowdisplayshortskip}{2pt plus 1pt minus 1pt}%
}

\newcommand{\emoji}[2][]{\includegraphics[height=1em]{#1/#2.png}}

\tikzset{
transition/.style={-{Latex[length=1.8mm, width=1.4mm]}}, 
every state/.style={thick, fill=gray!10},
every loop/.append style=-{Latex[length=1.8mm, width=1.4mm]},
initial text=$ $,
line width=0.7pt,
node distance=10mm,
minimum size=4mm,
}

\crefname{exx}{ex.}{ex.}
\Crefname{exx}{Ex.}{Ex.}
\crefalias{ExNo}{exx}

\title{%
\begin{tikzpicture}[baseline=-0.5ex, shorten >=1pt, auto,
  every state/.style={minimum size=2mm, inner sep=0pt},
  every edge/.style={draw, ->, >=stealth}]
\coordinate (start) at (0,0);
\node[state]          (q1) [right=40mm of start] {};
\node[state]          (q2) [right=32mm of q1]    {};
\node[state,accepting](q3) [right=25mm of q2]    {};
\path[->]
  (start) edge node {\textsc{Transducing}} (q1)
  (q1)    edge node {\textsc{Language}}    (q2)
  (q2)    edge node {\textsc{Models}}      (q3)
  (q3)    edge [loop above, looseness=5, min distance=8mm] node {} (q3);
\end{tikzpicture}%
}

\newcommand{\ku}{\mathrm{q}}
\renewcommand{\eth}{\ensuremath{\Quotient}}
\renewcommand{\ku}{\ensuremath{\Remainder}}
\newcommand{\chifro}{\ensuremath{\srcAlphabet}}

\author{%
  Vésteinn Snæbjarnarson\thanks{Equal contribution. 
  }$~~^{\eth,\ku}$~~~~
  Samuel Kiegeland$^{*\eth,\chifro}$~~~~
  Tianyu Liu$^{\eth}$~~~~\\
  \textbf{Reda Boumasmoud}$^{\eth}$~~~~
  \textbf{Ryan Cotterell$^{\eth}$}~~~~
  \textbf{Tim Vieira}$^{\eth}$\\
$^\eth$ETH Z\"{u}rich\quad
$^\ku$University of Copenhagen \quad
$^\chifro$CHI-FRO\\
\texttt{\{\href{mailto:vest.snae@gmail.com}{vest.snae}, \href{mailto:tim.f.vieira@gmail.com}{tim.f.vieira}\}@gmail.com}
\quad
\texttt{\href{mailto:reda.boumasmoud@math.ethz.ch}{reda.boumasmoud@math.ethz.ch}}\\
\texttt{\{%
\href{mailto:samuel.kiegeland@inf.ethz.ch}{samuel.kiegeland}, \href{mailto:tianyu.liu@inf.ethz.ch}{tianyu.liu}, 
\href{mailto:ryan.cotterell@inf.ethz.ch}{ryan.cotterell}%
\}@inf.ethz.ch} 
\vspace{-0.3cm}
}

\iclrfinalcopy 
\begin{document}

\renewcommand{\paragraph}[1]{\textbf{#1}\;\;}

\addtocontents{toc}{\protect\setcounter{tocdepth}{-10}}

\maketitle

\begin{abstract}
Modern language models define distributions over strings, but downstream tasks often require different output formats.
For instance, a model that generates byte-pair strings does not directly produce word-level predictions, and a DNA model does not directly produce amino-acid sequences.
In such cases, a deterministic string-to-string transformation can convert the model's output to the desired form.
This is a familiar pattern in probability theory: applying a function $f$ to a random variable $X\sim p$ yields a transformed random variable $f(X)$ with an induced distribution.
While such transformations are occasionally used in language modeling, prior work does not treat them as yielding new, fully functional language models.
We formalize this perspective and introduce a general framework for language models derived from deterministic string-to-string transformations.
We focus on transformations representable as finite-state transducers---a commonly used state-machine abstraction for efficient string-to-string mappings.
We develop algorithms that compose a language model with an FST to \emph{marginalize} over source strings mapping to a given target, propagating probabilities through the transducer without altering model parameters and enabling \emph{conditioning} on transformed outputs.
We present an exact algorithm, an efficient approximation, and a theoretical analysis.
We conduct experiments in three domains: converting language models from tokens to bytes, from tokens to words, and from DNA to amino acids.
These experiments demonstrate inference-time adaptation of pretrained language models to match application-specific output requirements.\looseness=-1
\vspace{0.5ex}
\begin{center}
\raisebox{-0.5ex}{\emoji[emoji]{github}} \url{https://github.com/rycolab/transducing-language-models}
\end{center}
\end{abstract}

\section{Introduction}
\label{sec:introduction}
Language models (LMs) define distributions over strings. Yet, the strings they produce often do not match the requirements of downstream applications, so practitioners resort to ad hoc post-processing. We call this the \defn{string mismatch problem}.
For example, in natural language processing, modern language models typically generate byte-pair encoded strings \citep{sennrich-etal-2016-neural}, while downstream tasks may require words or characters instead (see \Cref{ex:planck}, below).
Similarly, DNA language models generate nucleobase sequences, whereas many applications require amino acid sequences (\Cref{ex:dna-sentence-coded}).\looseness=-1

Adding a string-to-string transformation to a generation pipeline is a common engineering solution, such as normalizing output or mapping subword tokens to bytes.
Formally, this defines a new language model over \emph{transformed} strings.
However, while sampling remains straightforward, other operations---such as computing the probability of a transformed string or conditioning on transformed outputs---become intractable.
Consider, for instance, the mapping from a string in any casing to its lowercase version, as in the use-case depicted in \cref{fig:lowercase-transducer}. While lowercasing a given input is trivial, converting the original distribution to a distribution over lowercased words is not.\looseness=-1

This work treats string-to-string transformations as a first-class component of the language modeling pipeline.
We show how to equip these transformed models with the familiar autoregressive interface---incremental next-symbol distributions and prefix probabilities---making them interoperable with any system built for standard autoregressive language models.
This approach to the string mismatch problem is principled, modular, and can be substantially cheaper than retraining.
Moreover, the transformations often guarantee adherence to the requirements of the downstream applications.\looseness=-1

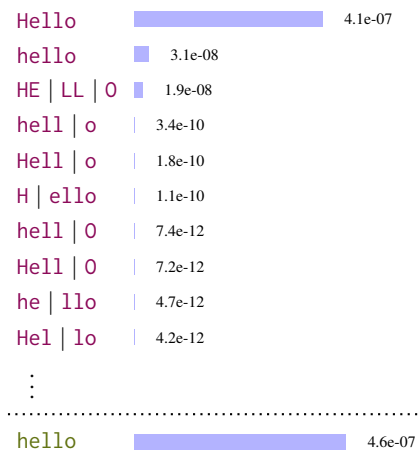
\begin{wrapfigure}{r}{0.4\textwidth} 
\vspace{-.5\baselineskip} 
\small
\centering
\vspace{-0.3cm}
\begin{tikzpicture}[>=Stealth,thick]
\matrix[matrix of math nodes,
        row sep=2pt,
        nodes={anchor=base west, inner sep=0pt},
        anchor=west] (lhs)
{
   |[name=h1]| \tfont{Hello} \\
   |[name=h2]| \tfont{hello} \\
   |[name=h3]| \tfont{HE}\mid\tfont{LL}\mid\tfont{O} \\
   |[name=h4]| \tfont{hell}\mid\tfont{o} \\
   |[name=h5]| \tfont{Hell}\mid\tfont{o} \\
   |[name=h6]| \tfont{H}\mid\tfont{ello} \\
   |[name=h7]| \tfont{hell}\mid\tfont{O} \\
   |[name=h8]| \tfont{Hell}\mid\tfont{O} \\
   |[name=h9]| \tfont{he}\mid\tfont{llo} \\
   |[name=h10]| \tfont{Hel}\mid\tfont{lo} \\
   |[name=hdots, yshift=0.3ex]| \ensuremath{\vdots} \\
};

\probbarRWlab{h1}{2.50cm}{4.1e-07}
\probbarRWlab{h2}{0.19cm}{3.1e-08}
\probbarRWlab{h3}{0.11cm}{1.9e-08}
\probbarRWlab{h4}{0.00cm}{3.4e-10}
\probbarRWlab{h5}{0.00cm}{1.8e-10}
\probbarRWlab{h6}{0.00cm}{1.1e-10}
\probbarRWlab{h7}{0.00cm}{7.4e-12}
\probbarRWlab{h8}{0.00cm}{7.2e-12}
\probbarRWlab{h9}{0.00cm}{4.7e-12}
\probbarRWlab{h10}{0.00cm}{4.2e-12}

\draw[dotted, thick] 
  ($(lhs.south west)+(0,-0.05cm)$) --
  ($(lhs.south east)+(3.9cm,-0.05cm)$);

\coordinate (aggpos) at ($(lhs.south west)+(0,-1.6em)$);
\node[anchor=base west] (agg_h) at (aggpos) {\charfont{hello}};
\probbarRWlab{agg_h}{2.81cm}{4.6e-07}
\end{tikzpicture}
\vspace{-2\baselineskip} 
\caption{GPT-2 probabilities for the BPE \tfont{token strings} that, when lowercased, match \charfont{hello}. 
The total probability of all such sequences is at the bottom.\looseness=-1}
\label{fig:lowercase-transducer}
\vspace{-6\baselineskip} 
\end{wrapfigure}

Consider language models over English text---the same utterance can be encoded in many ways. For example,

\ex. \textit{Dr. Lemaître was flabbergasted. \raisebox{0.6ex}{\emojiimg{emoji/mind-blown.png}}} \label{ex:original}

The byte-pair encoding used by GPT-4o \citep{openai_tiktoken} encodes \Cref{ex:original} as the following string of subword tokens:

\ex. \tokenfont[5822]{Dr} \ \ \tokenfont[13]{.} \ \ \tokenfont[451]{\wsp{L}} \ \ \tokenfont[4603]{ema} \ \ \tokenfont[29135]{\byteRep{C3}\byteRep{AE}tre} \ \ \tokenfont[673]{\wsp{was}} \ \ \tokenfont[1548]{\wsp{fl}} \ \ \tokenfont[378]{ab} \ \ \tokenfont[9667]{berg} \ \ \tokenfont[23030]{asted} \ \ \tokenfont[13]{.} \ \ \tokenfont[93643]{\wsp\byteRep{F0}\byteRep{9F}\byteRep{A4}} \ \ \tokenfont[107]{\byteRep{AF}}
\label{ex:planck}

The byte-pair segmentation of \Cref{ex:original} into subwords is based on character substring frequency. However, many applications seek different units.
For instance, computational psycholinguistics \citep[e.g.,][]{giulianelli-etal-2024-proper} and controlled generation \citep[e.g.,][]{lew2023sequential,xefteri2025syntactic} both require custom units.
This also holds if one wishes to derive distributions over words, such as those defined by the Penn Treebank (PTB) annotation guidelines \citep{marcus-etal-1993-building}, a variation of which is shown below:

\ex. \wordfont{Dr.} \ \ \wordfont{Lemaître} \ \ \wordfont{was} \ \ \wordfont{flabbergasted} \ \ \wordfont{.} \ \ \wordfont{\emojiimg{emoji/mind-blown.png}}
\label{ex:planckhum}

For other applications, e.g., spelling correction, we might also wish to represent \Cref{ex:planck} using a string of characters or UTF-8 byte representation as follows:%
\footnote{Note that UTF-8 allows multiple encodings of some strings. For example, the character \texttt{î} can be encoded \emph{composed} (\texttt{\byteRep{C3}\byteRep{AE}}) or \emph{decomposed} (\texttt{i\byteRep{CC}\byteRep{82}}). See \url{https://unicode.org/reports/tr15/}.
}\looseness=-1

\ex.\label{ex:character-sentence-coded}
    \charfontDuo[68]{D}\charfontDuo[114]{r}\charfontDuo[46]{.}%
    \charfontDuo[32]{\wsp}\charfontDuo[76]{L}\charfontDuo[101]{e}\charfontDuo[109]{m}%
    \charfontDuo[97]{a}\charfontDuo[195]{\byteRep{C3}}\charfontDuo[174]{\byteRep{AE}}%
    \charfontDuo[116]{t}\charfontDuo[114]{r}\charfontDuo[101]{e}\charfontDuo[32]{\wsp}%
    \charfontDuo[119]{w}\charfontDuo[97]{a}\charfontDuo[115]{s}\charfontDuo[32]{\wsp}%
    \charfontDuo[102]{f}\charfontDuo[108]{l}\charfontDuo[97]{a}\charfontDuo[98]{b}%
    \charfontDuo[98]{b}\charfontDuo[101]{e}\charfontDuo[114]{r}\charfontDuo[103]{g}%
    \charfontDuo[97]{a}\charfontDuo[115]{s}\charfontDuo[116]{t}\charfontDuo[101]{e}%
    \charfontDuo[100]{d}\charfontDuo[46]{.}\charfontDuo[32]{\wsp}%
    \charfontDuo[240]{\byteRep{F0}}\charfontDuo[159]{\byteRep{9F}}%
    \charfontDuo[164]{\byteRep{A4}}\charfontDuo[175]{\byteRep{AF}}%

In genetics, we have another example of varying representations. Consider the DNA sequence given in \Cref{ex:dna-sentence-coded}. The sequence is one of many that translate into the hormone \emph{oxytocin}, typically written as the amino acid sequence in  \Cref{ex:amino-acid-coded}, as represented below:\looseness=-1
\begin{center}
\vspace{-1.25\baselineskip}
\parbox[t]{0.66\textwidth}{%

\ex. \tokenfontSing{T}\ \tokenfontSing{G}\ \tokenfontSing{T}\ \tokenfontSing{T}\ \tokenfontSing{A}\ \tokenfontSing{C}\ \tokenfontSing{A}\ \tokenfontSing{T}\ \tokenfontSing{A}\ \tokenfontSing{C}\ \tokenfontSing{A}\ \tokenfontSing{A}\ \tokenfontSing{A}\ \tokenfontSing{A}\ \tokenfontSing{T}\ \tokenfontSing{T}\ \tokenfontSing{G}\ \tokenfontSing{T}\ \tokenfontSing{C}\ \tokenfontSing{C}\ \tokenfontSing{T}\ \tokenfontSing{C}\ \tokenfontSing{T}\ \tokenfontSing{A}\ \tokenfontSing{G}\ \tokenfontSing{G}\ \tokenfontSing{T} 
\label{ex:dna-sentence-coded}

}
\hfill
\parbox[t]{0.30\textwidth}{%

\ex. \charfont{C} \charfont{Y} \charfont{I} \charfont{Q} \charfont{N} \charfont{C} \charfont{P} \charfont{L} \charfont{G}
\label{ex:amino-acid-coded}

}
\vspace{-.5\baselineskip}
\end{center}
Transforming a language model is generally non-trivial. Even transformations with simple colloquial descriptions can be hard to apply.
The conversion to bytes (\cref{ex:character-sentence-coded}) can be performed using a lookup table that specifies the sequence of bytes a subword token maps to.
Similarly, the conversion from DNA to protein sequences (\cref{ex:dna-sentence-coded}) relies on mappings from three-letter sequences of DNA bases to a single amino acid.
The grammatical word segmentation (\cref{ex:planckhum}) cannot be described as easily; it requires \emph{lookahead} to determine whether, e.g., punctuation should stand alone, if it is part of the `\wordfontSimp{Dr.}' title, or a decimal number `\wordfontSimp{3.50}'.
Simple rules alone do not ensure exact conversions.
Despite their simplicity, the byte and amino-acid transformations are not straightforward: the number of token sequences that map to each output sequence grows exponentially over the length of the output, and a proper language model transformation must account for all source-sequence probabilities.\footnote{This can also be seen in \cref{fig:lowercase-transducer}, where an exponentially increasing number of input prefixes map to a given output prefix: with each additional symbol on the output, the number of source sequences doubles.}
The decimal and title examples highlight another level of complexity, where the transformation rules require careful analysis of the surrounding context.
Recent papers have thus focused on practical methods for specific subsets of these transformations, in particular for estimating byte-level probabilities from subword models \citep{phan2024understandingmitigatingtokenizationbias,vieira2024languagemodelstokenslanguage, hayase2025sampling}.\footnote{Our approach is most similar to that of \citet{vieira2024languagemodelstokenslanguage}, which considers the case of \emph{strict-prefix monotone transformations} (\cref{sec:finite-decomp}), which includes the subword-to-byte transformation.
} We generalize these approaches to handle all of the examples given above.
\looseness=-1

This work introduces a framework for string-to-string conversion using \emph{finite-state transducers} (FSTs). FSTs encode a powerful yet tractable family of relations, including those mentioned above. We compose pretrained language models with transducers that encode such transformations, and refer to the compositions as \emph{transduced language models}. FSTs provide explicit structure for tracing how probabilities from the original model should map to output sequences.
This allows us to develop exact and approximate algorithms for efficient sampling, scoring, and conditioning on transformed strings, all without modifying the underlying language model.
We give sufficient conditions for when the transformations can be made exactly (\cref{sec:finite-decomp}) and approximations when exact transformations are infeasible (\cref{sec:algorithms}).\looseness=-1

To validate our approach, we construct FSTs for the three use cases above: (i) converting tokens to bytes, (ii) inserting orthographic boundaries following the Penn Treebank tokenizer, and (iii) converting DNA sequences to sequences over amino acids. We then employ commonly used pretrained language models over the input units of the FSTs, and compose them with the FSTs to obtain language models over the output tokens. Finally, we use these settings to benchmark the theoretical and algorithmic contributions. In particular, we find that using a practical approximation is sufficient to obtain a good estimate at a fraction of the computational cost.\looseness=-1

\section{\texorpdfstring{Background\protect\footnote{\Cref{sec:glossary} provides a notation glossary.}}{Background}}
\label{sec:background-main}

\paragraph{Strings.}
Let $\srcAlphabet$ be an \defn{alphabet} (i.e., a finite, non-empty set). Let $\srcStrings$ denote the set of all finite strings over $\srcAlphabet$, including the empty string $\varepsilon_\srcAlphabet$. 
When there is no risk of ambiguity with other alphabets, we simply write $\varepsilon$. 
We use $\srcStr, \srcStrp \in \srcStrings$ to denote strings and $\srcStringsSub, \srcStringsSubp \subseteq \srcStrings$ to denote sets of strings.
Let $\srcStr \srcStrp$ denote \defn{concatenation}. Similarly, we define the concatenation of sets of strings as $\srcStringsSub \srcStringsSubp \defeq \{ \srcStr \srcStrp \mid \srcStr \in \srcStringsSub, \srcStrp \in \srcStringsSubp\}$, and in the singleton case as $\srcStr \srcStringsSubp \defeq\{\srcStr\} \srcStringsSubp$, and $\srcStringsSub \srcStrp \defeq \srcStringsSub\{\srcStrp\}$.
We write $\srcStr \preceq \srcStrp$ when $\srcStr$ is a \defn{prefix} of $\srcStrp$, and $\srcStr \prec \srcStrp$ when it is a \defn{strict-prefix}. Conversely, if $\srcStr \preceq \srcStrp$, we say $\srcStrp$ is an \defn{extension} of $\srcStr$ (a \defn{strict extension} when $\srcStr \prec \srcStrp$).

\paragraph{Language models.}
\label{sec:lm-basics}
A \defn{language model} $\pSource$ is a probability distribution over a set of strings $\srcStrings$.
Let $\eos \notin \srcAlphabet$ be a special \defn{end-of-string symbol}.
We define the \defn{prefix probability} of $\pSource$ as the probability that a string $\rvX \sim \pSource$ starts with a given prefix $\srcStr$,
and the \defn{conditional prefix probability}:
\begin{align}
\label{eq:ratio}
\prefixSource(\srcStr) 
\defeq 
\sum_{\srcStrp \in \srcStrings} \pSource(\srcStr \srcStrp)
\qquad\qquad
\prefixSource(\srcStrp \mid \srcStr)
\defeq
\frac{\prefixSource(\srcStr\srcStrp)}{\prefixSource(\srcStr)}
\qquad\qquad
\prefixSource(\eos \mid \srcStr) 
\defeq \frac{\pSource(\srcStr)}{\prefixSource(\srcStr)}
\end{align}
when $\prefixSource(\srcStr) > 0$; otherwise we set $\prefixSource(\srcStrp \mid \srcStr) \defeq 0$ and $\prefixSource(\eos \mid \srcStr) \defeq 1$.
Therefore, $\prefixSource(\cdot \mid \srcStr)$ is a probability distribution over $\srcAlphabet \sqcup \{\eos\}$ for all $\srcStr \in \srcStrings$.
Using this structure, any language model $\pSource$ may be factorized as $\label{eq:conditional-prob-product}
\pSource(\srcStr) =
\prefixSource(\eos \mid \srcStr) \prod_{t = 1}^{\card{\srcStr}} \prefixSource\mleft(\srcSym_t \mid \srcStr_{<t}\mright)$.
This factorization defines a left-to-right generative process: starting from $\srcStr = \varepsilon$, we repeatedly sample $\srcSymp \sim \prefixSource(\cdot \mid \srcStr)$; if $\srcSymp = \eos$, we stop, otherwise we update $\srcStr$ to $\srcStr \srcSymp$. Conditional generation simply starts from the conditioning prefix instead of the empty string. We refer to the quantities in \cref{eq:ratio} as the \defn{autoregressive interface} to the language model $\pSource$.\looseness=-1

\paragraph{Cylindrical sets.}
\label{sec:basissets}
A cylindrical set is the set of all strings with a given prefix.
Let $\srcStringsSub, \srcStringsSubp \subseteq \srcStrings$; we define the \defn{cylinder} over $\srcStringsSub$ as $\basisSet{\srcStringsSub} \defeq \srcStringsSub\srcStrings$. We say that $\srcStringsSub$ is \defn{cylindrical} if $\srcStringsSub = \basisSet{\srcStringsSub}$.
The union of cylinder sets is again a cylinder set, since cylinders are upward-closed.
We define the \defn{basic cylinder} for $\srcStr$ as $\basisSet{\srcStr}\defeq \basisSet{\{\srcStr\}}$.
The \defn{prefix-base} operation $\pr(\srcStringsSub)$ is defined by $\pr(\srcStringsSub) \defeq \{ \srcStr \in \srcStringsSub \colon \nexists\, \srcStr' \in \srcStringsSub\colon \srcStr' \prec \srcStr \}$; this operation uniquely partitions $\basisSet{\srcStringsSub}$ into basic cylinders over $\pr(\srcStringsSub)$.
We say that $\srcStringsSub$ is \defn{prefix-free} if $\pr(\srcStringsSub) = \srcStringsSub$.
Since $\pr(\srcStringsSub)$ is prefix-free, the basic cylinders $\{\basisSet{\srcStr} \mid \srcStr \in \pr(\srcStringsSub)\}$ are pairwise disjoint, and $\basisSet{\srcStringsSub} = \bigsqcup_{\srcStr \in \pr(\srcStringsSub)} \basisSet{\srcStr}$.\looseness=-1

\paragraph{Transducers.}
\label{sec:transducers}
A \defn{transducer} is a state-machine that encodes string-to-string relations $\transFn \subseteq \srcStrings \times \tgtStrings$. When we express a relationship defined by $\transFn$ as a transducer, we expose the computational structure needed to develop efficient algorithms.
Formally, a \defn{finite-state transducer}\footnote{We refer to \citet[Ch.\@ 2 \& 3]{Pin2021Handbook} for a detailed treatment of transducers.} (\defn{FST}) $\transST$ is a tuple $(\States, \srcAlphabet, \tgtAlphabet, \InitialStates, \FinalStates,  \Transitions)$ where $\States$ is a finite set of \defn{states} and $\srcAlphabet$ and $\tgtAlphabet$ are alphabets of \defn{input} and \defn{output} symbols, respectively.
The sets $\InitialStates, \FinalStates \subseteq \States$ are the \defn{initial} and \defn{accepting} states.
$\Transitions \subseteq \States \times (\srcAlphabet \cup \{\varepsilon\}) \times (\tgtAlphabet \cup \{\varepsilon\}) \times \States$ is a set of \defn{transitions}.
We render transitions $(\state, \srcSym, \tgtSym, \statep) \in \Transitions$ as $\state \xrightarrow{\srcSym: \tgtSym} \statep$;
we say the transition \defn{scans} $\srcSym$ and \defn{emits} $\tgtSym$.
We write $\Transitions(\state)$ for the set of outgoing transitions from state $\state$, and $\Transitions(\state,\srcSym)$ for those that scan $\srcSym$.\footnote{I.e., $\Transitions(\statepp) \defeq \{ (\state \xrightarrow{\srcSym:\tgtSym} \statep) \in \Transitions\colon \state = \statepp \}$, and $\Transitions(\statepp, \srcSympp) \defeq \{ (\state \xrightarrow{\srcSym:\tgtSym} \statep) \in \Transitions\colon \state = \statepp, \srcSym = \srcSympp \}$.}
The transducer $\transST$ defines a set of \defn{paths} $\Paths$. Each \defn{path} $\Path \in \Paths$ is a sequence of transitions of the form $\state_0 \xrightarrow[]{\srcSym_1:\tgtSym_1} \state_1 \xrightarrow[]{\srcSym_2:\tgtSym_2} \state_2 \cdots \state_{N-1} \xrightarrow[]{\srcSym_N:\tgtSym_N} \state_N$.
We call $\Path$ an \defn{accepting path} if $\state_0\in\InitialStates$ and $\state_N \in \FinalStates$.
The \defn{relation defined} by $\transST$ is given by $\sem{\transST} \defeq \{ (\srcStr, \tgtStr) \mid \state_0 \xrightarrow[]{\srcSym_1:\tgtSym_1} \state_1 \cdots \state_{N-1} \xrightarrow[]{\srcSym_N:\tgtSym_N} \state_N \in \Paths \colon \state_0 \in \InitialStates, \state_N \in \FinalStates \}$, i.e., each accepting path contributes (not necessarily uniquely) a pair of scanned and emitted strings. 
When every transition of a transducer scans and emits the same symbol ($\srcSym = \tgtSym$), the machine acts as a \defn{finite-state automaton} (\defn{acceptor}) that recognizes a \defn{language} $L \subseteq \srcStrings$---the set of strings admitted by at least one accepting path.             
Such a machine is a \defn{nondeterministic finite automaton} (\defn{NFA}) in general; it is a
\defn{deterministic finite automaton} (\defn{DFA}) if it has a single initial state, no $\varepsilon$-transitions, and at most one transition per state and input symbol.
Every NFA can be converted to an equivalent DFA by \defn{determinization} \citep{rabin1959finite}.
For completeness, \cref{appendix:transducer-bg} provides additional background.\looseness=-1

\section{Transduced Language Models}
\label{sec:pushforward}

A \defn{transduced language model} $\pTarget$ arises from applying a string-to-string \defn{transformation} $\transFn\colon \srcStrings \to \tgtStrings$, encoded by a transducer $\transST$, to a string drawn from a \defn{source language model} $\pSource$. Formally, if $\rvX \sim \pSource$, then $\transFn(\rvX)$ has the following probability mass function:
\begin{equation}
\label{eq:pushforward}
\pTarget(\tgtStr) 
\defeq
\Pr[\rvX \sim \pSource]{ \tgtStr = \transFn(\rvX) }
= \smashoperator{\sum_{\srcStr \in \transFnInv(\tgtStr)}} \pSource(\srcStr)
\end{equation}
where $\transFnInv(\tgtStr)$ is the \defn{preimage}\label{def:preimage} of $\tgtStr$, $\transFnInv(\tgtStr) \defeq \{ \srcStr \in\srcStrings\colon \tgtStr = \transFn(\srcStr) \}$.
Put differently, in \cref{eq:pushforward}, we sum over the strings $\srcStr$ such that $\transFn(\srcStr)=\tgtStr$.
Unfortunately, evaluating $\pTarget(\tgtStr)$ exactly using \cref{eq:pushforward} is generally infeasible (since the preimage $\transFnInv(\tgtStr)$ can be very large), even though exact sampling from $\pTarget$ is efficient (by sampling $\srcStr\sim\pSource$ and applying $\transFn$).
Like all language models, a transduced language model $\pTarget$ has prefix and conditional prefix probability functions; its prefix probability is\looseness=-1
\begin{align}
\prefixTarget(\tgtStr) 
=
\Pr[\rvX \sim \pSource]{ \tgtStr \preceq \transFn(\rvX) }
= \smashoperator{\sum_{\srcStr \in \preCover(\tgtStr)}} \pSource(\srcStr)
\label{eq:p-chars-sum}
\end{align}
where $\preCover(\tgtStr)$ is the \defn{precover} of $\tgtStr$, with respect to $\transFn$, defined as $\preCover(\tgtStr) \defeq \{ \srcStr \in \srcStrings\colon \tgtStr \preceq \transFn(\srcStr) \}$.\footnote{Note that the precover depends on $\transFn$; we suppress this dependency when it is clear from context.}

Prefix probabilities yield a conditional factorization of string probability (see \cref{sec:lm-basics}), enabling efficient left-to-right autoregressive generation.
We develop a method in \cref{sec:decomposingprecover} that allows us to compute the sum in \cref{eq:p-chars-sum} in \emph{finite} time for a general class of mappings, such as those mentioned in the introduction (i.e., normalizing text, inserting orthographic word boundaries, or converting DNA to amino-acid sequences).  In \cref{sec:algorithms}, we present algorithms to compute these quantities.

\section{Decomposing the Precover}
\label{sec:decomposingprecover}
In \cref{sec:pushforward}, we saw that if we can sum over the precover of $\tgtStr$, we can calculate $\prefixTarget(\tgtStr)$
(\cref{eq:p-chars-sum}), unlocking an autoregressive interface
to the transduced language model.
The following two examples illustrate how we can often compute this infinite sum by exploiting structural properties of the transducer.\looseness=-1

\begin{myexample}
\label{ex:lowercase}
The transducer below lowercases a string.
For the target string $\charfont{ab}$, the precover is the infinite set $\preCover(\charfont{ab})=\basisSet{\{\tfont{AB}, \tfont{Ab}, \tfont{aB}, \tfont{ab}\}}$.
Given a model $\pSource$ over the input language, the derivation on the right (\ref{eq:example-decomp-start}--\ref{eq:example-decomp}) applies \cref{eq:p-chars-sum} to express $\prefixTarget(\charfont{ab})$ as a sum of four source-language prefix probabilities:\looseness=-1
\begin{center}%
\vspace{-0.5\baselineskip}
\begin{minipage}[c]{0.34\textwidth}%
\qquad\ \
\begin{tikzpicture}[
    node distance=1cm,
    auto,
    thick,
    state/.style={circle, draw, minimum size=0.8cm},
    accepting/.style={double},
    >=Stealth
]
\begin{scope}[local bounding box=lowercase]
    \node[state, initial above, accepting, initial text={}] (q0) {$0$};

    \path[->] (q0) edge [loop left, looseness=8] node[left, align=center] {
        $\tfont{A}:\charfont{a}$ \\
        $\tfont{B}:\charfont{b}$ \\
        \vspace{4pt}
        $\vdots$ \\
        $\tfont{Z}:\charfont{z}$
    } (q0);

    \path[->] (q0) edge [loop right, looseness=8] node[right, align=center] {
        $\tfont{a}:\charfont{a}$ \\
        $\tfont{b}:\charfont{b}$ \\
        \vspace{4pt}
        $\vdots$ \\
        $\tfont{z}:\charfont{z}$
    } (q0);
        \draw[dashed, gray] (2.6, -1.25) -- (2.6, 1.25);  

\end{scope}
\end{tikzpicture}
\end{minipage}%
\hfill
\begin{minipage}[c]{0.64\textwidth}
\centering
\begin{subequations}
\begin{align}
\label{eq:example-decomp-start}
\prefixTarget(\charfont{ab}) &= \sum_{\srcStr\in\preCover(\charfont{ab})} \pSource(\srcStr) \\
&= \sum_{\srcStrp \in \{\tfont{AB}, \tfont{Ab}, \tfont{aB}, \tfont{ab}\}} \sum_{\srcStr\in\basisSet{\srcStrp}}\pSource(\srcStr)  \\
&= \prefixSource(\tfont{AB}) + \prefixSource(\tfont{Ab}) + \prefixSource(\tfont{aB}) + \prefixSource(\tfont{ab})
    \label{eq:example-decomp}
\end{align}
\end{subequations}
\end{minipage}
\vspace{-.5\baselineskip}
\end{center}
\end{myexample}
In \cref{ex:lowercase}, each input symbol maps to exactly one output symbol, so the precover decomposes neatly into cylinders. The next example shows what happens when some source strings cover the target string, while their extensions do not.

\begin{myexample}
\label{ex:newspeak}%
The transducer below implements a \emph{newspeak}\footnote{\emph{Newspeak} is the controlled language from Orwell's \emph{1984} \citep{1984}; the example \sourceFont{bad} $\to$ \targetFont{ungood} is canonical. A complete Newspeak transducer is left as an exercise for the Party.} rewrite rule: the word \sourceFont{bad} is replaced by \targetFont{ungood}.
For the target string $\charfont{ba}$, the precover does not decompose neatly into cylinders: since \sourceFont{bad} does not map to a string prefixed by \targetFont{ba}, the cylinder $\basisSet{\tfont{bad}}$ does not contribute to $\prefixTarget(\charfont{ba})$.
The derivation on the right (\ref{eq:badungood}--\ref{eq:badundgood-final}) shows how this cylinder is excluded.\footnote{Arcs labeled by a single symbol are \emph{copy transitions}, e.g., $\tfont{b}$ is shorthand for $\tfont{b}:\charfont{b}$; a set label such as $\srcAlphabet\smallsetminus\{\tfont{b}\}$ denotes one copy-transition arc per symbol in the set.}
\begin{center}%
\vspace{-.75\baselineskip}
\begin{minipage}[c]{0.45\textwidth}
\centering
\begin{tikzpicture}[
  baseline=(current bounding box.center),
  auto,
  thick,
  state/.style={circle, draw, minimum size=0.8cm},
  accepting/.style={double},
  >=Stealth
]
\tikzset{b/.style={font=\ttfamily\footnotesize}}

\node[state, initial above, accepting] (q0) {$0$};
\node[state, above right=2mm and 20mm of q0, accepting] (q1) {$1$};
\node[state, below=4mm of q1, accepting] (q2) {$2$};
\node[state, below right=2mm and 20mm of q1] (q3) {$3$};

\path[->] (q0) edge[loop below] node[b] {$\srcAlphabet\smallsetminus\{\tfont{b}\}$} ();

\path[->] (q0) edge[bend left=20, sloped] node[b, above] {\tfont{b}} (q1);
\path[->] (q0) edge[bend right=50] node[b, above] {\tfont{b}:\(\varepsilon\)} (q3);

\path[->] (q1) edge[bend right=20] node[b, left] {\tfont{a}} (q2);
\path[->] (q1) edge[loop above] node[b, below] {\tfont{b}} ();

\path[->] (q2) edge[bend right=20] node[b, right] {\tfont{b}} (q1);
\path[->] (q2) edge[bend left=25, sloped] node[b, above] {$\srcAlphabet\smallsetminus\{\tfont{b}, \tfont{d}\}$} (q0);

\path[->] (q2) edge[bend right=25, sloped] node[b, above] {\tfont{b}:\(\varepsilon\)} (q3);
\path[->] (q1) edge[bend left=10, sloped] node[b, above] {\tfont{b}:\(\varepsilon\)} (q3);

\path[->] (q3) edge[bend right=75, sloped] node[b, above] {\tfont{ad}:\charfont{ungood}} (q0);

\path[->] (q1) edge[bend left=20, sloped] node[b, above] {$\srcAlphabet\smallsetminus\{\tfont{b}, \tfont{a}\}$} (q0);
\draw[dashed, gray] (6, -1.5) -- (6, 2);

\end{tikzpicture}
\end{minipage}%
\hfill
\begin{minipage}[c]{0.51\textwidth}
\centering
\begin{subequations}
\label{eq:badungood}
\begin{align}
\prefixTarget(\charfont{ba}) 
&= \sum_{\srcStr\in\basisSet{\tfont{ba}}\setminus\basisSet{\tfont{bad}}}\pSource(\srcStr) \\
&= \sum_{\srcStr\in\bigcup_{\srcSym\in\srcAlphabet\setminus\{\tfont{d}\}}\basisSet{\tfont{ba}\srcSym}\cup\{\tfont{ba}\}}\pSource(\srcStr) \\
&= \pSource(\tfont{ba}) + \sum_{\srcSym\in\srcAlphabet\setminus\{\tfont{d}\}}\prefixSource(\tfont{ba}\srcSym)
\label{eq:badundgood-final}
\end{align}
\end{subequations}
\end{minipage}
\vspace{-\baselineskip}
\end{center}
\end{myexample}
Because $\tfont{bad}\in \basisSet{\tfont{ba}}$ but $\tfont{bad}\notin\preCover(\charfont{ba})$, the precover cannot be decomposed entirely into cylinders.
Instead, we decompose it into two disjoint parts: a maximal cylindrical subset and its complement in the precover.
The \emph{quotient} collects the shortest element of each cylinder; the complement is the \emph{remainder}.
The final step (\ref{eq:badundgood-final}) illustrates a \defn{computational shortcut}; for any $\tgtStr$ we can decompose $\prefixTarget(\tgtStr)$:\looseness=-1
\begin{equation}
\label{eq:qr-shortcut}
\prefixTarget(\tgtStr) = \underbrace{\sum_{\srcStr \in \Quotient(\tgtStr)}}_{\text{Quotient}} \prefixSource(\srcStr) + \underbrace{\sum_{\srcStr \in \Remainder(\tgtStr)}}_{\text{Remainder}} \pSource(\srcStr)
\end{equation}
The precover can also be represented as an FSA, as shown below for  $\preCover(\charfont{ba})$. The single string accepted at $\RemainderNode$ ({\color{JadeMist}teal} node) marks the remainder element $\tfont{ba}$ while the state $\UniversalNode$ ({\color{Saffron} yellow} node) accepts $\basisSet{\tfont{ba}\cdot (\srcAlphabet\setminus \{\tfont{d}\})}$ and marks the cylinder over the quotient.
\begin{center}
\vspace{-\baselineskip}
\centering
  \begin{tikzpicture}[
    auto, thick,
    state/.style={circle, draw, minimum size=0.2cm},
    >=Stealth
  ]
  \tikzset{b/.style={font=\ttfamily\footnotesize}}
  \node[state, initial left] (s0) {$0$};
  \node[state, right=12mm of s0] (s1) {$1$};
  \node[state, double, fill=JadeMist!15, right=12mm of s1] (s2) {$\RemainderNode$};
  \node[state, double, fill=Saffron!20, right=12mm of s2] (s3) {$\UniversalNode$};

  \path[->] (s0) edge node[b, above] {\tfont{b}} (s1);
  \path[->] (s1) edge node[b, above] {\tfont{a}} (s2);
  \path[->] (s2) edge node[b, above] {$\srcAlphabet\smallsetminus\{\tfont{d}\}$} (s3);
  \path[->] (s3) edge[loop right] node[b] {$\srcAlphabet$} ();
  \end{tikzpicture}
\vspace{-.5\baselineskip}
\end{center}

We now formalize the remainder, quotient, and decomposition for any string-to-string function $\transFn$.

\paragraph{The prefix decomposition of the precover.}
\label{sec:definition}
Let $\transFn \colon \srcStrings \to \tgtStrings$ be a map. For each $\tgtStr \in \tgtStrings$, define 
$\maxQuotient(\tgtStr) \defeq \{ \srcStr \in \srcStrings \colon \basisSet{\srcStr} \subseteq \preCover(\tgtStr) \}$; this set is a cylinder---if $\srcStr \in \maxQuotient(\tgtStr)$ then every extension of $\srcStr$ also belongs to $\maxQuotient(\tgtStr)$---making it the largest cylinder contained in $\preCover(\tgtStr)$.
We define the \defn{quotient} and \defn{remainder} of $\tgtStr$ with respect to $\transFn$ as
\begin{align}
\Quotient(\tgtStr) \defeq \pr\mleft( \maxQuotient(\tgtStr) \mright) \qquad\text{and}\qquad \Remainder(\tgtStr) \defeq \preCover(\tgtStr) \setminus \maxQuotient(\tgtStr)
\label{def:qr}
\end{align}
We call the pair $\left( \Quotient(\tgtStr),\ \Remainder(\tgtStr) \right)$ the \defn{optimal prefix decomposition} of $\preCover(\tgtStr)$, characterized by three conditions:
(i)~$\Quotient(\tgtStr)$ is \emph{prefix-free},
(ii)~$\preCover(\tgtStr) = \basisSet{\Quotient(\tgtStr)} \sqcup \Remainder(\tgtStr)$ (\emph{validity}), and
(iii)~$\basisSet{\Quotient(\tgtStr)} = \maxQuotient(\tgtStr)$ (\emph{maximality})---$\Quotient(\tgtStr)$ identifies the largest cylinder in $\preCover(\tgtStr)$.
This decomposition (indeed, any valid one) lets us compute prefix probabilities using the shortcut:
\begin{subequations}
\vspace{.5\baselineskip}
\begin{align}
\prefixTarget(\tgtStr) 
= \smashoperator{\sum_{\srcStr\in\preCover(\tgtStr)}} \pSource(\srcStr) 
= \smashoperator{\sum_{\srcStr\in\maxQuotient(\tgtStr)\,\sqcup\, \Remainder(\tgtStr)}} \pSource(\srcStr)
= \smashoperator{\sum_{\substack{\srcStr\in\Quotient(\tgtStr) \\ \srcStrp \in \srcStrings}}} \pSource(\srcStr \srcStrp) + \smashoperator{\sum_{\srcStr\in\Remainder(\tgtStr)}} \pSource(\srcStr)
= \smashoperator{\sum_{\srcStr \in \Quotient(\tgtStr)}} \prefixSource(\srcStr) + \smashoperator{\sum_{\srcStr \in \Remainder(\tgtStr)}} \pSource(\srcStr)
\label{eq:decompose-qr}
\end{align}   
\end{subequations}
\cref{sec:algorithms} provides an algorithm for computing the prefix decomposition; \cref{sec:finite-decomp} identifies when it is finite.\looseness=-1

\section{Algorithms}
\label{sec:algorithms}
We now present an algorithm for computing prefix decompositions by exploiting the explicit structure of a transducer that encodes the function.
Combined with the computational shortcut (\cref{eq:qr-shortcut}), this gives an autoregressive interface to transduced language models.
We first describe the algorithm abstractly, then instantiate its checks using a transducer, and finally discuss optimizations.\looseness=-1

\begin{wrapfigure}{r}{0.38\textwidth}
\vspace{1\baselineskip}
\raggedleft
\vspace{-\baselineskip}
\begin{minipage}[t]{0.9\linewidth}
\begin{minted}{python}
def @$\decompose(\tgtStr)$@: # memoize
  @$N \gets |\tgtStr|$@
  if @$N = 0$@:
    @$\queue \gets \textsc{Queue}(\{ \varepsilon \})$@
  else:
    @$(\algQuotientp, \algRemainderp) \gets \decompose(\tgtStr_{<N})$@
    @$\queue \gets \textsc{Queue}(\algQuotientp \cup \algRemainderp)$@
  @$(\algQuotient, \algRemainder) \gets (\emptyset, \emptyset)$@
  while @$|\queue| > 0$@:
    @$\queuep \gets \emptyset$@
    for @$\srcStr \in \queue$@:
      if @$\isCylinder(\srcStr, \tgtStr)$@: @\label{line:cylindercheck}@
        @$\algQuotient$@.add(@$\srcStr$@)
        continue @\label{line:continueabs}@
      if @$\isMember(\srcStr, \tgtStr)$@: @\label{line:membercheck}@
        @$\algRemainder$@.add(@$\srcStr$@)
      for @$\srcSymp \in \srcAlphabet$@:
        if @$\isLive(\srcStr\srcSymp, \tgtStr)$@: @\label{line:livecheck}@
          @$\queuep$@.add(@$\srcStr \srcSymp$@)
    @$\queue \gets \prune(\queuep)$@
  return @$(\algQuotient, \algRemainder)$@
\end{minted}
\vspace{-2\baselineskip}
\end{minipage}
\caption{%
Decomposition algorithm.
\label{alg:decompose}
\vspace{-2.5\baselineskip}
}
\end{wrapfigure}

\Cref{alg:decompose} gives the decomposition algorithm (\decompose), which maintains a queue of candidate source strings and explores them by breadth-first search (BFS), optionally pruning low-probability candidates at each step (described below).\footnote{For efficiency, $\decompose$ should be memoized.  For $|\tgtStr|>0$, the recursive call $\decompose(\tgtStr_{<N})$ seeds the queue with the previous decomposition $\algQuotientp \cup \algRemainderp$ rather than enumerating from $\varepsilon$.}
Each dequeued string $\srcStr$ undergoes three checks, defined in terms of the precover $\preCover(\tgtStr)$:
\looseness=-1
\begin{enumerate}
\item \emph{Cylinder}: $\isCylinder(\srcStr, \tgtStr) \!\Longleftrightarrow\! \basisSet{\srcStr} \subseteq \preCover(\tgtStr)$, i.e., every extension of $\srcStr$ covers $\tgtStr$.
$\srcStr$ is added to the quotient set $\algQuotient$ and not explored further (\cref{line:cylindercheck}).
\item \emph{Member}: $\isMember(\srcStr, \tgtStr) \!\Longleftrightarrow\! \srcStr \!\in\! \preCover(\tgtStr)$, i.e., $\srcStr$ itself covers $\tgtStr$.
When $\isCylinder$ is false but $\isMember$ is true, $\srcStr$ is added to the remainder set $\algRemainder$; its extensions are still explored (\cref{line:membercheck}).\looseness=-1
\item \emph{Live}: $\isLive(\srcStr, \tgtStr) \!\Longleftrightarrow\! \exists \srcStrpp \in \srcStrings \colon \srcStr\srcStrpp \in \preCover(\tgtStr)$, i.e., some extension of $\srcStr$ belongs to the precover.
Only live extensions are enqueued (\cref{line:livecheck}).
\end{enumerate}
Because strings are processed shortest-first, if any prefix of the current string had already entered the quotient, the current string would never have been enqueued.
Once the queue is exhausted, the algorithm returns the prefix decomposition.
We instantiate these checks concretely in \cref{sec:dfa-alg}.

\begin{theorem}[Correctness of $\decompose$]
\label{thm:decompose-correctness}
If $\preCover(\tgtStr)$ admits a finite decomposition and the three checks exactly implement the conditions above with no pruning, then $\decompose(\tgtStr)$ terminates and its output $(\algQuotient, \algRemainder)$ is the optimal prefix decomposition (\cref{def:qr}).
\end{theorem}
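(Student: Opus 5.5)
We first settle what ``admits a finite decomposition'' means: $\Quotient(\tgtStr)$ and $\Remainder(\tgtStr)$ are both finite. The proof then rests on one invariant of the BFS. At every iteration, the queue contains exactly those strings $\srcStr$ of the current length $n$ that satisfy two conditions: $\isLive(\srcStr,\tgtStr)$ holds, and no proper prefix of $\srcStr$ lies in $\maxQuotient(\tgtStr)$. For $N=0$ this holds at the start: $\preCover(\varepsilon)=\srcStrings$, so $\varepsilon$ is live, and it has no proper prefixes. The inductive step uses two facts. Liveness is prefix-closed, since if $\srcStr\srcSymp$ is live then so is $\srcStr$. And a string with no proper prefix in $\maxQuotient(\tgtStr)$ is expanded precisely when it is not itself in $\maxQuotient(\tgtStr)$, i.e.\ when $\isCylinder$ fails. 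Together these show that the next queue is the set of live length-$(n{+}1)$ strings with no proper prefix in $\maxQuotient(\tgtStr)$.

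For the memoized case $N>0$, the seed is $\Quotient(\tgtStr_{<N})\cup\Remainder(\tgtStr_{<N})$, which by induction on $N$ is the correct decomposition. These strings may have different lengths, so the BFS is no longer strictly layered, and the invariant must be restated as an antichain-type claim. Because $\preCover(\tgtStr)\subseteq\preCover(\tgtStr_{<N})$, every element of $\preCover(\tgtStr)$ either extends some $\srcStr\in\Quotient(\tgtStr_{<N})$ or lies in $\Remainder(\tgtStr_{<N})$. In the second case it has a seeded prefix, namely itself. We would therefore prove that every $\srcStrp\in\preCover(\tgtStr)$ has a prefix that is eventually dequeued, and that every string is enqueued at most once. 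For the latter we may need a mild argument that a remainder string and an extension of it do not both lead to duplicate exploration. Since the queue holds sets and each generation extends length by one, we would treat duplicates as harmless for the output sets.

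Correctness of the output then follows from the invariant. Take $\srcStr\in\Quotient(\tgtStr)$. It is live, and by prefix-freeness and maximality none of its proper prefixes lies in $\maxQuotient(\tgtStr)$, so it is dequeued; $\isCylinder$ holds, so it is added to $\algQuotient$. Conversely, anything added to $\algQuotient$ lies in $\maxQuotient(\tgtStr)$ and has no proper prefix there, hence lies in $\pr(\maxQuotient(\tgtStr))$. Now take $\srcStr\in\Remainder(\tgtStr)$. It is in $\preCover(\tgtStr)$, hence live. Since $\srcStr\notin\maxQuotient(\tgtStr)$ and $\maxQuotient(\tgtStr)$ is upward-closed, no prefix of $\srcStr$ lies in $\maxQuotient(\tgtStr)$. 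So $\srcStr$ is dequeued, $\isCylinder$ fails, $\isMember$ holds, and $\srcStr$ goes into $\algRemainder$. The converse direction for $\algRemainder$ is immediate from the checks. Hence $(\algQuotient,\algRemainder)=(\Quotient(\tgtStr),\Remainder(\tgtStr))$.

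The main obstacle is termination. We must show the queue eventually empties, i.e.\ that only finitely many live strings have no prefix in $\maxQuotient(\tgtStr)$. Suppose instead that this set $S$ is infinite. $S$ is prefix-closed, because a prefix of a live string is live, and it is finitely branching. By K\H{o}nig's lemma it therefore contains an infinite chain $\srcStr_1\prec\srcStr_2\prec\cdots$. Each $\srcStr_i$ has an extension in $\preCover(\tgtStr)$. That extension cannot lie in $\maxQuotient(\tgtStr)$: it would then have a prefix in $\Quotient(\tgtStr)$, and this prefix would either be comparable with $\srcStr_i$ in a way forbidden by membership in $S$, or branch off the chain, and finiteness of $\Quotient(\tgtStr)$ limits such branchings. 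So the extension lies in the finite set $\Remainder(\tgtStr)$. But $\srcStr_i$ eventually exceeds every remainder string in length, a contradiction. Making the branching-off step precise is the delicate part. We would try replacing the chain witnesses by a careful choice: for large $i$, no element of the finite set $\Quotient(\tgtStr)\cup\Remainder(\tgtStr)$ extends $\srcStr_i$ strictly beyond its length. This contradicts liveness of $\srcStr_i$, since every element of $\preCover(\tgtStr)$ has a prefix in that finite set, and, $\srcStr_i$ having no prefix in $\Quotient(\tgtStr)$, that prefix would have to extend $\srcStr_i$.
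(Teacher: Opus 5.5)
Your core argument is correct, but it is organized differently from the paper's. The paper checks the three defining conditions one by one: prefix-freeness from the \texttt{continue}, validity by following the prefixes of a fixed $\srcStr\in\preCover(\tgtStr)$ through the BFS, and maximality from validity plus $\algRemainder\cap\maxQuotient(\tgtStr)=\emptyset$. It settles termination in one sentence. You instead describe every BFS layer exactly (the live strings of length $n$ with no proper prefix in $\maxQuotient(\tgtStr)$), and you read off $\algQuotient=\pr(\maxQuotient(\tgtStr))$ and $\algRemainder=\preCover(\tgtStr)\setminus\maxQuotient(\tgtStr)$ by two-sided membership arguments. Your invariant also reduces termination to a length bound on live strings, which the paper only asserts. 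Like the paper's proof (whose base case is ``$\varepsilon$ is enqueued''), your layered invariant really concerns the BFS started from $\{\varepsilon\}$ for an arbitrary $\tgtStr$. Taken literally with $N=0$ it covers only the trivial target $\varepsilon$. For general $\tgtStr$, round $0$ needs the caveat that $\varepsilon$ is live only when $\preCover(\tgtStr)\neq\emptyset$; otherwise the output $(\emptyset,\emptyset)$ is trivially right.

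Three things need repair.

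(1) Discard your first termination attempt. A live $\srcStr_i$ with no prefix in $\maxQuotient(\tgtStr)$ may well have its witness extension in $\basisSet{\Quotient(\tgtStr)}$, through a quotient element strictly extending $\srcStr_i$, so the extension need not lie in $\Remainder(\tgtStr)$. Nothing ``branches off'' either, since any prefix of $\srcStr_i\srcStrp$ is comparable with $\srcStr_i$. Your fallback is the right argument, and it needs no K\H{o}nig. Suppose $\srcStr$ is live with no proper prefix in $\maxQuotient(\tgtStr)$, and $\srcStr\srcStrp\in\preCover(\tgtStr)$. Then some element of $\Quotient(\tgtStr)\cup\Remainder(\tgtStr)$ extends $\srcStr$: either $\srcStr\srcStrp$ itself, if it lies in $\Remainder(\tgtStr)$, or else the element of $\Quotient(\tgtStr)$ below $\srcStr\srcStrp$. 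That element cannot be a proper prefix of $\srcStr$, hence extends it. An arbitrary prefix of $\srcStr\srcStrp$ in $\Quotient(\tgtStr)\cup\Remainder(\tgtStr)$ will not do, since it may be a short remainder string below $\srcStr$. Hence $|\srcStr|\le L$, where $L$ is the maximal length in $\Quotient(\tgtStr)\cup\Remainder(\tgtStr)$, and all rounds after $L$ are empty.

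(2) In the seeded case, ``each string is enqueued at most once'' is false. Remainders need not be prefix-free (e.g.\ $\Remainder(\charfont{a})=\{\tfont{a},\tfont{ab}\}$ in \cref{example:transducer-finite-example}), so a remainder seed and a seed extending it can coexist, and the BFS regenerates the latter. That is harmless. What you must check is that no dequeued string has a proper prefix in $\maxQuotient(\tgtStr)$. This follows from four facts: $\maxQuotient(\tgtStr)\subseteq\maxQuotient(\tgtStr_{<N})=\basisSet{\Quotient(\tgtStr_{<N})}$; $\Quotient(\tgtStr_{<N})$ is prefix-free; $\basisSet{\Quotient(\tgtStr_{<N})}\cap\Remainder(\tgtStr_{<N})=\emptyset$; and only strings failing $\isCylinder$ are extended. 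Seeds that are not live for $\tgtStr$ are inert, and the bound in (1) again gives termination.

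(3) Your induction on $N$ needs the finite-decomposition hypothesis for every prefix of $\tgtStr$, and this is not inherited from $\tgtStr$. Let $\transFn$ map strings beginning with $\tfont{b}$ to $\charfont{de}$, each $\tfont{a}^{2n}$ to $\charfont{d}$, and everything else to $\charfont{g}$. Then $\Quotient(\charfont{de})=\{\tfont{b}\}$ and $\Remainder(\charfont{de})=\emptyset$, but $\Remainder(\charfont{d})=\{\tfont{a}^{2n}\colon n\ge 0\}$ is infinite, so the recursive call never returns. Either assume finiteness for all prefixes or, as the paper's proof tacitly does, analyze the BFS from $\{\varepsilon\}$.
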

\begin{proof}
We verify the three conditions of the optimal prefix decomposition (\cref{def:qr}).
\begin{enumerate}[leftmargin=*,itemsep=2pt]
\item \emph{Prefix-freeness} ($\algQuotient$ is prefix-free):
When $\isCylinder$ succeeds for $\srcStr$ (\cref{line:cylindercheck}), the \texttt{continue} (\cref{line:continueabs}) skips the extension loop, so no $\srcStr\srcSymp$ is enqueued.
Since strings enter the queue only as single-symbol extensions of dequeued strings, no extension of $\srcStr$ is ever enqueued or dequeued, and thus no two elements of $\algQuotient$ share a prefix.
\item \emph{Validity} ($\preCover(\tgtStr) = \basisSet{\algQuotient} \sqcup \algRemainder$):
Let $\srcStr \in \preCover(\tgtStr)$; we first show that $\srcStr\in \basisSet{\algQuotient}\sqcup\algRemainder$.
We show by induction on prefix length that either a prefix of $\srcStr$ enters $\algQuotient$ or $\srcStr$ itself is dequeued and added to $\algQuotient$ or $\algRemainder$.
The base case holds: $\varepsilon$ is enqueued.
If $\srcStr_{<k}$ is dequeued and not placed in $\algQuotient$, then $\srcStr_{<k+1}$ is enqueued by the extension loop: since $\srcStr_{<k+1} \preceq \srcStr$ and $\srcStr \in \preCover(\tgtStr)$, we have $\isLive(\srcStr_{<k+1}, \tgtStr)$ (\cref{line:livecheck}).
By induction, either some prefix $\srcStr_{<j}$ enters $\algQuotient$---giving $\srcStr \in \basisSet{\srcStr_{<j}} \subseteq \basisSet{\algQuotient}$---or $\srcStr$ itself is dequeued, the cylinder check fails, and $\srcStr \in \preCover(\tgtStr)$ gives $\srcStr \in \algRemainder$ via $\isMember$ (\cref{line:membercheck}).
The reverse direction is clear since elements of $\algQuotient$ pass the exact $\isCylinder$ check, so $\basisSet{\algQuotient} \subseteq \preCover(\tgtStr)$.
Similarly, elements of $\algRemainder$ pass $\isMember$, so  $\algRemainder \subseteq \preCover(\tgtStr)$.
What remains is to show that $\basisSet{\algQuotient}$ and $\algRemainder$ are disjoint.
If $\srcStr \in \algRemainder$, then $\srcStr$ was dequeued and $\isCylinder$ failed, so $\srcStr \notin \algQuotient$.
No proper prefix of $\srcStr$ is in $\algQuotient$ either---otherwise $\srcStr$ would never have been enqueued (prefix-freeness).
Hence $\srcStr \notin \basisSet{\algQuotient}$.\looseness=-1
\item \emph{Maximality} ($\basisSet{\algQuotient} = \maxQuotient(\tgtStr)$):
shortest-first processing ensures no proper prefix of a quotient element satisfies $\isCylinder$ (\cref{line:cylindercheck}), so $\algQuotient$ is the \emph{minimal} prefix-free set of cylinders.
Together with the exact cylinder check, the earliest qualifying prefix is always found, as required by \cref{def:qr}.
It remains to show $\maxQuotient(\tgtStr) \subseteq \basisSet{\algQuotient}$. By validity, $\preCover(\tgtStr) = \basisSet{\algQuotient} \sqcup \algRemainder$. Every element of $\algRemainder$ fails $\isCylinder$, so $\algRemainder \cap \maxQuotient(\tgtStr) = \emptyset$. Hence $\maxQuotient(\tgtStr) \subseteq \basisSet{\algQuotient}$.
\end{enumerate}
\emph{Termination}: $\isLive$ (\cref{line:livecheck}) ensures only viable extensions are enqueued; finiteness of the decomposition guarantees the queue empties.
\end{proof}

\paragraph{Conservative checks and suboptimality.}
If the checks used by $\decompose$ are conservative approximations, the algorithm may produce \emph{suboptimal} decompositions---valid but with a smaller quotient than the optimal one.
A conservative liveness check (false positives) enqueues unnecessary extensions but does not affect classification: every dequeued string is still correctly classified by $\isCylinder$ and $\isMember$, so the result remains optimal.
A conservative cylinder check (false negatives) may fail to recognize some quotient elements, placing them in $\algRemainder$ instead.
The decomposition remains valid, but because a successful cylinder check terminates exploration of that subtree (\cref{line:continueabs}), missing a cylinder means the BFS continues exploring extensions that would otherwise have been cut off, potentially inflating the remainder and increasing the computation.
By contrast, the membership check must be exact to preserve validity.

\paragraph{Approximation via pruning.}
When the prefix decomposition becomes large, exhaustively enumerating and scoring can become infeasible.
In these cases, we use a pruning strategy ($\prune$) that sorts candidates by prefix probability and removes those whose cumulative probability mass falls below a specified threshold $\threshold$.
This discards low-probability candidates to keep decomposition tractable.
Since pruning only removes candidates from the queue, every element found is correct---$\basisSet{\algQuotient} \sqcup \algRemainder \subseteq \preCover(\tgtStr)$---but the decomposition is no longer valid in general (coverage may be incomplete), so the computed prefix probability is a lower bound on the true value.
Our strategy is detailed in \cref{sec:pruning}.\looseness=-1

\subsection{Getting Started: Instantiating the Checks with the Precover Machine}
\label{sec:dfa-alg}
\begin{figure}[t]
\begin{center}
\begin{minipage}[t]{.45\linewidth}
\begin{minted}{python}
def @$\step_{\tgtStr}(\state, \srcSym)$@:
  return @$\Transitions_\tgtStr(\state, \srcSym)$@ # deterministic
\end{minted}
\begin{minted}{python}
def @$\run_{\tgtStr}(\srcSym_1 \cdots \srcSym_N)$@:
  if @$N = 0$@: return @$\InitialStates_{\tgtStr}$@ # unique
  return @$\step_{\tgtStr}(\run_\tgtStr(\srcStr_{<N}), \srcSym_N)$@
\end{minted}
\vspace{0.\baselineskip}
\begin{minted}{python}
def @$\isMember(\srcStr, \tgtStr$)@:
  return @$\run_{\tgtStr}(\srcStr) \in \FinalStates_{\tgtStr}$@
\end{minted}
\begin{minted}{python}
def @$\isLive(\srcStr, \tgtStr)$@:
  # not in failure state
  return @$\run_{\tgtStr}(\srcStr) \ne \emptyset$@
\end{minted}
\end{minipage}
\begin{minipage}[t]{.47\linewidth}
\begin{minted}{python}
def @$\isCylinder(\srcStr, \tgtStr)$@:
  # Run BFS to find a counterexample: 
  # a nonaccepting or incomplete state
  @$\powerstate \gets \run_{\tgtStr}(\srcStr)$@; @$V \gets \{\powerstate \}$@; @$\queue \gets \textsc{Queue}(\{ \powerstate \})$@
  while @$\queue$@:
    @$\powerstate \gets \queue$@.pop()
    if @$\powerstate \notin \FinalStates_\tgtStr$@: return False
    for @$\srcSymp \in \srcAlphabet$@:
      @$\powerstatep \gets \step_{\tgtStr}(\powerstate, \srcSymp)$@
      if @$\powerstatep = \emptyset$@: return False
      if @$\powerstatep \notin V$@: @$\queue$@.add@$(\powerstatep)$@; @$V$@.add@$(\powerstatep)$@
  return True   # No counterexamples
\end{minted}
\end{minipage}
\vspace{-.75\baselineskip}
\end{center}
\caption{State-based instantiation of the checks from \cref{alg:decompose} on the precover DFA $\preCoverMachine_\tgtStr$. Helper functions $\step$ and $\run$ advance through the DFA; $\isMember$ and $\isLive$ are single-state lookups; $\isCylinder$ performs a BFS to verify universality.
}
\vspace{-\baselineskip}
\label{fig:check-instantiation}
\end{figure}

We now show how to instantiate the three checks (\cref{alg:decompose}) using a finite-state transducer. This serves as a pedagogical introduction; \cref{sec:efficient-algorithms} describes a more detailed, but faster algorithm.

We represent the transformation $\transFn$ with a transducer $\transST$ (see \cref{sec:transducers,appendix:transducer-bg}).
Given a target prefix $\tgtStr$, $\ProjA(\transST \circ \tgtStr \tgtStrings)$ is an NFA that accepts exactly $\preCover(\tgtStr)$.
To enable the efficient state-based checks below, we determinize and trim this NFA to obtain a DFA:
$
\preCoverMachine_{\tgtStr} \defeq \trim(\inputDeterminize(\ProjA(\transST \circ \tgtStr \tgtStrings)))
$
Let $\States_\tgtStr$, $\InitialStates_\tgtStr$, $\FinalStates_\tgtStr$, and $\Transitions_\tgtStr$ denote the components of $\preCoverMachine_\tgtStr$.\label{line:build-precover-machine}
Since $\preCoverMachine_\tgtStr$ is deterministic, scanning a source string $\srcStr = \srcSym_1 \cdots \srcSym_N$ yields a unique state, which we denote by $\run_\tgtStr(\srcStr)$.
We now describe how to implement the three checks using $\preCoverMachine_\tgtStr$ (pseudocode in \cref{fig:check-instantiation}).\looseness=-1

\begin{itemize}
\item \emph{Cylinder}:
We need to check whether $\basisSet{\srcStr}\subseteq \preCover(\tgtStr)$ to decide if $\srcStr\in\Quotient(\tgtStr)$. Let $\powerstate = \run_\tgtStr(\srcStr)$ be the unique state reached after scanning $\srcStr$. Since $\preCoverMachine_\tgtStr$ is deterministic, $\basisSet{\srcStr}\subseteq\preCover(\tgtStr)$ if and only if $\powerstate$ is \defn{universal}: $\sem{\forceStart{\preCoverMachine_\tgtStr}{\powerstate}} = \srcStrings$, meaning every continuation of $\srcStr$ is accepted.
The $\isCylinder$ check (\cref{fig:check-instantiation}) tests universality via breadth-first search (BFS) from $\powerstate$.\looseness=-1

\item \emph{Member}:
When the $\isCylinder$ check fails for $\srcStr$ with $\powerstate=\run_\tgtStr(\srcStr)$, we need to determine whether the scanned string $\srcStr$ is in $\Remainder(\tgtStr)$. Here it suffices to check if $\powerstate\in\FinalStates_\tgtStr$, i.e., if $\preCoverMachine_\tgtStr$ accepts $\srcStr$.\looseness=-1

\item \emph{Live}:
We construct $\preCoverMachine_\tgtStr$ as a \emph{trimmed} automaton accepting $\srcStr$ where $\tgtStr\preceq\transFn(\srcStr)$.\footnote{This can also be expressed as $\preCover(\tgtStr)=\transFnInv(\tgtStr\tgtStrings)=\sem{\ProjA(\transST\circ \tgtStr\tgtStrings)}$.}
Trimming ensures that every reachable state lies on some accepting path, so liveness reduces to $\run_\tgtStr(\srcStr) \ne \emptyset$.\looseness=-1
\end{itemize}

Although $\decompose$ is written in terms of source strings, each check internally computes the state $\powerstate = \run_\tgtStr(\srcStr)$ reached by scanning $\srcStr$ in the deterministic $\preCoverMachine_\tgtStr$, and reduces to a state property.

\subsection{Optimizations}
\label{sec:optimizations}

The algorithm above is correct but impractical for large transducers. In \cref{sec:efficient-algorithms}, we describe several optimizations; we summarize the key ideas here.

\paragraph{Lazy determinization.}
Eagerly determinizing $\preCoverMachine_\tgtStr$ is often computationally expensive. Instead, we track \emph{frontiers}: sets of transducer states reachable after scanning a source prefix, paired with the output emitted so far. Frontiers lazily perform the subset construction \citep[the standard NFA-to-DFA conversion;][]{rabin1959finite} and the composition with $\tgtStr\tgtStrings$ simultaneously, avoiding both eager determinization and eager composition.
The three checks---cylinder, member, and liveness---are now defined in terms of the frontier rather than a single DFA state (\cref{sec:checks}). 

\paragraph{Incremental next-symbol decomposition.}
To efficiently compute $\prefixTarget(\tgtSymp \mid \tgtStr)$ for all $\tgtSymp \in \tgtAlphabet$, we introduce $\decomposeNext$ (\cref{sec:decompose-next}), which derives the decomposition of each extension $\tgtStr\tgtSymp$ from the decomposition of $\tgtStr$.
It operates in a single BFS pass that handles each $\tgtSymp \in \tgtAlphabet$ simultaneously.

\paragraph{Decomposition shortcuts.}
Many structural properties of the decomposition allow the BFS to skip work: non-cylinder monotonicity (\cref{prop:non-cylinder-monotonicity}), cylinder uniqueness (\cref{prop:cylinder-uniqueness}), input-projection universality (\cref{sec:ip-shortcut}), combined universality (\cref{prop:combined-universality}), and an all-universal fast path (\cref{fig:all-universal-fast-path}).\looseness=-1

\paragraph{Lazier enumeration.}
Rather than explicitly composing the transducer with the copy transducer $\tgtStr\tgtStrings$, the frontier-based algorithms track output buffers directly, performing the composition lazily and avoiding the materialization cost.
Additionally, we precompute states whose input projection accepts all of $\srcStrings$ (\cref{sec:ip-shortcut}).
When a frontier reaches such a state with a buffer that already covers $\tgtStr$, the cylinder check succeeds immediately, bypassing the expensive check.

\paragraph{Label pushing.}
As a standard preprocessing step \citep[see, e.g.,][]{oncina93,mohri-2023-edit, Mohri2009}, we push output labels toward the initial state of $\transST$: when every path through a state $\state$ emits output beginning with $\tgtSym$, that $\tgtSym$ is shifted to the incoming arc so it is produced one step earlier.
This ensures that output is committed as early as possible, speeding up the frontier-based cylinder check.\looseness=-1

\section{Sufficient Conditions for Finite Decompositions}
\label{sec:finite-decomp}
The decomposition algorithms in \cref{sec:algorithms} enumerate strings by breadth-first search.
For the enumeration to terminate, the quotient $\Quotient(\tgtStr)$ and remainder $\Remainder(\tgtStr)$ need to be finite for a given target string $\tgtStr\in\tgtStrings$.
Previous work (\citealp{vieira2024languagemodelstokenslanguage}, Props 1 and 3; also in \cref{appendix:quotient-bound}) showed that strict-prefix monotonicity guarantees an empty remainder and a bounded quotient.
This ensures that $\prefixTarget(\tgtStr)$ is computable.\looseness=-1

For functions that are not strict-prefix monotone, however, the decomposition may be infinite.
In this section, we give additional conditions for finite decomposition.
We first give function-level properties that guarantee an empty remainder and bounded quotient, and then transducer-level conditions for when the quotient and remainder are finite.\looseness=-1

We say that a map $\transFn\colon \srcStrings \to \tgtStrings$ is \defn{strict-prefix monotone} if and only if $\srcStr \prec \srcStr' \Longrightarrow \transFn(\srcStr) \prec \transFn(\srcStr')$.
Similarly $\transFn$ is \defn{prefix monotone} if and only if $\srcStr \preceq \srcStr' \Longrightarrow \transFn(\srcStr) \preceq \transFn(\srcStr')$. 
We say that a map $\transFn$ is \defn{prefix-continuous} if, for every $\tgtStr \in \tgtStrings$, the set $\preCover(\tgtStr)$ is cylindrical, i.e., $\Remainder(\tgtStr)=\emptyset$.
The following proposition shows that prefix monotonicity implies prefix-continuity:\looseness=-1
\begin{restatable}{reProposition}{emptyRemainder}
\label{prop:prefmonotone}
The following are equivalent:
\begin{enumerate*}
\item [(i)] $\transFn$ is prefix monotone
\item [(ii)]
$\transFn(\basisSet{\srcStr}) \subseteq  \basisSet{\transFn(\srcStr)}$ for all $\srcStr \in \srcStrings$
\item [(iii)] $ \preCover(\transFn(\srcStr)) = \maxQuotient(\transFn(\srcStr))$ for all $\srcStr \in \srcStrings$
\item [(iv)] $\transFn$ is prefix-continuous.
\end{enumerate*}\looseness=-1
\end{restatable} 
\vspace{-2\baselineskip}
\hfill {\color{black!50}[Proof: \cref{proof:prefix-monotone}]} 

\cref{prop:prefmonotone} shows how we generalize \citet{vieira2024languagemodelstokenslanguage}.
First, by relaxing strict-prefix monotonicity to prefix monotonicity, we support multi-symbol lookahead in the quotient. \Cref{ex:amino-acid-coded} is an example of this: it is prefix monotone and requires two-symbol lookahead before committing to an output.  Second, by introducing the remainder, we can handle functions that are not prefix monotone, meaning that there can be source strings that cover the target, but not all of their extensions do (e.g., \cref{ex:newspeak}).\looseness=-1

\Cref{lemma:finite-decomposition} gives sufficient conditions on a transducer that guarantee a finite decomposition for every target string, even when the underlying function is not prefix monotone.
The key notion is \emph{safety}: a state is safe if it is IP-universal, has \defn{finite closure} (i.e., $|\sem{\forceStart{\transST}{\state}}| < \infty$), or all its successors are safe.\looseness=-1
\begin{restatable}{reLemma}{finiteDecomposition}
\label{lemma:finite-decomposition}
Let $\transFn\colon \srcStrings\to \tgtStrings$ be a function realized by a transducer $\transST$.
The decomposition $(\Quotient(\tgtStr), \Remainder(\tgtStr))$ is finite for every $\tgtStr \in \tgtStrings$ if:
\begin{enumerate}[leftmargin=2em]
\item [(i)] \emph{No $\varepsilon$-output cycles:} $\transST$ contains no cycle in which every arc outputs $\varepsilon$.
\item [(ii)] \emph{Safety:} Every state of $\transST$ is \defn{safe}, defined inductively as the smallest set such that $\state$ is safe if:
(a)~$\state$ is IP-universal;
(b)~$|\sem{\forceStart{\transST}{\state}}| < \infty$ (finite closure); or
(c)~for all transitions $\state\xrightarrow{\srcSym:\tgtSym}\statep$, $\statep$ is safe.
\end{enumerate}
\end{restatable}
\vspace{-.75\baselineskip}
Proof: See \cref{appendix:finite-decomposition}.

The conditions in \cref{lemma:finite-decomposition} guarantee exact computation.
In particular, these are satisfied by the transducers introduced in the experiments section (\cref{sec:applications}): the token-to-byte transducer $\tokensToBytesFST$ and the DNA-to-amino-acid transducer $\dnaToAaFST$, whose quotients are finite and remainders empty, but not by the PTB transducer $\ptbFST$, whose quotients are infinite.

The finiteness of decomposition is a property of the function $\transFn$, not of any particular transducer encoding it. The lemma's conditions are sufficient but not necessary: safety tests each state individually, whereas the decomposition algorithm tests universality of frontiers (sets of state/string pairs), which can succeed even when individual states are not safe.
There are many cases where the prefix decomposition is infinite, or otherwise prohibitively large; in these cases, we let the source language model determine which prefix decomposition members carry the most significant mass and prune others (\cref{sec:algorithms}). This lets us greedily approximate the probabilities of the transduced language model by enumerating a high-mass subset of the decomposition.\looseness=-1

\section{Experiments}
\label{sec:applications}
We now consider three examples of transduced language models. For each use case, we follow the approach in \citet{vieira2024languagemodelstokenslanguage} and measure the Jensen--Shannon divergence (JSD) between the distributions obtained using the approximation via probability mass pruning mentioned in \cref{sec:pruning} and a reference distribution we get by choosing a pruning threshold $\threshold$. We also report cross-entropy loss (\cref{appendix:cross-entropy}), reflecting the cost of scoring specific sequences rather than full distributions. Experiments use GPT-2 Large ($\gpt$) \citep{radford2019language}, LLaMA 3.2-1B ($\llama$) and LLaMA 3.1-8B ($\llamalarge$) \citep{grattafiori2024llama3herdmodels}, Phi-4 ($\phifour$) \citep[14B;][]{abdin2024phi4technicalreport}, and a DNA model trained on the human genome ($\dna$).\footnote{Links to models:
\url{https://huggingface.co/openai-community/gpt2-large}, 
\url{https://huggingface.co/meta-llama/Llama-3.2-1B}, 
\url{https://huggingface.co/meta-llama/Llama-3.1-8B},
\url{https://huggingface.co/microsoft/phi-4}
and \url{https://huggingface.co/vesteinn/gpt2-dna}.
} For the experiments in \cref{sec:wordprob}, we use \GenLMBytes%
\footnote{\GenLMBytes is an implementation of \citet{vieira2024languagemodelstokenslanguage}; see \url{https://github.com/genlm/genlm-bytes}.} to convert token-level models to byte-level models. See \Cref{sec:experimental-setup} for details on the training and evaluation setup, and \cref{appendix:transducer-stats} for transducer details.\looseness=-1

Recall that a transduced language model $\pTarget$ is characterized by a source model $\pSource$ and a transducer $\transST$ encoding some function $\transFn$ (\cref{sec:pushforward}). To make this dependency explicit, we write $\pSource \circ \transST \defeq \pTarget$ and refer to the operation as composing. All experiments compute next-symbol distributions using the more efficient algorithm (\cref{app:efficient-realization}), which reuses cached decompositions across target positions and exploits structural shortcuts (\cref{appendix:shortcuts}).
Probability mass pruning (\cref{sec:pruning}) retains the most likely decomposition elements such that the discarded probability mass does not exceed the pruning threshold.\looseness=-1

Our experiments span the prefix-monotonicity spectrum (\cref{prop:prefmonotone}): token-to-byte (T2B) is strict-prefix monotone, DNA-to-amino-acid is prefix monotone with multi-symbol lookahead, and PTB tokenization is not prefix monotone.

\paragraph{From tokens to bytes.}
\label{ex:token-to-byte}
We revisit the algorithm for converting models from tokens to bytes, as in \citet{vieira2024languagemodelstokenslanguage}. 
This transformation can be realized by a simple transducer $\tokensToBytesFST$, with $\bigo(|\srcAlphabet|)$ states. 
Specifically, the transducer contains a chain for each token present in the input vocabulary $\srcAlphabet$. This structure allows us to use a shortcut (see \cref{fig:all-universal-fast-path}) that resolves each quotient element with a single LM call. An example of such a machine is given in \cref{fig:subwordST}.
We benchmark the algorithms in \cref{sec:algorithms} using the transducers $\gpt \circ \tokensToBytesFST$, $\llama \circ \tokensToBytesFST$, $\llamalarge \circ \tokensToBytesFST$, and $\phifour \circ \tokensToBytesFST$ on the first ten paragraphs
of the \href{https://huggingface.co/datasets/Salesforce/wikitext}{\texttt{wikitext-2-raw-v1}} dataset \citep{merity2017pointer} (corresponding to the first 7684 bytes). As shown in \cref{fig:jsd_t2b_ptb} (left) and \cref{tab:jsd_t2b} in \cref{sec:detailed_results}, lower pruning thresholds ($\threshold$) give lower JSD values against a reference distribution ($\threshold=$ 1e-5) at the cost of throughput, measured in bytes per second.
We confirm that the JSD values are similar to those achieved by \citet{vieira2024languagemodelstokenslanguage} in \cref{appendix:t2b-baseline-comparison}, \cref{tab:jsd_t2b_vs_baseline}. Their method is limited to strict-prefix monotone transformations, which enables a specialized trie-based algorithm that achieves higher throughput while maintaining comparable accuracy.\looseness=-1

\begin{figure}[ht]
\centering
  \begin{minipage}[b]{0.48\textwidth}
      \centering
    \begin{tikzpicture}[shorten >=1pt, on grid, auto,
  node distance=18mm] 
\footnotesize
\node[state, initial, initial where=above, accepting] (q0) {$q_0$};

\node[state] (q1) [above right=8mm and 18mm of q0] {$q_1$};
\node[state] (q2) [right=26mm of q0] {$q_2$};
\node[state] (q3) [below right=8mm and 18mm of q0] {$q_3$};

\node[state] (q4) [above left=8mm and 18mm of q0] {$q_4$};
\node[state] (q5) [below left=8mm and 18mm of q0] {$q_5$};

\draw[->] (q0) to node[sloped, below] {\tfont{\wsp{}cat}:\charfont{\wsp{}}} (q1);
\draw[->] (q1) to node {$\varepsilon$:\charfont{c}} (q2);
\draw[->] (q2) to node {$\varepsilon$:\charfont{a}} (q3);
\draw[->] (q3) to node[sloped, below]
  {$\varepsilon$:\charfont{t}} (q0);

\draw[->] (q0) to node[sloped, below] {\tfont{Dog}:\charfont{D}} (q4);
\draw[->] (q4) to[bend right=20] node {$\varepsilon$:\charfont{o}} (q5);
\draw[->] (q5) to node[sloped, below]
  {$\varepsilon$:\charfont{g}} (q0);

\node (cdots) [below=12mm of q0] {$\cdots$};
\draw[->] (q0) to (cdots);
\end{tikzpicture}
\vspace{-1\baselineskip}
\caption{An FST for converting a token model into a character model. Paths for \tfont{\wsp{}cat} and \tfont{Dog}.}
\label{fig:subwordST}
\end{minipage}\hfill
\begin{minipage}[b]{0.48\textwidth}
\centering
\begin{tikzpicture}[shorten >=1pt, node distance=2.4cm, on grid, auto]
\footnotesize
\node[state, initial, accepting] (q0) {$q_0$};
\node[state, accepting] (q1) [right=35mm of q0] {$q_1$};
\node[state] (q2) [above right=12mm and 45mm of q0] {$q_2$};
\node[state] (q3) [below right=12mm and 45mm of q0] {$q_3$};
\draw[->] (q0) edge[bend right=25] node[above, xshift=0.5cm] {\tfont{\texttt{,}}:\SEP} (q3);

\draw[->] (q0) edge[bend left=15] node[above] {\tfont{\texttt{,}}:\charfont{\texttt{,}}} (q1);
\draw[->] (q3)  edge[bend right=40] node[right] {$\varepsilon$:\charfont{\texttt{,}}} (q2);

\draw[->] (q1) edge[bend left=15] node[below, xshift=0.3cm] {\tfont{d}:\charfont{d}, $\forall \tfont{d} \in \{\tfont{\texttt{0--9}}\}$} (q0);
\draw[->] (q1)  edge[bend right=25] node[above, xshift=0.3cm] {\tfont{\texttt{,}}:\SEP} (q3);

\draw[->] (q1) edge[loop right] node {\tfont{\texttt{,}}:\charfont{\texttt{,}}} ();

\draw[->] (q0) edge[loop above] node {\tfont{\texttt{x}}:\charfont{\texttt{x}}, $\forall \tfont{\texttt{x}} \in \srcAlphabet \setminus \{\tfont{\texttt{,}}\}$} ();
\draw[->] (q2) edge[bend right=25] node[above, xshift=0.6cm] {\tfont{\texttt{y}}:\charfont{\texttt{y}}, $\forall \tfont{\texttt{y}} \in \srcAlphabet \setminus \{\tfont{\texttt{0--9,}}\}$} (q0);
\end{tikzpicture}
\vspace{-1\baselineskip}
\caption{An FST that inserts a separator (\SEP) before commas followed by non-digit characters.}
\label{fig:commaRewrite}
\end{minipage}
\vspace{-1\baselineskip}
\end{figure}

\begin{figure}[ht!]
\centering
\includegraphics[width=\linewidth]{figures/main_figure/jsd_vs_speed_realpha_ptb_1row_phi4_3e4.pdf}
\caption{Average Jensen--Shannon divergence (JSD) and throughput (bytes/sec) across thresholds $\threshold$. Left: $\pSource\circ\tokensToBytesFST$ (reference: $\threshold=$ 1e-5). Right: $\pSource \circ \tokensToBytesFST \circ \ptbFST$ (reference: $\threshold=$ 1e-4; 3e-4 for $\phifour$). Bars show 95\% bootstrapped CIs. At the tightest thresholds, large decomposition sizes prevent some models from completing all paragraphs; see \cref{appendix:incomplete-runs}.}
\label{fig:jsd_t2b_ptb}
\vspace{-1.5\baselineskip}
\end{figure}

\paragraph{From tokens to orthographic word boundaries.}
\label{sec:wordprob}
The next transformation we consider is converting language models over tokens to language models over \emph{orthographic} words.
The precise definition of what constitutes such a word varies depending on the application.
In some settings, contractions such as ``\charfont{wouldn't}'' should be treated as a single unit. In other settings, however, a more natural segmentation would be ``\charfont{would}'' and ``\charfont{n't}''.
We can accommodate any FST-based definition.
Linguistic tokenizers, such as the PTB tokenizer \citep{marcus-etal-1993-building}, use contextual information to segment raw English text into linguistically meaningful units suitable for downstream NLP tasks.
We construct an FST that encodes the PTB tokenizer, $\ptbFST$; details are in \cref{appendix:ptb-tokenizer}. An example of a rule in the transducer is given in \Cref{fig:commaRewrite}, which inserts \SEP{} before a comma if a digit does not follow.
Composing $\pSource \circ \tokensToBytesFST\circ \ptbFST$ yields a transduced language model, mapping subword tokens to PTB tokens. To reduce the state count of the composed FST and improve efficiency, we represent $\pSource \circ \tokensToBytesFST$ using the byte-transformed models from
\GenLMBytes.\footnote{\url{https://github.com/genlm/genlm-bytes} that implements \citeposs{vieira2024languagemodelstokenslanguage} method.}
In \Cref{fig:jsd_t2b_ptb} (right), we plot the average JSD against the throughput for different thresholds $\threshold$, using the same dataset as in \cref{ex:token-to-byte}. The reference distribution uses $\threshold=$ 1e-4 (3e-4 for $\phifour$).
As with the experiments in \cref{ex:token-to-byte}, we observe lower JSD at lower thresholds, albeit at the cost of throughput. For experiments using higher thresholds, see \cref{sec:detailed_results}, \cref{tab:jsd_ptb}.\looseness=-1

\paragraph{Converting DNA models to models over amino acids.}
\label{sec:dna}
We use a transducer that converts sequences of the four DNA nucleotides to sequences over the twenty-two amino acids.\footnote{I.e., $\srcAlphabet = \set{\tfont{A}, \tfont{C}, \tfont{G}, \tfont{T}}$ and $\tgtAlphabet = \set{
\charfont{A}, \charfont{R}, \charfont{N}, \charfont{D}, \charfont{C},
\charfont{Q}, \charfont{E}, \charfont{G}, \charfont{H}, \charfont{I},
\charfont{L}, \charfont{K}, \charfont{M}, \charfont{F}, \charfont{P},
\charfont{S}, \charfont{T}, \charfont{V}, \charfont{W}, \charfont{Y}, \charfont{B},
\charfont{Z}, \charfont{*}
}$.}
Let $\dna\circ \dnaToAaFST$ denote the transduced model that converts DNA nucleotides into amino acids.
To evaluate our approach, we sample 65 human proteins.\footnote{Sampled from \url{https://www.uniprot.org/uniprotkb?query=Human}, see \Cref{tab:proteins} for the accession numbers.} 
In \Cref{tab:jsd_dna}, we show the average JSD and throughput for different thresholds $\threshold$. Note that this transducer is particularly challenging since the set of candidates in the decomposed precover grows exponentially with the sequence length. To mitigate the combinatorial blow-up, we cap the candidate-set and report throughput and JSD while varying the cap.\looseness=-1

\section{Conclusion}
\label{sec:conclusion}
We have introduced a general framework for transforming language models using transducers. Empirically, we have shown that our beam-summing approximation efficiently transduces token-based LLMs into models over bytes, words, and even amino acids, without requiring retraining. Our theoretical analysis characterizes the conditions under which such mappings can be performed exactly.  The proposed approach is an effective way to repurpose existing language models and to accurately compute probabilities for any unit and transformation defined by a transducer. By reducing the problem to transducer decomposition, the framework opens the door to leveraging future advances in finite-state methods for language model adaptation. (Future work and limitations are discussed in \cref{sec:limitations}.)\looseness=-1

\section*{Acknowledgments}
The authors thank Ben LeBrun, Brian DuSell, Clemente Pasti, Juan Luis Gastaldi, Mario Giulianelli, and Yahya Emara for useful feedback and discussions that greatly improved this work.
Vésteinn Snæbjarnarson is supported by the Pioneer Centre for AI, DNRF grant number P1.

\bibliography{main}
\bibliographystyle{iclr2026_conference}

\clearpage
\appendix
\clearpage
\addcontentsline{toc}{section}{Appendix}

\begingroup
\addtocontents{toc}{\protect\setcounter{tocdepth}{2}}
\setcounter{tocdepth}{2}  
\renewcommand{\contentsname}{Appendix Contents}
\tableofcontents
\endgroup

\clearpage
\section{Notation Glossary}
\label{sec:glossary}

\begin{center}
\begin{tabular}{l|l}
\toprule
Notation & Gloss \\
\midrule
$\varepsilon$ & Empty string \\
$\eos$ & End-of-string symbol \\
$\srcSym,\srcSymp \in \srcAlphabet$ & Symbols in the source alphabet $\srcAlphabet$ \\
$\srcStr, \srcStrp \in \srcStrings$   &  Source strings \\
$\srcStrings$ & Set of all source strings \\
$\srcStringsSub, \srcStringsSubp \subseteq \srcStrings$ & Sets of source strings \\
$\srcStr \srcStrp$ & Concatenation (strings) \\
$\srcStringsSub \srcStringsSubp$ & Concatenation (sets of strings) \\
\midrule
$\tgtSym,\tgtSymp \in \tgtAlphabet$  &  Symbols in the target alphabet \\
$\tgtStr,\tgtStrp \in \tgtStrings$  &  Target strings \\
$\tgtStrings$ &  Set of all target strings \\
$\transFn\colon \srcStrings \to \tgtStrings$ & Transformation from source strings $\srcStrings$ to target strings $\tgtStrings$ \\
\midrule
$\srcStr \preceq \srcStrp$ & $\srcStr$ is a prefix of $\srcStrp$  \\
$\srcStr \prec \srcStrp$ & $\srcStr$ is a strict-prefix of $\srcStrp$  \\
$\basisSet{\srcStringsSub}$ & Cylinder set spanned by $\srcStringsSub$  \\
$\pr(\srcStringsSub)$ & Prefix-base of $\srcStringsSub$ \\
\midrule
$\pSource$       & Language model over source strings $\srcStrings$ \\
$\rvX$ & $\srcStrings$-valued random variable $\rvX \sim \pSource$ \\
$\prefixSource$  & Prefix probability of $\pSource$ \\
$\pTarget$       & Language model over target strings $\tgtStrings$ \\
$\transFn(\rvX)$ & $\tgtStrings$-valued random variable $\transFn(\rvX) \sim \pTarget$ \\
$\prefixTarget$  & Prefix probability of $\pTarget$ \\
\midrule
$\transST$ & Transducer implementation of $\transFn$ \\
$\hookrightarrow\srcAlphabet$ &  Input alphabet     \\
$\hookrightarrow\tgtAlphabet$ &  Output alphabet    \\
$\hookrightarrow\States$ &    Set of states \\
$\hookrightarrow\InitialStates \subseteq \States$ & Set of initial states \\
$\hookrightarrow\FinalStates \subseteq \States$ &   Set of accepting states \\
$\hookrightarrow\Transitions$ & Set of transitions \\
$\hookrightarrow\ipUniversalStates \subseteq \States$ & Set of IP-universal states (\cref{sec:ip-shortcut}) \\
$\state,\statep \in \States$ & States \\
$(\state \xrightarrow{\srcSym:\tgtSym} \statep) \in \Transitions$ & Transition from $\state$ to $\statep$ that scans $\srcSym$ and emits $\tgtSym$ \\
$\Transitions(\state) \subseteq \Transitions$ & Outgoing transitions from state $\state$ \\
$\transST \circ \transST'$ & Transducer composition \\
$\transFn \circ \transFn'$ & Relation composition \\
$\ProjA(\transST)$ & Input projection \\
$\forceStart{\transST}{\state}$ & Force-start $\transST$ in state $\state$ \\
$\tgtStr\tgtStrings$ & Either a copy-transducer that accepts $\tgtStr \tgtStrings$ or the corresponding relation.\\
$\transST \circ \tgtStr\tgtStrings$ & A transducer whose paths are restricted to those that accept $\tgtStr \tgtStrings$.\\
\midrule
$\transFnInv(\tgtStr)$ & Preimage of the target string $\tgtStr$ (\cref{sec:pushforward})\\
$\transFnInv(\tgtAlphabet)$ & The preimage of a set $\tgtAlphabet$, $\{\transFnInv(\tgtStr)\mid\tgtStr\in\tgtAlphabet\}$ (\cref{sec:pushforward})\\
$\preCover(\tgtStr)$ & Precover of the target string $\tgtStr$ (\cref{sec:pushforward}) \\
$\maxQuotient(\tgtStr)$ & The largest cylinder set contained in $\preCover(\tgtStr)$ (\cref{sec:decomposingprecover}) \\
$\Quotient(\tgtStr)$ & Quotient (\cref{sec:decomposingprecover})\\
$\Remainder(\tgtStr)$ & Remainder (\cref{sec:decomposingprecover})\\ 
\bottomrule
\end{tabular}
\end{center}

\paragraph{Notation conventions.}
\textbf{Color} encodes domain: {\color{SourceColor}magenta} denotes source-domain objects (e.g., $\srcStr$, $\srcAlphabet$, $\preCover$, $\Quotient$, $\Remainder$) and {\color{TargetColor}olive green} denotes target-domain objects (e.g., $\tgtStr$, $\tgtAlphabet$, $\transFn$).
\textbf{Font} encodes type: lowercase italic for symbols ($\srcSym$, $\tgtSym$), bold italic for strings ($\srcStr$, $\tgtStr$), calligraphic for alphabets and sets ($\srcAlphabet$, $\tgtAlphabet$, $\preCover$, $\Quotient$, $\Remainder$), sans-serif for transducer components ($\States$, $\Transitions$, $\InitialStates$, $\FinalStates$, $\ipUniversalStates$), and typewriter for the transducer itself ($\transST$) and for concrete strings in examples (\tfont{abc}, \charfont{xyz}).
The overrightarrow $\prefixLanguageOp{\cdot}$ marks prefix probabilities: $\prefixSource$ is the prefix probability of $\pSource$.

\clearpage

\section{Background on Transducers}
\label{appendix:transducer-bg}

This section introduces additional information on transducers, complementing \cref{sec:transducers}.

\paragraph{Transducer variants.}
We say a transducer is \defn{functional} if it defines a function, and \defn{partially functional} if it defines a partial function.
A transducer is \defn{input-deterministic}
if for every state $\state \in \States$, $|\Transitions(\state, \varepsilon)| = 0$ and $|\Transitions(\state, \srcSym)| \leq 1$ for all $\srcSym \in \srcAlphabet$\footnote{The more general notion of a \emph{subsequential transducer} additionally includes a final output function; this distinction does not arise for copy-transducers.}.
For input-deterministic transducers, each source string $\srcStr\in\srcStrings$ has at most one accepting path that scans $\srcStr$ and, therefore, can emit at most one target string $\tgtStr \in \tgtStrings$.
Thus, every input-deterministic transducer defines a (partial) function.
A \defn{copy-transducer} is one where every transition emits the same symbol as it scans, i.e., every transition is of the form $\state \xrightarrow{\srcSym:\srcSym} \statep$.
We abbreviate copy transitions as $\state \xrightarrow{\srcSym} \statep$.\footnote{For readers familiar with finite-state automata, every copy-transducer is isomorphic to an \emph{acceptor}.\label{footnote:copy-is-fsa}}
Copy-transducers define partial identity functions: they map each string in a designated subset to itself, and drop all others.
In the same way that we abbreviate copy transitions, we may implicitly map them to a set of strings via $(\srcStr, \srcStr) \mapsto \srcStr$.\looseness=-1

\paragraph{Operations.}
Transducers support \defn{composition}: given transducers $\transST$ and $\transST'$, their composition $\transST \circ \transST'$ is a transducer denoting $\sem{\transST \circ \transST'} \defeq \sem{\transST} \circ \sem{\transST'}$.\footnote{Here, \defn{relation composition} is given by $f \circ g \defeq \{ (x, z) \mid (x, y) \in f, (y, z) \in g\}$ where $f$ and $g$ are relations.}\footnote{\Citet[Ch.\ XIX, sec.\ 2]{pin2010mathematical} gives an efficient method for constructing $\transST \circ \transST'$.}
We denote a transducer that encodes the relationship $\sem{\transST}\circ\{(\tgtStrp,\tgtStrp)\mid \tgtStrp\in\basisSet{\tgtStr}\}$ with $\transST\circ\tgtStr\tgtStrings$. We freely coerce between these representations when the intent is clear from context.
We define the \defn{input projection} operation encoding the relationship $\sem{\ProjA(\transST)} = \{ (\srcStr, \srcStr) \mid \exists\, \tgtStr \colon (\srcStr, \tgtStr) \in \sem{\transST} \}$ as $\ProjA(\transST) \defeq (\States, \srcAlphabet, \srcAlphabet, \InitialStates, \FinalStates, \{ (\state \xrightarrow{\srcSym:\srcSym} \statep) \mid \state \xrightarrow{\srcSym:\tgtSym} \statep \in \Transitions \})$, which is a copy-transducer.
Let $\forceStart{\transST}{\state}$ denote the operation of \defn{force-starting} $\transST$ in state $\state$, $\forceStart{\transST}{\state} \defeq (\States, \srcAlphabet, \tgtAlphabet, \{ \state \}, \FinalStates, \Transitions)$; this operation yields a machine defining the set of source--target suffix pairs that are generated by paths starting at a given $\state$ and ending in an accepting state. 
We say that a state $\state$ is \defn{IP-universal} (\defn{input-projection universal}) if $\sem{\ProjA(\forceStart{\transST}{\state})} = \srcStrings$, i.e., no matter what input follows, the transducer can still produce output.  Let $\ipUniversalStates \subseteq \States$ denote the set of IP-universal states; this set can be precomputed for each transducer (\cref{sec:ip-shortcut}).

Our algorithms use an \defn{input-determinization} transformation $\inputDeterminize(\transST)$ that maps a (partially) functional transducer $\transST$ to an equivalent one that is input-deterministic. In general, such a mapping is not always realizable with a finite number of states.\footnote{\citet{CHOFFRUT1977} gives such an algorithm, by using a power set construction on the input side as in the determinization of nondeterministic finite automata. He shows it yields a finite machine if and only if the automaton has \emph{bounded variation}, the constraint that for any two strings whose prefix distance (the combined length of the strings with the longest shared prefix removed) is bounded, then the output prefix distance is also bounded. He also provides a testable condition for determinization, known as the twinning condition: once two runs have read the same input prefix (i.e., we cut the input at the same point), then for every common continuation, they append the same further output.
}
However, in the special case of copy-transducers, input-determinization is always possible, but may result in exponential blowup in the worst case.\footnote{For readers familiar with finite-state automata, input-determinization of a copy-transducer is isomorphic to the determinization of an equivalent finite-state automaton, see \cref{footnote:copy-is-fsa}.}
We use $\trim(\transST)$ to denote a \defn{trimming} operation that removes all states and edges that do not appear on any accepting path. 
These are standard operations that are implemented in any FST library; more details can be found in \citep{Pin2021Handbook, pin2010mathematical}.

\paragraph{Visual notation.}
We use diagrams like those shown in \cref{example-decompositions} to represent transducers. Transitions without source states denote initial states; double-lined states indicate accepting states. Transitions $\state \xrightarrow{\srcSym:\tgtSym} \statep$ are shown as arrows between states.

\paragraph{Limitations.}
Finite-state transducers define the class of \emph{rational relations} \citep[Ch.\@ III]{Berstel1979}. Because FSTs only have finitely many states, they are inherently limited in the relations they can represent.  For example, FSTs cannot perform transformations that require unbounded matching or counting. In contrast, transducers with unbounded memory extend beyond the rational class, offering greater expressive power, but come with increased complexity and often undecidability of key properties, such as \emph{universality}.

\clearpage
\section{Efficient Algorithms}
\label{sec:efficient-algorithms}

The decomposition algorithm of \cref{sec:algorithms} operates on a determinized, trimmed precover machine $\preCoverMachine = \trim(\inputDeterminize(\ProjA(\transST \circ \tgtStr\tgtStrings)))$. In practice, both the composition with $\tgtStr\tgtStrings$ and the explicit determinization are expensive and must be repeated for each target $\tgtStr$. 
The algorithms in this section perform both steps lazily, operating directly on the original transducer $\transST$ and avoiding explicit materialization.
This requires careful bookkeeping of partial outputs via \emph{frontiers} (\cref{sec:frontier}), but enables precomputation of IP-universal states and incremental reuse across targets.

We begin by stating the algorithmic goal (\cref{app:alg-goal}), then introduce the frontier data structure (\cref{sec:frontier}) and show how each check reduces to a frontier query (\cref{sec:checks}).
We write $({\States, \srcAlphabet, \tgtAlphabet, \InitialStates, \FinalStates, \Transitions}) = \transST$ throughout and omit the dependency on $\prefixSource$ since it is clear from context.

\subsection{The Goal: An Efficient Implementation of the Autoregressive Interface}
\label{app:alg-goal}
We seek efficient algorithms for the prefix probabilities $\prefixTarget(\tgtStr)$, string probabilities $\pTarget(\tgtStr)$, and next-token distributions $\prefixTarget(\cdot\mid\tgtStr)$ of transduced language models.  Specifically, we want efficient implementations of the following methods, which constitute the interface to the transduced language model (\cref{sec:lm-basics}).\looseness=-1
\begin{center}
\begin{minipage}[t]{.55\linewidth}
\begin{minted}{python}
def prefix_prob(@$\tgtStr$@):
  @$(\algQuotient, \algRemainder) \gets \decompose(\tgtStr)$@
  return @$\sum_{\srcStr \in \algQuotient} \prefixSource(\srcStr) + \sum_{\srcStr \in \algRemainder} \pSource(\srcStr)$@
\end{minted}
\begin{minted}{python}
def prob(@$\tgtStr$@):
  return @$\funcPrefixProb(\tgtStr)\cdot \nextDist(\tgtStr)[\eos]$@
\end{minted}
\end{minipage}
\begin{minipage}[t]{.38\linewidth}
\begin{minted}{python}
def next_dist(@$\tgtStr$@):
  @$\overline{p} \gets \{\}$@
  @$Z \gets \funcPrefixProb(\tgtStr)$@
  for @$\tgtSymp \in \tgtAlphabet$@:
    @$\overline{p}$@[@$\tgtSymp$@]@$ \gets \funcPrefixProb(\tgtStr\tgtSymp) / Z$@
  @$\overline{p}[\eos] \gets 1 - \sum_{\tgtSymp \in \tgtAlphabet} \overline{p}[\tgtSymp]$@
  return @$\overline{p}$@
\end{minted}
\end{minipage}
\end{center}
The primitive operation above is $\funcPrefixProb$, which requires a single call to $\decompose$. Both $\nextDist$ and $\funcProb$ are derived from it: $\nextDist$ computes $|\tgtAlphabet|$ additional prefix probabilities and obtains $\prefixTarget(\eos \mid \tgtStr)$ by complement; $\funcProb$ is a one-line product. In an implementation, $\prefixSource$ and $\pSource$ should be memoized so that extending $\prefixSource(\srcStr)$ to $\prefixSource(\srcStr\srcSym)$ requires only a single conditional evaluation rather than replaying the entire history (see \cref{psource-memoization}).
In \cref{sec:decompose-next}, we show how to compute $\nextDist$ with a more efficient, joint decomposition, rather than $|\tgtAlphabet|{+}1$ separate calls to $\funcPrefixProb$.

The three checks in \cref{alg:decompose}---$\isCylinder$, $\isMember$, and $\isLive$---all require determining which transducer states are reachable after reading a source string $\srcStr$, and what output each has produced relative to $\tgtStr$. We capture this information in a single data structure: the \emph{frontier} (\cref{sec:frontier}).
In \cref{sec:checks}, we show how each check reduces to a simple query on the frontier.
In practice, the quotient and remainder may be large or infinite, so we introduce pruning (\cref{sec:pruning}) to obtain practical approximations.

\subsubsection{Frontier Computation}
\label{sec:frontier}

Since $\transST$ can be nondeterministic even if it is functional, the transducer $\transST$ may reach many states simultaneously for a given source string $\srcStr$ and target $\tgtStr$. Each of these paths may have produced a different output buffer.
The \defn{frontier} $\frontier = \run_\tgtStr(\srcStr)$ collects this information: it is the set of $(\state, \buffer)$ pairs---where $\state \in \States$ is a transducer state and $\buffer \in \tgtStrings$ is the output produced so far---reachable by reading $\srcStr$ from the initial states, filtered to buffers $\buffer$ compatible with $\tgtStr$.\footnote{A buffer $\buffer$ is compatible with $\tgtStr$ if and only if $\buffer \preceq \tgtStr \lor \tgtStr \preceq \buffer$, i.e., the buffer has either covered the target or has yet to diverge from it.} The frontier encodes all information needed for the three checks: $\isCylinder$, $\isMember$, and $\isLive$ can each be evaluated from $\frontier$ alone, without retaining the full path history (\cref{sec:checks}; \cref{fig:frontier-example}).
The pseudocode in \cref{fig:fronter-machine} describes how to compute the frontier using the three functions $\run_\tgtStr$, $\step_\tgtStr$, and $\closure_\tgtStr$.

\begin{figure}
\begin{center}
\begin{minipage}[t]{.55\linewidth}
\begin{minted}{python}
def @$\run_\tgtStr(\srcSym_1 \cdots \srcSym_N)$@:  # memoize
  if @$N = 0$@: return @$\closure_\tgtStr(\InitialStates \times \{ \varepsilon \})$@
  return @$\step_\tgtStr(\run_\tgtStr(\srcStr_{<N}), \srcSym_N)$@
\end{minted}
\vspace{.0\baselineskip}
\begin{minted}{python}
def @$\step_\tgtStr(\frontier, \srcSym)$@:
  @$\frontierp \gets \emptyset$@
  for @$(\state, \buffer)$@ in @$\frontier$@:
    for @$(\_ \xrightarrow{\srcSym:\tgtSym} \statep)\in \Transitions(\state, \srcSym)$@:
      @$\bufferp  \gets \buffer\tgtSym$@
      if @$\bufferp \preceq \tgtStr \lor \tgtStr \preceq \bufferp$@:
        @$\frontierp$@.add(@$(\statep, \bufferp)$@)
  return @$\closure_\tgtStr(\frontierp)$@
\end{minted}
\end{minipage}
\begin{minipage}[t]{.39\linewidth}
\begin{minted}{python}
def @$\closure_\tgtStr(\frontier)$@:
  @$\frontierp \gets \frontier$@
  @$\queue \gets \textsc{Queue}(\frontier)$@
  while @$|\queue| > 0$@:
    @$(\state, \buffer) \gets$@ @$\queue$@.pop()
    for @$(\_ \xrightarrow{\varepsilon:\tgtSymp} \statep)\in \Transitions(\state, \varepsilon)$@:
      @$\bufferp  \gets \buffer\tgtSymp$@
      if @$\bufferp \preceq \tgtStr \lor \tgtStr \preceq \bufferp$@:
        if @$(\statep, \bufferp) \notin \frontierp$@:
          @$\frontierp$@.add(@$(\statep, \bufferp)$@)
          @$\queue$@.add(@$(\statep, \bufferp)$@)
  return @$\frontierp$@
\end{minted}
\end{minipage}
\end{center}
\caption{Frontier-based state machine}
\label{fig:fronter-machine}
\end{figure}

The frontier replaces the explicit precover machine $\preCoverMachine = \trim(\inputDeterminize(\ProjA(\transST \circ \tgtStr\tgtStrings)))$ used in \cref{sec:algorithms}.
Rather than materializing $\preCoverMachine$, the frontier tracks the same information directly on $\transST$, avoiding both the composition with $\tgtStr\tgtStrings$ and the explicit determinization.
The design reflects a two-phase structure: before a path's buffer covers the target, the frontier tracks per-state output buffers $(\state, \buffer)$ filtered by target compatibility; once the buffer reaches or passes $\tgtStr$, the frontier can be truncated for universality checking (\cref{sec:checks}).
The frontier unifies the two phases into a single lazy data structure.\looseness=-1

\begin{figure}[t]
\centering
\hspace*{-4mm}
\begin{tikzpicture}[shorten >=1pt, on grid, auto,
  every state/.style={minimum size=7mm}]
\node[state, initial] (q0) {$q_0$};
\node[state, accepting] (q1) [above right=1cm and 2cm of q0] {$q_1$};
\node[state, accepting] (q2) [below right=1cm and 2cm of q0] {$q_2$};
\node[state, accepting] (q3) [below right=1cm and 2cm of q1] {$q_3$};
\path[->]
  (q0) edge node[above left=-2pt] {$\tfont{a}\!:\!\charfont{c}$} (q1)
  (q0) edge node[below left=-2pt] {$\tfont{b}\!:\!\charfont{c}$} (q2)
  (q1) edge node[above right=-2pt] {$\varepsilon\!:\!\charfont{d}$} (q3)
  (q2) edge node[below right=-2pt] {$\tfont{a}\!:\!\charfont{d}$} (q3)
  (q3) edge [loop right] node[right, align=center] {%
      $\tfont{a}\!:\!\charfont{d}$\\
      $\tfont{b}\!:\!\charfont{d}$} (q3);
\end{tikzpicture}

\medskip

\begin{tikzpicture}[shorten >=1pt, on grid, auto,
  every state/.style={minimum size=10mm, font=\small}]

\node[state, initial, fill=gray!10] (F0) at (0, 0)
  {$q_0\!:\!\varepsilon$};

\node[state, accepting, fill=Saffron!15, draw=Saffron!50!black,
      align=center] (F1) at (3, 1)
  {$q_1\!:\!\charfont{c}$\\[-2pt]
   $q_3\!:\!\charfont{cd}$};

\node[state, accepting, fill=JadeMist!12, draw=JadeMist!50!black] (F2)
  at (3, -1)
  {$q_2\!:\!\charfont{c}$};

\node[state, accepting, fill=Saffron!15, draw=Saffron!50!black] (F3)
  at (6, 0)
  {$q_3\!:\!\charfont{cd}$};

\path[->]
  (F0) edge node[above left=-2pt] {$\tfont{a}$} (F1)
  (F0) edge node[below left=-2pt] {$\tfont{b}$} (F2)
  (F1) edge node[above right=-2pt] {$\tfont{a},\tfont{b}$} (F3)
  (F2) edge node[below right=-2pt] {$\tfont{a}$} (F3)
  (F3) edge [loop right] node[right] {$\tfont{a},\tfont{b}$} (F3);
\end{tikzpicture}

\smallskip
\begin{tabular}{@{}llllll@{}}
\toprule
$\srcStr$ & Frontier & $\isLive$ & $\isCylinder$ & $\isMember$ & Class \\
\midrule
$\varepsilon$
& $\{(q_0, \varepsilon)\}$ & \cmark & \xmark & \xmark & extend \\
$\tfont{a}$
& $\{(q_1, \charfont{c}),\, (q_3, \charfont{cd})\}$ & \cmark
& \cmark & ---
& $\Quotient(\charfont{c})$ \\
$\tfont{b}$
& $\{(q_2, \charfont{c})\}$ & \cmark
& \xmark\ ($q_2$ dead on $\tfont{b}$)
& \cmark 
& $\Remainder(\charfont{c})$ \\
\bottomrule
\end{tabular}
\caption{Truncated frontier example for the target $\tgtStr = \charfont{c}$.
\textbf{Top:} the transducer $\transST$.
\textbf{Middle:} the frontier graph $\run_{\smash{\charfont{c}}}$, whose nodes are sets of $(\state, \buffer)$ pairs and whose edges are input symbols.
Buffers are truncated to length $|\tgtStr|{+}1 = 2$: advancing $(q_3, \charfont{cd})$ by any input produces $\charfont{cdd}$, truncated back to $\charfont{cd}$, yielding the self-loop.
Truncation keeps the state space finite while preserving whether the buffer has reached the target and which next output symbol it commits to.
Reading $\srcStr = \tfont{a}$ reaches two states via $\varepsilon$-closure: $q_1$ (buffer $= \tgtStr$) and $q_3$ (buffer $\succ \tgtStr$).
{\color{Saffron!50!black}Yellow} nodes are universal---every input leads to a non-empty accepting successor---placing $\tfont{a}$ in $\Quotient(\charfont{c})$.
The {\color{JadeMist!50!black}teal} node $q_2$ has no arc on $\tfont{b}$, so the BFS fails and $\tfont{b}$ enters $\Remainder(\charfont{c})$.\looseness=-1}
\label{fig:frontier-example}
\end{figure}

The frontier enjoys a useful monotonicity property: extending the target by one symbol can only remove elements from the frontier, never add new ones.
This means we can compute the frontier incrementally---filtering rather than rebuilding---as the target grows.

\begin{proposition}[Frontier containment]
\label{prop:frontier-containment}
For any source string $\srcStr$, target string $\tgtSym_1 \cdots \tgtSym_N$ with $N > 0$:
\begin{align}
\run_{\tgtSym_1 \cdots \tgtSym_N}(\srcStr) = \{(\state, \buffer) \in
\run_{\tgtSym_1 \cdots \tgtSym_{N-1}}(\srcStr) \colon |\buffer| < N \lor \buffer_{N} =
\tgtSym_N\}
\end{align}
\end{proposition}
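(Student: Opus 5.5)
The plan is to replace the procedural definition of $\run_\tgtStr$ in \cref{fig:fronter-machine} with a declarative characterization. Once that is in place, the claimed identity reduces to a pointwise statement about which buffers are compatible with $\tgtStr$.

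\textbf{Step 1 (characterization).} I would show that for every target $\tgtStr$ and source string $\srcStr$, $\run_\tgtStr(\srcStr)$ is exactly the set of pairs $(\state,\buffer)$ satisfying two conditions:
\begin{itemize}
\item some path of $\transST$ starts in $\InitialStates$, scans exactly $\srcStr$ (allowing $\varepsilon$-input arcs anywhere), ends in $\state$, and emits $\buffer$;
\item $\buffer$ is compatible with $\tgtStr$, i.e.\ $\buffer\preceq\tgtStr\lor\tgtStr\preceq\buffer$.
\end{itemize}
The key lemma is that compatibility is inherited by prefixes. Suppose $\buffer'\preceq\buffer$ and $\buffer$ is compatible with $\tgtStr$. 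If $\buffer\preceq\tgtStr$, then $\buffer'\preceq\tgtStr$. If $\tgtStr\preceq\buffer$, then $\buffer'$ and $\tgtStr$ are both prefixes of $\buffer$, so they are comparable.

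Since buffers only grow along a path, a path whose final buffer is compatible has compatible buffers at every intermediate point. So the per-step filter in $\step_\tgtStr$ and $\closure_\tgtStr$ discards exactly the paths whose final buffer is incompatible, and never cuts off a path that would later become compatible. I would make this precise by an outer induction on $|\srcStr|$. For $\closure_\tgtStr$, an inner argument shows that the BFS, with visited set keyed on $(\state,\buffer)$, reaches exactly the pairs obtainable by compatible chains of $\varepsilon$-input arcs from its input set.

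\textbf{Step 2 (pointwise equivalence).} Write $\tgtStr=\tgtSym_1\cdots\tgtSym_N$ and $\tgtStr'=\tgtStr_{<N}$. I would show that $\buffer$ is compatible with $\tgtStr$ iff $\buffer$ is compatible with $\tgtStr'$ and ($|\buffer|<N$ or $\buffer_N=\tgtSym_N$). I split on the length of $\buffer$:
\begin{itemize}
\item If $|\buffer|<N$, compatibility with $\tgtStr$ can only mean $\buffer\preceq\tgtStr$, which is equivalent to $\buffer\preceq\tgtStr'$. Compatibility with $\tgtStr'$ is $\buffer\preceq\tgtStr'$ or $\tgtStr'\preceq\buffer$; the latter forces $\buffer=\tgtStr'$, so again $\buffer\preceq\tgtStr'$.
\item If $|\buffer|\ge N$, compatibility with $\tgtStr$ means $\tgtStr\preceq\buffer$. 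This holds iff $\tgtStr'\preceq\buffer$ and $\buffer_N=\tgtSym_N$. For such $\buffer$, compatibility with $\tgtStr'$ is exactly $\tgtStr'\preceq\buffer$.
\end{itemize}

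\textbf{Step 3.} The path condition in Step 1 does not depend on the target. Combining Step 1 for $\tgtStr$ and for $\tgtStr'$ with Step 2 therefore gives the stated identity directly.

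I expect the main obstacle to be the rigorous handling of Step 1. The difficulty is showing that the lazy, filtered BFS in $\closure_\tgtStr$ neither drops a reachable compatible pair nor depends on exploration order. Prefix-inheritance of compatibility is the fact I would lean on there. I would deliberately not mention buffer truncation, since it is not part of the pseudocode in \cref{fig:fronter-machine}.
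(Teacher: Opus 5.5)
Your proposal is correct and follows essentially the same route as the paper. The paper's proof rests on two observations: because arcs only append to buffers, filtering for compatibility at every step of $\step$ and $\closure$ is equivalent to filtering only the final buffer; and $\tgtSym_1\cdots\tgtSym_N$-compatibility can then be compared directly with $\tgtSym_1\cdots\tgtSym_{N-1}$-compatibility. Your Step~1 (the declarative characterization via prefix-inheritance of compatibility) and your Step~2 (the case split on $|\buffer|$) make those two informal steps rigorous.
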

\begin{proof}
Buffer filtering in $\step$ and $\closure$ keeps only $(\state, \buffer)$ pairs where $\buffer$ and the target agree on their shared prefix.
Since FST arcs only append to the buffer, once $\buffer$ is long enough to have an $N\textsuperscript{th}$ symbol and $\buffer_{N} \ne \tgtSym_N$, then all descendant buffers would be filtered out by the $\tgtSym_1 \cdots \tgtSym_N$-compatibility filter.
Therefore, filtering $\run_\tgtStr(\srcStr)$ at the end is equivalent
to filtering at each step.

To see this, note that the $(\supseteq)$ direction holds because $\tgtSym_1 \cdots \tgtSym_N$-compatible elements are also $\tgtSym_1 \cdots \tgtSym_{N-1}$-compatible (the former is strictly more restrictive).
The $(\subseteq)$ direction holds because elements that are $\tgtSym_1 \cdots \tgtSym_{N-1}$-compatible but \emph{not} $\tgtSym_1 \cdots \tgtSym_N$-compatible (i.e., $|\buffer| \ge N$ and $\buffer_{N} \ne \tgtSym_N$) cannot produce descendants that become $\tgtSym_1 \cdots \tgtSym_N$-compatible, since buffers only grow.
\end{proof}

\begin{proposition}
\label{remark:generalized-frontier-containment}
\cref{prop:frontier-containment} generalizes beyond single-symbol extensions: for any $\tgtStrp \preceq \tgtStr$:
\begin{align}
\run_{\tgtStr}(\srcStr) = \{(\state, \buffer) \in
\run_{\tgtStrp}(\srcStr) \colon \buffer \preceq \tgtStr \lor \buffer \succeq \tgtStr \}
\end{align}
\end{proposition}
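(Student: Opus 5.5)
The plan is induction on the length difference $|\tgtStr| - |\tgtStrp|$, using \cref{prop:frontier-containment} as the single-step case. The base case $\tgtStrp = \tgtStr$ holds trivially. Every element of $\run_{\tgtStr}(\srcStr)$ is $\tgtStr$-compatible by construction, so the filter $\buffer \preceq \tgtStr \lor \buffer \succeq \tgtStr$ removes nothing.

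For the inductive step, write $\tgtStr = \tgtSym_1\cdots\tgtSym_N$ and suppose $\tgtStrp \preceq \tgtSym_1\cdots\tgtSym_{N-1} \prec \tgtStr$. By \cref{prop:frontier-containment},
\[
\run_{\tgtStr}(\srcStr) = \{(\state,\buffer) \in \run_{\tgtSym_1\cdots\tgtSym_{N-1}}(\srcStr) \colon |\buffer|<N \lor \buffer_N = \tgtSym_N\}.
\]
By the inductive hypothesis,
\[
\run_{\tgtSym_1\cdots\tgtSym_{N-1}}(\srcStr) = \{(\state,\buffer)\in\run_{\tgtStrp}(\srcStr) \colon \buffer \text{ is } \tgtSym_1\cdots\tgtSym_{N-1}\text{-compatible}\}.
\]
Substituting, $\run_{\tgtStr}(\srcStr)$ is the set of elements of $\run_{\tgtStrp}(\srcStr)$ satisfying the conjunction of two conditions: $\tgtSym_1\cdots\tgtSym_{N-1}$-compatibility, and ($|\buffer|<N$ or $\buffer_N=\tgtSym_N$).

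It remains to show that this conjunction is equivalent to $\tgtStr$-compatibility. Assume $\buffer$ is $\tgtSym_1\cdots\tgtSym_{N-1}$-compatible and split on $|\buffer|$.
\begin{itemize}
\item If $|\buffer| \le N-1$, compatibility forces $\buffer \preceq \tgtSym_1\cdots\tgtSym_{N-1} \preceq \tgtStr$. Then $\buffer$ is $\tgtStr$-compatible, and the extra condition $|\buffer|<N$ also holds, so both sides agree.
\item If $|\buffer| \ge N$, compatibility forces $\buffer \succeq \tgtSym_1\cdots\tgtSym_{N-1}$. In this case $\buffer$ is $\tgtStr$-compatible (necessarily $\buffer\succeq\tgtStr$) if and only if $\buffer_N=\tgtSym_N$, which is exactly the extra condition.
\end{itemize}
For the converse direction, $\tgtStr$-compatibility implies $\tgtSym_1\cdots\tgtSym_{N-1}$-compatibility, because prefixes of $\tgtStr$ and extensions of $\tgtStr$ are both comparable with $\tgtSym_1\cdots\tgtSym_{N-1}$.

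This case analysis is the only point needing care: one must confirm that a buffer of length at least $N$ that passed the weaker filter is genuinely an extension of $\tgtSym_1\cdots\tgtSym_{N-1}$, so that agreement at position $N$ suffices. Alternatively, one could bypass the induction and repeat the argument of \cref{prop:frontier-containment} directly. Buffers only grow along paths, and compatibility with a longer target is closed under taking ancestors, so a descendant buffer that is $\tgtStr$-compatible has every ancestor buffer $\tgtStr$-compatible too. Hence filtering at every step by $\tgtStr$ is equivalent to filtering at every step by $\tgtStrp$ and then filtering once at the end by $\tgtStr$. I would present the induction as the main proof and mention this direct argument as a remark.
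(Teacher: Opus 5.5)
Your proof is correct and follows the paper's argument. Like the paper, you induct on $|\tgtStr|-|\tgtStrp|$, use \cref{prop:frontier-containment} for the last symbol and the inductive hypothesis for the rest, and then check that the two filters together amount to $\tgtStr$-compatibility. Your case split on $|\buffer|$ and the converse observation (that $\tgtStr$-compatibility implies compatibility with $\tgtSym_1\cdots\tgtSym_{N-1}$) just spell out the paper's one-line claim that the surviving buffers are exactly those agreeing with $\tgtStr$ up to position $\min(|\buffer|,N)$.
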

\begin{proof}
By induction on $|\tgtStr| - |\tgtStrp|$.
The base case $\tgtStr = \tgtStrp$ is immediate.
For the inductive step, write $\tgtStr = \tgtStr' \tgtSym_N$ with $\tgtStrp \preceq \tgtStr'$.
By \cref{prop:frontier-containment},
$\run_{\tgtStr}(\srcStr) = \{(\state, \buffer) \in \run_{\tgtStr'}(\srcStr) \colon |\buffer| < N \lor \buffer_{N} = \tgtSym_N\}$.
By the induction hypothesis,
$\run_{\tgtStr'}(\srcStr) = \{(\state, \buffer) \in \run_{\tgtStrp}(\srcStr) \colon \buffer \preceq \tgtStr' \lor \buffer \succeq \tgtStr'\}$.
Substituting, $(\state, \buffer) \in \run_{\tgtStrp}(\srcStr)$ survives both filters iff $\buffer$ agrees with $\tgtStr$ on all positions up to $\min(|\buffer|, N)$, i.e., $\buffer \preceq \tgtStr \lor \buffer \succeq \tgtStr$.\looseness=-1
\end{proof}
\cref{remark:generalized-frontier-containment} is the key property that enables warm-starting $\decomposeNext$ (\cref{sec:decompose-next}) from any target prefix's cached frontiers.

\subsection{\texorpdfstring{Frontier-Based Checks (\ensuremath{\isCylinder, \isMember, \isLive})}{Frontier-Based Checks (isCylinder, isMember, isLive)}}
\label{sec:checks}

Recall that the decomposition algorithm (\cref{alg:decompose}) relies on three checks ($\isCylinder$, $\isMember$, $\isLive$).  This section describes how to compute them in terms of the frontier.
\begin{figure}[H]
\begin{center}
\begin{minipage}[t]{.54\linewidth}
\begin{minted}{python}
def @$\isCylinder(\srcStr, \tgtStr)$@:  # memoize
  @$N \gets |\tgtStr|$@
  @$\frontier \gets \truncBuf_\tgtStr(\run_\tgtStr(\srcStr))$@
  @$\Visited \gets \{ \frontier \}$@; @$\queue \gets \textsc{Queue}(\{\frontier\})$@
  while @$\queue$@:
    @$\frontier \gets \queue$@.pop()
    # Accepting: some pair has matched
    # target and is accepting
    if @$\neg \exists\, (\state, \buffer) \in \frontier \colon \buffer \succeq \tgtStr \land \state \in \FinalStates$@:
      return False
    # Complete: every input leads to a
    # non-empty successor
    for @$\srcSymp \in \srcAlphabet$@:
      @$\frontierp \gets \truncBuf_\tgtStr(\step_\tgtStr(\frontier, \srcSymp))$@
      if @$\frontierp = \emptyset$@: return False
      if @$\frontierp \notin \Visited$@: @$\Visited$@.add(@$\frontierp$@); @$\queue$@.push(@$\frontierp$@)
  return True
\end{minted}
\end{minipage}
\begin{minipage}[t]{.38\linewidth}
\begin{minted}{python}
def @$\isMember(\srcStr, \tgtStr)$@:
  return @$\exists (\state, \buffer) \in \run_\tgtStr(\srcStr) \colon$@
      @$\buffer \succeq \tgtStr \land \state \in \FinalStates$@
\end{minted}
\begin{minted}{python}
def @$\isLive(\srcStr, \tgtStr)$@:
  return @$\run_\tgtStr(\srcStr) \ne \emptyset$@
\end{minted}
\begin{minted}{python}
def @$\truncBuf_\tgtStr(\frontier)$@:
  @$N \gets |\tgtStr|$@
  return @$\{(\state, \buffer_{\leq N}) \mid (\state, \buffer) \in \frontier,$@
           @$\buffer \preceq \tgtStr \lor \tgtStr \preceq \buffer \}$@
\end{minted}
\end{minipage}
\end{center}
\caption{Frontier-based checks.
$\isCylinder(\srcStr, \tgtStr)$ verifies universality via powerset BFS over projected frontiers.
$\isMember$ checks for an accepting covering state; $\isLive$ checks whether the frontier is non-empty.
}
\label{alg:frontier}
\end{figure}

To check whether $\basisSet{\srcStr} \subseteq \preCover(\tgtStr)$, the frontier-based $\isCylinder$ applies $\truncBuf$ to the frontier $\run_\tgtStr(\srcStr)$, which truncates buffers to length $|\tgtStr|$, yielding a \defn{truncated frontier}---a set of $(\state, \buffer)$ pairs whose buffers are compatible with $\tgtStr$.
It then verifies universality via BFS over truncated frontiers: each successor frontier is obtained by applying $\step_\tgtStr$ followed by $\truncBuf$.
Unlike the ordinary frontier, where buffers can grow without bound, truncated buffers must be prefixes of $\tgtStr$, so at most $|\tgtStr|{+}1$ distinct buffers exist, making the truncated frontier state space finite.
Each truncated frontier corresponds to a state of the determinized precover machine $\preCoverMachine$, so the BFS terminates.
The BFS checks that every truncated frontier is both accepting (some pair has $\buffer \succeq \tgtStr$ and $\state \in \FinalStates$) and complete (every input symbol leads to a non-empty successor).
Without this exploration, two nondeterministic paths may yield the same string yet end in different states $\state_1$ and $\state_2$, where neither is universal on its own, although $\sem{\forceStart{\transST}{\state_1}} \cup \sem{\forceStart{\transST}{\state_2}} = \srcStrings$.%
\footnote{Universality is decidable for finite-state automata; efficient algorithms include antichain-based simulation \citep{de-wulf-antichain}, bisimulation up to congruence \citep{bonchi-2015-hacking}, and classical equivalence checking \citep{meyer-1972-equi}.}
The precomputation of IP-universal states and a cylinder check shortcut are discussed in \cref{sec:ip-shortcut}.

When the cylinder check fails, membership reduces to checking whether the frontier contains a covering accepting pair ($\buffer \succeq \tgtStr$ and $\state \in \FinalStates$): if so, $\srcStr \in \preCover(\tgtStr)$ and $\srcStr$ enters $\Remainder(\tgtStr)$.

Liveness is non-emptiness of the frontier. Since frontier pairs are already filtered to target-compatible buffers, a non-empty frontier witnesses that some extension of $\srcStr$ can reach the precover.

\subsection{\texorpdfstring{Pruning (\ensuremath{\prune})}{Pruning}}
\label{sec:pruning}

Even when the decomposition is finite, the quotient and remainder can be very large. Enumerating all of them is often impractical, so pruning is essential for scalability.\footnote{An alternative to deterministic pruning is importance sampling with a proposal distribution over source strings; we leave this direction to future work.}
At minimum, $\prune$ drops source strings with zero prefix probability, since these cannot contribute any mass:
\begin{center}
\begin{minipage}{.9\linewidth}
\begin{minted}{python}
def @$\prune(\queuep)$@:
  return @$\{\srcStr \in \queuep \colon \prefixSource(\srcStr) > 0 \}$@
\end{minted}
\end{minipage}
\end{center}
This simplistic filter drops only strings with \emph{zero} probability. In practice, we use a probability-mass pruning strategy: given candidates $\queuep$ with total mass $Z = \sum_{\srcStr \in \queuep} \prefixSource(\srcStr)$, we sort by descending $\prefixSource$ and greedily retain candidates until their cumulative mass reaches $(1-\threshold)\cdot Z$ or the capacity $\maxCand$ is reached, whichever comes first.
The resulting decomposition is approximate: discarded candidates contribute at most $\threshold \cdot Z$ mass per step, but the error vanishes as $\threshold \to 0$.
\begin{center}
\begin{minipage}{.9\linewidth}
\begin{minted}{python}
def prune_prob_mass(@$\threshold, \maxCand$@):
  def prune(@$\queuep$@):
    @$M \gets |\queuep|$@
    @$\queuep \gets$@ sort_descending(@$\queuep$@, key=@$\prefixSource$@)
    @$\texttt{w} \gets$@ [@$\prefixSource(\srcStr)$@ for @$\srcStr \in \queuep$@]  # @$w_{1} \ge \cdots \ge \texttt{w}_{M}$@
    @$\texttt{W} \gets$@ cumulative_sum(w)
    @$Z \gets \texttt{W}_M$@
    @$\queue \gets \emptyset$@
    for @$m \in \{1, \ldots, \min(\maxCand, M) \}$@:
      @$\queue$@.add(@$\queuep_m$@)
      if @$\texttt{W}_m \ge (1 - \threshold) Z$@: break
    return @$\queue$@ 
  return prune
\end{minted}
\end{minipage}
\end{center}

\paragraph{Approximation bound.}
At each step, the discarded mass is at most $\threshold \cdot Z$.  As $\threshold \to 0$ and $\maxCand \to \infty$, the algorithm converges to the exact decomposition.

\paragraph{Finite termination.}
Even when the decomposition is finite, its size can be enormous. Pruning makes the algorithm tractable by bounding the queue at $\maxCand$ candidates per step, and also guarantees termination for transducers whose decomposition is infinite (\cref{appendix:finite-decomposition}).

\begin{figure}[t]
\begin{center}
\begin{minipage}[t]{.9\linewidth}
\begin{minted}{python}
def backtrack(@$\tgtStr, \tgtSym, \threshold, R, \threshold_{\min}, D_{\max}$@):
  @$\threshold_0 \gets \threshold$@
  for @$i \in 1, \ldots, R$@:
    @$\threshold \gets \max(\threshold / 2,\; \threshold_{\min})$@
    @$D \gets \min(2^{i-1},\; D_{\max})$@
    for @$d \in 0, \ldots, D{-}1$@:
      if @$d > |\tgtStr|$@: break
      evict cached @$\decompose(\tgtStr_{<|\tgtStr|-d})$@
    @$\pdict \gets \nextDist(\tgtStr)$@
    if @$\pdict[\tgtSym] > 0$@:
      @$\threshold \gets \threshold_0$@
      return @$\pdict$@
  @$\threshold \gets \threshold_0$@
  raise DeadEnd
\end{minted}
\end{minipage}
\end{center}
\caption{Backtracking. On each retry, halve $\threshold$ (floored at $\threshold_{\min}$) and evict cached decompositions for progressively longer prefixes of $\tgtStr$ (depth $D = \min(2^{i-1}, D_{\max})$), then recompute $\nextDist$. Restore $\threshold$ afterward.\looseness=-1}
\label{alg:backtracking}
\end{figure}

\paragraph{Backtracking.}
Pruning may occasionally lead to dead ends where no valid extension remains. When this occurs while scoring an observed sequence, the next symbol $\tgtSym$ receives zero conditional probability under the approximate $\prefixTarget(\cdot \mid \tgtStr)$. We recover using the procedure in \cref{alg:backtracking}, which takes the current target string $\tgtStr$, the next symbol $\tgtSym$, the current pruning threshold $\threshold$, and hyperparameters: maximum retries $R$, threshold floor $\threshold_{\min}$, and maximum eviction depth $D_{\max}$. On each retry, $\threshold$ is halved (floored at $\threshold_{\min}$ to prevent unlimited $\queue$ growth), and cached decompositions for prefixes of $\tgtStr$ are evicted at exponentially increasing depth $D = \min(2^{i-1}, D_{\max})$, forcing $\decompose$ to recompute with the relaxed threshold. The original threshold is restored afterward.
The procedure is heuristic: there is no guarantee that a fixed number of retries with halved thresholds will recover. Frequent backtracking signals that the pruning threshold is overly aggressive; in such cases, the overhead of repeated eviction and recomputation can dominate runtime.
In our experiments, we use $R = 20$, $\threshold_{\min} = 10^{-10}$, $D_{\max} = 32$. 
Backtracking is infrequent in practice: for $\threshold \le$ 1e-3, zero fallbacks occur across all experiments. At the coarsest threshold ($\threshold=$ 1e-1), the mean number of fallbacks per paragraph reaches at most 96 for $\gpt \circ \tokensToBytesFST$ (max 159), 5.1 for $\gpt \circ \tokensToBytesFST \circ \ptbFST$ (max 18), and 0.4 for $\dna \circ \dnaToAaFST$ (max 5).

\subsection{\texorpdfstring{Fast Next-Symbol Decomposition (\ensuremath{\decomposeNext})}{Fast Next-Symbol Decomposition (decompose\_next)}}
\label{sec:decompose-next}

Since $\basisSet{\Quotient(\tgtStr\tgtSym)} \subseteq \basisSet{\Quotient(\tgtStr)}$, we have $\preCover(\tgtStr\tgtSym) = (\basisSet{\Quotient(\tgtStr)} \cap \basisSet{\Quotient(\tgtStr\tgtSym)}) \sqcup \Remainder(\tgtStr\tgtSym)$. The decomposition for $\tgtStr\tgtSym$ can therefore be obtained with incremental work starting from $\Quotient(\tgtStr) \cup \Remainder(\tgtStr)$. The structural reason is monotonicity: the precover for $\tgtStr$ and $\tgtStr\tgtSym$ share the same alphabet and initial states; only the accepting states differ, as acceptability depends on which buffers cover the target. This is formalized by the frontier containment property (\cref{prop:frontier-containment}).

The $\decomposeNext$ method computes $\Quotient(\tgtStr\tgtSym)$ and $\Remainder(\tgtStr\tgtSym)$ for all $\tgtSym \in \tgtAlphabet$ simultaneously, as well as the exact preimage $\transFnInv(\tgtStr)$.
This is the key efficiency improvement over calling $\decompose(\tgtStr\tgtSym)$
independently for each $\tgtSym$.

The BFS is seeded with $\Quotient(\tgtStr) \cup \Remainder(\tgtStr)$ (\cref{line:decomposeNext:seed}).
For each source string $\srcStr$ in the queue, we first check whether $\isExactMember(\srcStr, \tgtStr)$ (\cref{line:decomposeNext:exactpreimage}) and, if so, record $\srcStr$ in the preimage set.
Next, for each $\tgtSym \in \reachableOutputs(\srcStr, \tgtStr)$ (defined below; \cref{line:decomposeNext:classify}), we classify $\srcStr$ as a cylinder or member for $\tgtStr\tgtSym$; by functionality of $\transST$, at most one $\tgtSym$ can be a cylinder (\cref{prop:cylinder-uniqueness}).
If a cylinder was found for some $\tgtSym$, the entire subtree rooted at $\srcStr$ is absorbed---we skip extension for \emph{all} symbols (\cref{line:decomposeNext:absorption}).
Otherwise, we extend by all reachable source symbols (\cref{line:decomposeNext:extension}).

The $\decomposeNext$ algorithm relies on two mechanisms described next: target recursion for cheap frontier computation, and three frontier-based helpers. We now introduce these.

\paragraph{Target recursion.}
The call $\isCylinder(\srcStr, \tgtStr\tgtSym)$ internally triggers $\run_{\tgtStr\tgtSym}(\srcStr)$.
Target recursion is used opportunistically when the parent frontier $\run_\tgtStr(\srcStr)$ is already cached: the frontier for the extended target is derived by filtering rather than recomputing the entire source-string chain (\cref{prop:frontier-containment}).  This is the typical case in $\decomposeNext$, where $\decompose(\tgtStr)$ has already populated the cache.
This optimization is transparent: $\isCylinder$ is called unchanged; the logic lives entirely inside $\run$.\looseness=-1

\paragraph{Per-symbol frontier helpers.}
The BFS loop uses three frontier-based helpers.
\begin{itemize}
\item 
$\isExactMember(\srcStr, \tgtStr)$ checks whether $\srcStr$ maps to $\tgtStr$ exactly, i.e., $\isExactMember(\srcStr, \tgtStr) \Longleftrightarrow (\transFn(\srcStr) = \tgtStr)$.
\item $\reachableInputs(\srcStr, \tgtStr)$ conservatively approximates the set of source symbols $\srcSymp$ such that extending $\srcStr$ by $\srcSymp$ remains compatible with $\tgtStr$, i.e., $\{\srcSymp \in \srcAlphabet \colon \basisSet{\srcStr\srcSymp} \cap \preCover(\tgtStr) \ne \emptyset\}$.  The pseudocode checks for transitions from frontier states whose extended buffer is prefix-compatible with $\tgtStr$; this may over-report when the resulting state cannot reach acceptance.
\item $\reachableOutputs(\srcStr, \tgtStr)$ conservatively approximates the set of target symbols $\tgtSymp$ such that some extension of $\srcStr$ can produce $\tgtStr\tgtSymp$ as an output prefix, i.e., $\{\tgtSymp \in \tgtAlphabet \colon \basisSet{\srcStr} \cap \preCover(\tgtStr\tgtSymp) \ne \emptyset\}$.  The pseudocode checks single-step transitions from frontier states, which may over-report (a transition from a frontier state that cannot reach acceptance) because the frontier is not cotrimmed.  Symbols requiring multiple transitions to extend past $\tgtStr$ are handled by the expansion loop.
\end{itemize}
All three are computed directly from the frontier $\run_\tgtStr(\srcStr)$:
\begin{center}
\begin{minipage}[t]{.47\linewidth}
\begin{minted}{python}
def @$\isExactMember(\srcStr, \tgtStr)$@:
  return @$\exists (\state, \buffer) \in \run_\tgtStr(\srcStr) \colon$@
    @$\buffer = \tgtStr \land \state \in \FinalStates$@
\end{minted}
\begin{minted}{python}
def @$\reachableInputs(\srcStr, \tgtStr)$@:
  return @$\{\srcSymp \mid (\state, \buffer) \in \run_\tgtStr(\srcStr),$@
    @$(\state \xrightarrow{\srcSymp:\tgtSymp} \statep) \in \Transitions, \buffer\tgtSymp \preceq \tgtStr \lor \tgtStr \preceq \buffer\tgtSymp \}$@
\end{minted}
\end{minipage}
\begin{minipage}[t]{.42\linewidth}
\begin{minted}{python}
def @$\reachableOutputs(\srcStr, \tgtStr)$@:
  @$\frontier \gets \run_\tgtStr(\srcStr)$@
  # @$\tgtSymp$@ is read from @$\buffer$@ (committed) 
  # or @$\tgtSym$@ (boundary)
  return @$\{\tgtSymp \mid (\state, \buffer) \in \frontier,$@
    @$(\state \xrightarrow{\_:\tgtSymp} \statep) \in \Transitions(\state), \tgtStr\tgtSymp \preceq \buffer\tgtSymp \}$@
\end{minted}
\end{minipage}
\end{center}

\paragraph{Expansion loop.}
The following algorithm combines the seeding, shortcuts, and helpers above into a single BFS that computes the per-symbol quotients and remainders for every $\tgtSym \in \tgtAlphabet$ simultaneously.

\begin{figure}[H]
\begin{center}
\begin{minipage}{.92\linewidth}
\begin{minted}{python}
def @$\decomposeNext(\tgtStr)$@:
  @$(\algQuotient, \algRemainder) \gets \decompose(\tgtStr)$@
  @$\quotientsVar \gets \{\}$@; @$\remaindersVar \gets \{\}$@        # map @$\tgtSym \mapsto$@ set of @$\srcStr$@
  @$\preimageVar \gets \emptyset$@
  # Remainders were not cylinders for @$\tgtStr$@, hence not for any @$\tgtStr\tgtSym$@ (@\refNoncylinderMonotonicity@)
  @$\texttt{non\_cyl} \gets \algRemainder$@ @\label{line:decomposeNext:noncyl}@
  @$\queue \gets \algQuotient \cup \algRemainder$@ @\label{line:decomposeNext:seed}@
  while @$|\queue| > 0$@:
    @$\queuep \gets \emptyset$@
    for @$\srcStr$@ in @$\queue$@:
      if @$\isExactMember(\srcStr, \tgtStr)$@ @\label{line:decomposeNext:exactpreimage}@: @$\preimageVar$@.add(@$\srcStr$@)      # @$\transFn(\srcStr) = \tgtStr$@ exactly
      # Check reachable @$\tgtSym$@; at most one @$\tgtSym$@ can have @$\srcStr$@ as a cylinder (@\refPropCylinderUniq@).
      @$\tgtSymScore \gets \bot$@
      for @$\tgtSym$@ in @$\reachableOutputs(\srcStr, \tgtStr)$@ @\label{line:decomposeNext:classify}@:
        if @$\tgtSymScore = \bot$@ and @$\srcStr \notin \texttt{non\_cyl}$@ and @$\isCylinder(\srcStr, \tgtStr\tgtSym)$\label{line:decomposeNext:cylinder}@:
          @$\quotientsVar[\tgtSym]$@.add(@$\srcStr$@); @$\tgtSymScore \gets \tgtSym$@; continue
        if @$\isMember(\srcStr, \tgtStr\tgtSym)$@:
          @$\remaindersVar[\tgtSym]$@.add(@$\srcStr$@)
      if @$\tgtSymScore \ne \bot$@: continue   # absorbed @$\Rightarrow$@ skip extension for all @$\tgtSym$@ @\label{line:decomposeNext:absorption}@
      for @$\srcSymp \in \reachableInputs(\srcStr, \tgtStr)$@ @\label{line:decomposeNext:extension}@:
        @$\queuep$@.add(@$\srcStr\srcSymp$@)
    @$\queue \gets \prune(\queuep)$@
  return @$(\quotientsVar, \remaindersVar, \preimageVar)$@
\end{minted}
\end{minipage}
\end{center}
\caption{Fast next-token decomposition algorithm.}
\label{alg:decomposeNext}
\end{figure}

$\decomposeNext(\tgtStr)$ computes $\Quotient(\tgtStr\tgtSym)$ and $\Remainder(\tgtStr\tgtSym)$ for all $\tgtSym \in \tgtAlphabet$ in a single BFS pass.
The $\isCylinder(\srcStr, \tgtStr\tgtSym)$ call triggers $\run_{\tgtStr\tgtSym}(\srcStr)$, which is computed cheaply via target recursion (\cref{prop:frontier-containment}) since the parent frontier $\run_\tgtStr(\srcStr)$ is already cached.
The non-cylinder shortcut avoids the universality check entirely for seeds in $\Remainder(\tgtStr)$.
After the BFS, the cache is populated so that subsequent $\decompose(\tgtStr\tgtSym)$ calls are free.
The exact preimage is collected as a byproduct: if $\srcStr$ is a cylinder for some $\tgtStr\tgtSym$, every extension produces output strictly longer than $\tgtStr$, so no extension can be an exact preimage of $\tgtStr$.

\paragraph{Cache population.} 
After the BFS, the computed $\Quotient(\tgtStr\tgtSym)$ and $\Remainder(\tgtStr\tgtSym)$ should be written into $\decompose$'s memoization cache. A subsequent $\decompose(\tgtStr\tgtSym)$ call (e.g., from $\decomposeNext(\tgtStr\tgtSym)$) hits the cache instead of re-running its own BFS.
\begin{center}
\begin{minipage}{.95\linewidth}
\begin{minted}{python}
def @$\decomposeNext(\tgtStr)$@:
  @$\cdots$@
  # Populate @$\decompose$@'s cache for all children
  for @$\tgtSym \in \quotientsVar.\texttt{keys}() \cup \remaindersVar.\texttt{keys}()$@:
    @$\Cache[\decompose, \tgtStr\tgtSym] \gets (\quotientsVar[\tgtSym], \remaindersVar[\tgtSym])$@
  return @$\cdots$@
\end{minted}
\end{minipage}
\end{center}

\subsection{Next-Symbol Decomposition Shortcuts}
\label{appendix:shortcuts}

The $\decomposeNext$ algorithm (\cref{alg:decomposeNext}) shares work across all output symbols by making a single BFS pass for all output symbols at once instead of calling the decomposition for each symbol as in $\decompose$ (\cref{alg:decompose}). By starting with the decomposition from the prior step and using target recursion, the overall frontier computations are reused.
The remaining bottleneck is the per-symbol cylinder check $\isCylinder$ (\cref{alg:frontier}). Three shortcuts avoid or reduce this cost, each justified by a proposition about the structure of the precovers.
\emph{Non-cylinder monotonicity} and \emph{cylinder uniqueness} enable skipping BFS for some settings. \emph{Combined universality} resolves additional symbols by examining how a frontier partitions into committed and boundary elements.

\paragraph{(1) Non-cylinder shortcut.}
\label{sec:non-cylinder-shortcut}
Strings in $\Remainder(\tgtStr)$ cannot be quotient elements for $\tgtStr$. And since $\basisSet{\Quotient(\tgtStr\tgtSymp)}\subseteq\basisSet{\Quotient(\tgtStr)}$, they are not quotient elements for the strictly more restrictive $\tgtStr\tgtSym$ either. The expensive cylinder  check (\cref{line:decomposeNext:cylinder}) is thus skipped for these seeds (\cref{line:decomposeNext:noncyl}).

\begin{proposition}[Non-cylinder monotonicity]
\label{prop:non-cylinder-monotonicity}
If $\srcStr \in \Remainder(\tgtStr)$ then $\srcStr \notin \Quotient(\tgtStr\tgtSym)$ for all $\tgtSym \in \tgtAlphabet$.
\end{proposition}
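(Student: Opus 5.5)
The plan is to argue directly from the definitions in \cref{def:qr}, using only that precovers shrink as the target grows. The first step is to record the inclusion $\preCover(\tgtStr\tgtSym) \subseteq \preCover(\tgtStr)$ for every $\tgtSym \in \tgtAlphabet$. This holds because $\tgtStr\tgtSym \preceq \transFn(\srcStr)$ implies $\tgtStr \preceq \tgtStr\tgtSym \preceq \transFn(\srcStr)$, by transitivity of the prefix order. It follows immediately that $\maxQuotient(\tgtStr\tgtSym) \subseteq \maxQuotient(\tgtStr)$. Indeed, if $\basisSet{\srcStr} \subseteq \preCover(\tgtStr\tgtSym)$, then also $\basisSet{\srcStr} \subseteq \preCover(\tgtStr)$.

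The main step is then a short argument by contradiction. Suppose $\srcStr \in \Remainder(\tgtStr)$ and also $\srcStr \in \Quotient(\tgtStr\tgtSym)$ for some $\tgtSym$. Since $\Quotient(\tgtStr\tgtSym) = \pr(\maxQuotient(\tgtStr\tgtSym)) \subseteq \maxQuotient(\tgtStr\tgtSym)$, we get $\basisSet{\srcStr} \subseteq \preCover(\tgtStr\tgtSym) \subseteq \preCover(\tgtStr)$. Hence $\srcStr \in \maxQuotient(\tgtStr)$. But $\Remainder(\tgtStr) = \preCover(\tgtStr) \setminus \maxQuotient(\tgtStr)$, so $\srcStr \notin \maxQuotient(\tgtStr)$, which is a contradiction.

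In fact the argument proves something slightly stronger: $\srcStr \notin \maxQuotient(\tgtStr\tgtSym) = \basisSet{\Quotient(\tgtStr\tgtSym)}$. I would note this explicitly, since it is the form that justifies skipping the cylinder check in \cref{alg:decomposeNext}.

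I do not expect a real obstacle here. The only point needing care is to use the elementwise definition of $\maxQuotient$ (the set of strings whose basic cylinder lies in the precover) rather than the prefix-base $\pr$. Working this way avoids any reasoning about minimal prefixes, and it requires no finiteness or transducer assumptions; the statement holds for an arbitrary map $\transFn$.
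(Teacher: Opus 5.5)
Your proposal is correct and follows essentially the same route as the paper: both use $\preCover(\tgtStr\tgtSym) \subseteq \preCover(\tgtStr)$ to pass from $\basisSet{\srcStr} \not\subseteq \preCover(\tgtStr)$ to $\basisSet{\srcStr} \not\subseteq \preCover(\tgtStr\tgtSym)$. You also make explicit the step $\Quotient(\tgtStr\tgtSym) \subseteq \maxQuotient(\tgtStr\tgtSym)$, which the paper leaves implicit, and your observation that the argument actually yields $\srcStr \notin \maxQuotient(\tgtStr\tgtSym)$ is correct; that stronger form is exactly what the cylinder-check skip in \cref{alg:decomposeNext} needs.
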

\begin{proof}
We have that $\srcStr \in \Remainder(\tgtStr)$ implies $\basisSet{\srcStr} \not\subseteq \preCover(\tgtStr)$. Since $\preCover(\tgtStr\tgtSym) \subseteq \preCover(\tgtStr)$, and thus $\basisSet{\srcStr} \not\subseteq \preCover(\tgtStr\tgtSym)$, so $\srcStr \notin \Quotient(\tgtStr\tgtSym)$.\looseness=-1
\end{proof}

\paragraph{(2) Cylinder uniqueness shortcut.}
\label{sec:cylinder-uniqueness}
Since the transformations we consider are functions, at most one output symbol $\tgtSym$ can have $\srcStr$ as a cylinder element. Once a cylinder element is found, the cylinder check is skipped for the remaining output symbols (\cref{line:decomposeNext:classify}).\looseness=-1

\begin{proposition}[Cylinder uniqueness]
\label{prop:cylinder-uniqueness}
For a function $\transFn$ and target prefix $\tgtStr$, $\srcStr \in \Quotient(\tgtStr\tgtSym) \cap \Quotient(\tgtStr\tgtSymp) \Longrightarrow \tgtSym = \tgtSymp$.
\end{proposition}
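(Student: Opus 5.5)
The plan is to use the cylinder property to reach a single source string, and then let functionality of $\transFn$ rule out two different next symbols. Suppose $\srcStr \in \Quotient(\tgtStr\tgtSym) \cap \Quotient(\tgtStr\tgtSymp)$. By the definition of the quotient as $\pr(\maxQuotient(\cdot))$, we have $\srcStr \in \maxQuotient(\tgtStr\tgtSym)$ and $\srcStr \in \maxQuotient(\tgtStr\tgtSymp)$. That is,
\begin{align}
\basisSet{\srcStr} \subseteq \preCover(\tgtStr\tgtSym) \quad\text{and}\quad \basisSet{\srcStr} \subseteq \preCover(\tgtStr\tgtSymp).
\end{align}

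Next I instantiate both inclusions at one witness. The natural choice is $\srcStr$ itself, since $\srcStr \in \basisSet{\srcStr}$ (take the empty continuation). It follows that $\srcStr \in \preCover(\tgtStr\tgtSym) \cap \preCover(\tgtStr\tgtSymp)$.

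By the definition of the precover, this gives both $\tgtStr\tgtSym \preceq \transFn(\srcStr)$ and $\tgtStr\tgtSymp \preceq \transFn(\srcStr)$. Because $\transFn$ is a function, $\transFn(\srcStr)$ is a single string. Both prefixes have length $|\tgtStr|+1$, so each must equal the length-$(|\tgtStr|+1)$ prefix of that one string. Comparing position $|\tgtStr|+1$ gives $\tgtSym = \transFn(\srcStr)_{|\tgtStr|+1} = \tgtSymp$.

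I expect no step to be a real obstacle. The only point needing care is that functionality is essential: $\transFn(\srcStr)$ must be one well-defined string. The argument uses only $\srcStr \in \maxQuotient$, not minimality in $\pr$, so it in fact shows that the sets $\maxQuotient(\tgtStr\tgtSym)$ are pairwise disjoint across $\tgtSym \in \tgtAlphabet$. One could instead take any extension of $\srcStr$ as the witness, but $\srcStr$ itself is simplest.
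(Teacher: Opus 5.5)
Your proposal is correct and follows essentially the paper's argument: both go from $\srcStr \in \Quotient$ to $\basisSet{\srcStr} \subseteq \preCover(\tgtStr\tgtSym) \cap \preCover(\tgtStr\tgtSymp)$ and then use functionality of $\transFn$. The paper concludes by contradiction that this intersection is the preimage of the disjoint target sets $\tgtStr\tgtSym\tgtStrings$ and $\tgtStr\tgtSymp\tgtStrings$ and hence empty, whereas you compare the $(|\tgtStr|+1)$-th symbol of $\transFn(\srcStr)$ directly at the witness $\srcStr$; the difference is only presentational.
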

\begin{proof}
Suppose $\tgtSym \ne \tgtSymp$ and $\srcStr \in \Quotient(\tgtStr\tgtSym) \cap \Quotient(\tgtStr\tgtSymp)$.
Then $\basisSet{\srcStr} \subseteq \preCover(\tgtStr\tgtSym) \cap \preCover(\tgtStr\tgtSymp) = \transFn^{-1}(\tgtStr\tgtSym\tgtStrings) \cap \transFn^{-1}(\tgtStr\tgtSymp\tgtStrings) = \emptyset$,
where the last equality holds because $\transFn$ is a function and $\tgtStr\tgtSym\tgtStrings \cap \tgtStr\tgtSymp\tgtStrings = \emptyset$.
This contradicts $\basisSet{\srcStr} \ne \emptyset$.
\end{proof}

\paragraph{(3) Combined universality shortcut.}
\label{sec:combined-universality}
Given $\srcStr \in \Quotient(\tgtStr)$ and $\tgtSymScore\in\tgtAlphabet$, we want to determine whether $\srcStr \in \Quotient(\tgtStr\tgtSymScore)$ using the covering frontier at $\tgtStr$ without running a full powerset BFS\@.
The frontier $\run(\srcStr, \tgtStr)$ partitions into \emph{committed} elements $(\state, \buffer)$ with $\buffer \succeq \tgtStr\tgtSymScore$ (the accumulated output already extends past $\tgtStr$ with symbol $\tgtSymScore$) and \emph{boundary} elements $(\state, \buffer)$ with $\buffer = \tgtStr$ (the next output symbol is undetermined).
The committed states alone may not be universal---some input symbols may lack outgoing transitions from committed states. Boundary states can fill the gaps, provided every transition from a boundary state produces $\tgtSymScore$ as its next output symbol.
\begin{proposition}[Combined universality]
\label{prop:combined-universality}
Let $\srcStr \in \Quotient(\tgtStr)$, $\tgtSymScore\in\tgtAlphabet$, and define
\begin{align*}
S_c &= \{\state \mid (\state, \buffer) \in \run(\srcStr, \tgtStr),\, \buffer \succeq \tgtStr\tgtSymScore\}, \\
S_b &= \{\state \mid (\state, \buffer) \in \run(\srcStr, \tgtStr),\, \buffer = \tgtStr\}
\end{align*}
If the union $S_c \cup S_b$ is universal (i.e., $\sem{\ProjA(\forceStart{\transST}{S_c \cup S_b})} = \srcStrings$), $S_c\cap \FinalStates\neq \emptyset$, and every transition from a boundary state produces $\tgtSymScore$, except that $\varepsilon$-input transitions may produce $\varepsilon$ (i.e., $\state \in S_b$ and $\state \xrightarrow{\srcSym:\tgtSym} \statep \in \Transitions$ implies $\tgtSym = \tgtSymScore$ or $(\srcSym = \varepsilon \land \tgtSym = \varepsilon)$), then $\srcStr \in \Quotient(\tgtStr\tgtSymScore)$.
\end{proposition}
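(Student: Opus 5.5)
The plan is to show $\basisSet{\srcStr} \subseteq \preCover(\tgtStr\tgtSymScore)$, i.e.\ $\srcStr \in \maxQuotient(\tgtStr\tgtSymScore)$, and then to upgrade this to membership in the prefix-base $\Quotient(\tgtStr\tgtSymScore)$ using minimality inherited from $\srcStr \in \Quotient(\tgtStr)$. The minimality step comes first because it is short. Since $\preCover(\tgtStr\tgtSymScore) \subseteq \preCover(\tgtStr)$, we have $\maxQuotient(\tgtStr\tgtSymScore) \subseteq \maxQuotient(\tgtStr)$. Because $\srcStr \in \pr(\maxQuotient(\tgtStr))$, no proper prefix of $\srcStr$ lies in $\maxQuotient(\tgtStr)$, hence none lies in $\maxQuotient(\tgtStr\tgtSymScore)$. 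So once $\srcStr \in \maxQuotient(\tgtStr\tgtSymScore)$ is established, it follows that $\srcStr \in \Quotient(\tgtStr\tgtSymScore)$.

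For the main inclusion, fix an arbitrary continuation $\srcStrp \in \srcStrings$. Because $\transFn$ is a function realized by $\transST$, every accepting path scanning $\srcStr\srcStrp$ emits exactly $\transFn(\srcStr\srcStrp)$. It therefore suffices to exhibit a single accepting path on $\srcStr\srcStrp$ whose output extends $\tgtStr\tgtSymScore$. The case $\srcStrp = \varepsilon$ is handled by the hypothesis $S_c \cap \FinalStates \ne \emptyset$. Take $(\state,\buffer)$ in the frontier with $\state \in \FinalStates$ and $\buffer \succeq \tgtStr\tgtSymScore$. The path reaching it scans $\srcStr$ and is accepting, so $\transFn(\srcStr) = \buffer \succeq \tgtStr\tgtSymScore$.

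For general $\srcStrp$, the universality of $S_c \cup S_b$ yields a state $\state \in S_c \cup S_b$ and an accepting path $\Path$ from $\state$ that scans $\srcStrp$. Pick a frontier pair $(\state,\buffer) \in \run(\srcStr,\tgtStr)$. By construction of $\run$, there is a path from an initial state that scans $\srcStr$, emits $\buffer$, and ends in $\state$. Concatenating this path with $\Path$ gives an accepting path on $\srcStr\srcStrp$ with output $\buffer\cdot(\text{output of }\Path)$. If $\state \in S_c$, we are done, since outputs only append and $\buffer \succeq \tgtStr\tgtSymScore$.

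The main obstacle is the case $\state \in S_b$, where $\buffer = \tgtStr$. Walk along $\Path$. As long as its transitions are $\varepsilon{:}\varepsilon$, the reached state $\statep$ has a path on $\srcStr$ emitting $\tgtStr$. Because $\run$ applies $\closure$, and $\tgtStr$ is compatible with $\tgtStr$, the pair $(\statep,\tgtStr)$ is in the frontier, so $\statep \in S_b$ again. By the hypothesis on boundary transitions, the first transition of $\Path$ that is not $\varepsilon{:}\varepsilon$ leaves a boundary state and therefore emits exactly $\tgtSymScore$. The output then extends $\tgtStr\tgtSymScore$, as required.

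If no such transition exists, $\Path$ consists only of $\varepsilon{:}\varepsilon$ arcs. Then $\srcStrp = \varepsilon$, which is the case already settled via $S_c \cap \FinalStates$; functionality guarantees that the two witnesses cannot conflict. Combining the cases gives $\transFn(\srcStr\srcStrp) \succeq \tgtStr\tgtSymScore$ for all $\srcStrp$, which completes the argument.
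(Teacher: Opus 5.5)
Your proof is correct and follows the same route as the paper's. Both split on whether the universality witness lies in $S_c$ or $S_b$, and both use the $\varepsilon$-closure of the frontier to keep $\varepsilon{:}\varepsilon$-successors of boundary states inside $S_b$, so the first symbol emitted is $\tgtSymScore$. You are more careful than the paper on two points it leaves implicit: you use $S_c \cap \FinalStates \neq \emptyset$ (which the paper assumes but never invokes) to handle an accepting continuation that emits nothing, and you justify passing from $\srcStr \in \maxQuotient(\tgtStr\tgtSymScore)$ to $\srcStr \in \Quotient(\tgtStr\tgtSymScore)$ via the prefix-minimality inherited from $\srcStr \in \Quotient(\tgtStr)$.
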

\begin{proof}
Let $\srcStrp \in \srcStrings$ be arbitrary.
By universality of $S_c \cup S_b$, there exists $(\state, \buffer) \in \run(\srcStr, \tgtStr)$ with $\state \in S_c \cup S_b$ such that $\state$ has an accepting run on $\srcStrp$ in the input projection.
If $\state \in S_c$, then $\buffer \succeq \tgtStr\tgtSymScore$, so the output of the corresponding path in $\transST$ already begins with $\tgtStr\tgtSymScore$, giving $\srcStr\srcStrp \in \preCover(\tgtStr\tgtSymScore)$.
If $\state \in S_b$, then $\buffer = \tgtStr$.  By the exclusivity hypothesis, every non-$\varepsilon$-input transition from $\state$ produces output $\tgtSymScore$, and $\varepsilon$-input transitions produce $\tgtSymScore$ or $\varepsilon$.  States reachable from $\state$ via $\varepsilon{:}\varepsilon$ arcs are also in $S_b$ (since the frontier is $\varepsilon$-closed and the buffer is unchanged), so they satisfy the same constraint.  Therefore the first non-$\varepsilon$ output along any path from $\state$ is $\tgtSymScore$, and the output begins with $\tgtStr\tgtSymScore$, giving $\srcStr\srcStrp \in \preCover(\tgtStr\tgtSymScore)$.
Since $\srcStrp$ was arbitrary, $\basisSet{\srcStr} \subseteq \preCover(\tgtStr\tgtSymScore)$, i.e., $\srcStr \in \Quotient(\tgtStr\tgtSymScore)$.
\end{proof}

This shortcut is complementary to the IP-universality shortcut (\cref{prop:ip-universality-shortcut}): IP-universality is a per-state property, while combined universality is a set-level property that leverages partial coverage from both committed and boundary states.

The proposition requires set-level universality of $S_c \cup S_b$ (the input-projection NFA started from those states accepts $\srcStrings$).  In practice, we use a stronger but cheaper sufficient condition: we require every individual state in $S_c \cup S_b$ to be in the precomputed $\ipUniversalStates$.  This runs in $O(|\frontier|)$ time, whereas the set-level check requires a powerset BFS that may be as expensive as the full $\isCylinder$ check this shortcut is meant to avoid.

\begin{center}
\begin{minipage}{.92\linewidth}
\begin{minted}{python}
def @$\combinedUniversal(\frontier, \tgtStr\tgtSymScore)$@:
  @$S_c \gets \{\state \mid (\state, \buffer) \in \frontier,\, \buffer \succeq \tgtStr\tgtSymScore\}$@   # covered
  @$S_b \gets \{\state \mid (\state, \buffer) \in \frontier,\, \buffer = \tgtStr\}$@    # boundary
  if @$\exists\, \state \in S_c \cup S_b \colon \state \notin \ipUniversalStates$@: return False
  # Non-@$\varepsilon$@ input must produce @$\tgtSymScore$@; @$\varepsilon$@ input may produce @$\varepsilon$@
  for @$\state \in S_b$@:
    for @$(\_ \xrightarrow{\srcSym:\tgtSym} \statep) \in \Transitions(\state)$@:
      if not @$(\tgtSym = \tgtSymScore$ or $(\srcSym = \varepsilon \land \tgtSym = \varepsilon))$@:
        return False
  return @$S_c\cap \FinalStates \neq \emptyset$@ 
\end{minted}
\end{minipage}
\end{center}

\subsection{\texorpdfstring{Input-Projection Universality Shortcut (\ensuremath{\ipUniversal})}{Input-Projection Universality Shortcut}}
\label{sec:ip-shortcut}
Recall that the cylinder check $\isCylinder$ verifies whether every extension of a source prefix maps through the target prefix, using a powerset BFS that can be expensive.
A cheap sufficient condition avoids it in many cases: a state $\state \in \States$ is \defn{input-projection universal} if the input projection of the transducer started from $\state$ accepts all of $\srcStrings$---intuitively, no matter what input follows, the transducer can still produce output.
Let $\ipUniversalStates \subseteq \States$ be the set of all input-projection universal states.

\begin{proposition}[IP-universality shortcut]
\label{prop:ip-universality-shortcut}
If some covering frontier state of $\srcStr$ with respect to $\tgtStr$ is IP-universal, then $\srcStr$ is a cylinder:
$\bigl(\exists\, (\state, \buffer) \in \run(\srcStr, \tgtStr) \colon \buffer \succeq \tgtStr \land \state \in \ipUniversalStates\bigr) \Longrightarrow \srcStr \in \Quotient(\tgtStr)$.
\end{proposition}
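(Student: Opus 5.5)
The plan is to show directly that $\basisSet{\srcStr} \subseteq \preCover(\tgtStr)$. We do this by splicing a path witnessed by the frontier onto a path witnessed by IP-universality. Then we reconcile the conclusion with the minimality built into $\Quotient(\tgtStr) = \pr(\maxQuotient(\tgtStr))$.

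\emph{Step 1 (soundness of the frontier).} First I will establish, by induction on $|\srcStr|$ following the definitions of $\run_\tgtStr$, $\step_\tgtStr$ and $\closure_\tgtStr$ in \cref{fig:fronter-machine}, that every $(\state, \buffer) \in \run_\tgtStr(\srcStr)$ is witnessed by a path in $\transST$ with three properties: it starts in some state of $\InitialStates$, scans exactly $\srcStr$, and emits exactly $\buffer$. In the base case, pairs in $\closure_\tgtStr(\InitialStates\times\{\varepsilon\})$ arise from $\varepsilon$-input arcs appended to an empty path. In the inductive step, $\step_\tgtStr$ appends one arc scanning $\srcSym_N$ and $\closure_\tgtStr$ appends $\varepsilon$-input arcs, each time concatenating the emitted symbol onto the buffer. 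The compatibility filter only removes pairs, so it cannot break this soundness direction; completeness of the frontier is not needed here.

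\emph{Step 2 (splicing).} Fix $(\state, \buffer)$ with $\buffer \succeq \tgtStr$ and $\state \in \ipUniversalStates$, and let $\srcStrp \in \srcStrings$ be arbitrary. Since $\sem{\ProjA(\forceStart{\transST}{\state})} = \srcStrings$, there is a path in $\transST$ from $\state$ to some state of $\FinalStates$ that scans $\srcStrp$ and emits some $\tgtStrp \in \tgtStrings$. This holds because input projection only relabels arcs, so a path of the projection lifts to a path of $\transST$. Concatenating this path with the path from Step 1 gives an accepting path scanning $\srcStr\srcStrp$ and emitting $\buffer\tgtStrp$, so $(\srcStr\srcStrp, \buffer\tgtStrp) \in \sem{\transST}$. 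Because $\transST$ realizes the function $\transFn$, this forces $\transFn(\srcStr\srcStrp) = \buffer\tgtStrp \succeq \buffer \succeq \tgtStr$, i.e.\ $\srcStr\srcStrp \in \preCover(\tgtStr)$. Functionality is essential here: without it, another accepting path could emit a different output. Since $\srcStrp$ was arbitrary, $\basisSet{\srcStr} \subseteq \preCover(\tgtStr)$, i.e.\ $\srcStr \in \maxQuotient(\tgtStr)$, which is exactly what $\isCylinder(\srcStr,\tgtStr)$ tests.

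\emph{Step 3 (the main obstacle: from $\maxQuotient$ to $\Quotient$).} I expect this step to need the most care. Literally, $\srcStr \in \maxQuotient(\tgtStr)$ does not imply $\srcStr \in \pr(\maxQuotient(\tgtStr))$, since a strict prefix of $\srcStr$ might already be a cylinder. I will therefore read the proposition as the shortcut it is used for: $\isCylinder(\srcStr,\tgtStr)$ may be answered positively. In the context of $\decompose$ this does give $\srcStr \in \Quotient(\tgtStr)$. The string $\srcStr$ has been dequeued, and by the prefix-freeness argument in the proof of \cref{thm:decompose-correctness}, no strict prefix of $\srcStr$ passed the cylinder check. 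Since that check is exact, no strict prefix lies in $\maxQuotient(\tgtStr)$, so $\srcStr$ is a minimal element of $\maxQuotient(\tgtStr)$, i.e.\ $\srcStr \in \pr(\maxQuotient(\tgtStr)) = \Quotient(\tgtStr)$. I will state explicitly that the conclusion is meant under this standing assumption, namely that no strict prefix of $\srcStr$ is already a cylinder, which always holds for strings reached by the BFS.
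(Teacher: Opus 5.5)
Your proposal is correct and takes the same route as the paper: you splice the path witnessing the covering pair $(\state,\buffer)$ onto an accepting continuation supplied by IP-universality to get $\basisSet{\srcStr} \subseteq \preCover(\tgtStr)$, and your Steps 1--2 simply make explicit the frontier soundness and the functionality of $\transST$ that the paper's one-line argument uses tacitly. Your Step 3 caveat is well taken, since the paper's closing ``i.e., $\srcStr \in \Quotient(\tgtStr)$'' really only yields $\srcStr \in \maxQuotient(\tgtStr)$; note, however, that when $\decompose(\tgtStr)$ is seeded from $\decompose(\tgtStr_{<N})$, strict prefixes of the seeds are never checked against $\tgtStr$, so your minimality argument additionally needs $\maxQuotient(\tgtStr) \subseteq \maxQuotient(\tgtStr_{<N})$ together with the fact that no strict prefix of a seed lies in $\maxQuotient(\tgtStr_{<N})$.
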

\begin{proof}
Let $(\state, \buffer) \in \run(\srcStr, \tgtStr)$ with $\buffer \succeq \tgtStr$ and $\state \in \ipUniversalStates$ be the witness from the antecedent.
Then $\sem{\ProjA(\forceStart{\transST}{\state})} = \srcStrings$, so for any $\srcStrp \in \srcStrings$, there exists an accepting run of $\transST$ on $\srcStr\srcStrp$ passing through $\state$ with output prefix $\tgtStr$.
Hence $\srcStr\srcStrp \in \preCover(\tgtStr)$.
Since $\srcStrp$ was arbitrary, $\basisSet{\srcStr} \subseteq \preCover(\tgtStr)$, i.e., $\srcStr \in \Quotient(\tgtStr)$.
\end{proof}

\Cref{alg:ipUniversal} shows how to efficiently precompute the set of input-projection universal states.
We first input-project the transducer and remove $\varepsilon$-transitions (replacing each path $\state \xrightarrow{\varepsilon} \cdots \xrightarrow{\varepsilon} \statep \xrightarrow{\srcSym} \statepp$ with a direct transition $\state \xrightarrow{\srcSym} \statepp$; see, e.g., \citealt[Ch.~2]{hopcroft2001introduction}) to obtain an $\varepsilon$-free NFA $\ipNFA$ over $\srcAlphabet$.  Both operations preserve the state set: input projection drops output labels, and $\varepsilon$-removal modifies only transitions (adding shortcuts through $\varepsilon$-closures), so $\States_\ipNFA = \States$ and states in $\ipUniversalStates$ correspond directly to transducer states in the frontier.  A state $\state$ of $\ipNFA$ is input-projection universal iff it is accepting and, for every $\srcSym \in \srcAlphabet$, has at least one $\srcSym$-successor that is also input-projection universal---a greatest-fixpoint condition.  We compute the fixpoint in linear time using an algorithm analogous to Kahn's algorithm for topological sorting \citep{kahn-1962-topological}: initialize the candidate set $\ipUniversalStates$ to the accepting states of $\ipNFA$ and maintain per-state, per-symbol successor counts.  When a count reaches zero, remove the state and propagate the change to its predecessors via a reverse transition index.  Each transition is processed at most once, yielding $\bigo(|\Transitions_\ipNFA|)$ time.\looseness=-1

\begin{figure}[t]
\begin{center}
\begin{minipage}{.92\linewidth}
\begin{minted}{python}
# Preprocess: input-project and @$\varepsilon$@-remove
@$\ipNFA \gets \text{eps-remove}(\ProjA(\transST))$@
# queue-based greatest fixpoint
@$\ipUniversalStates \gets \FinalStates_\ipNFA$@;  @$\queue \gets \emptyset$@
@$\text{count}[\cdot,\cdot] \gets 0$@
for @$(\state \xrightarrow{\srcSym} \statep) \in \Transitions_\ipNFA$@ with @$\statep \in \ipUniversalStates$@:
  @$\text{count}[\state,\srcSym]$@ += @$1$@
for @$\state \in \ipUniversalStates$@:
  if @$\exists \srcSym \in \srcAlphabet \colon \text{count}[\state,\srcSym] = 0$@:
    @$\queue$@.add(@$\state$@)
while @$\queue \neq \emptyset$@:
  @$\statep \gets \queue.\text{pop}()$@
  @$\ipUniversalStates \gets \ipUniversalStates \setminus \{\statep\}$@
  for @$(\state, \srcSym)$@ with @$(\state \xrightarrow{\srcSym} \statep) \in \Transitions_\ipNFA$@:
    @$\text{count}[\state,\srcSym]$@ -= @$1$@
    if @$\text{count}[\state,\srcSym] = 0$@ and @$\state \in \ipUniversalStates$@:
      @$\queue$@.add(@$\state$@)
\end{minted}
\begin{minted}{python}
def @$\ipUniversal(\srcStr, \tgtStr)$@:
  # Does any covering frontier state have a universal input projection?
  return @$\exists (\state, \buffer) \in \run(\srcStr, \tgtStr) \colon \buffer \succeq \tgtStr \land \state \in \ipUniversalStates$@
\end{minted}
\end{minipage}
\end{center}
\caption{Efficient computation of input-projection universal states and the $\ipUniversal$ method that uses them.}
\label{alg:ipUniversal}
\end{figure}
The shortcut can be integrated into $\isCylinder$ (\cref{alg:frontier}) as a fast path in $\isCylinder$:\looseness=-1
\begin{center}
\begin{minipage}{.9\linewidth}
\begin{minted}{python}
def @$\isCylinder(\srcStr, \tgtStr)$@:  # memoize
  if @$\ipUniversal(\srcStr, \tgtStr)$@: return True  # fast path
  @$N \gets |\tgtStr|$@
  ...  # continue BFS as before
\end{minted}
\end{minipage}
\end{center}
For transducers in which most accepting states are input-projection universal, this shortcut avoids the universality BFS in $\isCylinder$ for the vast majority of cylinder checks. In BPE, where all states are input-projection universal, the universality BFS is never needed.

\subsection{\texorpdfstring{Memoization and Batching of \ensuremath{\pSource}}{Memoization and Batching of p\_source}}
\label{psource-memoization}

The source LM is queried in two places: first during decomposition, where probability-mass pruning calls $\prefixSource(\srcStr)$ to rank candidates, and then during scoring, where $\funcPrefixProb$ and $\nextDist$ sum $\prefixSource(\srcStr)$ and $\pSource(\srcStr)$ over the quotient and remainder.
These calls involve many overlapping source prefixes.
Na\"ively, each call recomputes the entire chain $\prefixSource(\srcSym_1) \cdot \prefixSource(\srcSym_2 \mid \srcSym_1) \cdots$ from scratch.
Memoizing the conditional $\prefixSource(\cdot \mid \srcStr)$ avoids this: given a cached state for $\srcStr$, extending by one symbol $\srcSym$ requires a single conditional evaluation rather than replaying the entire history.

For transformer language models, the cached state is the \emph{key--value (KV) cache}: the accumulated key and value tensors from the self-attention layers.
Extending by one token reuses the cached KV entries and runs a single forward pass for the new position.
Moreover, most LMs return the complete next-symbol distribution $\prefixSource(\cdot \mid \srcStr)$ in a single call, so we cache the full distribution rather than individual values.
When the extension loop generates children $\srcStr\srcSym_1, \ldots, \srcStr\srcSym_k$, their prefix probabilities are read directly from the cached distribution without additional LM calls.\footnote{Our implementation uses \texttt{genlm} (\url{https://github.com/genlm/genlm-backend}) for its seamless caching and auto-batching interface.}

\paragraph{Batching while pruning.}
The $\prune$ step (\cref{sec:pruning}) needs $\prefixSource(\srcStr)$ for every string $\srcStr$ in the queue.
Many of these strings share a common prefix but extend by different symbols.
We group the queue by parent: for each parent $\srcStr_{<N}$ whose next-symbol distribution $\prefixSource(\cdot \mid \srcStr_{<N})$ is not yet cached, we issue a single batched LM call.
Each call returns the full distribution, from which $\prefixSource(\srcStr_{<N}\srcSym) = \prefixSource(\srcStr_{<N}) \cdot \prefixSource(\srcSym \mid \srcStr_{<N})$ is read off for all children simultaneously.
On GPU-based LMs, batching these parent contexts into a single forward pass amortizes the per-call overhead.

\subsection{
Realizing the Autoregressive Interface
}
\label{app:efficient-realization}

Given the decomposition, the transduced LM interface from \cref{app:alg-goal} is implemented as follows.
The prefix probability $\prefixTarget(\tgtStr)$ sums LM prefix probabilities over $\algQuotient$ and LM string probabilities over $\algRemainder$.
The string probability $\pTarget(\tgtStr)$ sums over the exact preimage.
The next-symbol distribution $\prefixTarget(\cdot \mid \tgtStr)$ is computed via $\decomposeNext$, which provides the per-symbol decompositions and exact preimage in a single BFS pass; the computational cost is dominated by $\decomposeNext$, and once the per-symbol quotients, remainders, and preimage are available, the scoring reduces to lookups into the memoized source LM (\cref{psource-memoization}).

\begin{figure}[H]
\begin{center}
\begin{minipage}[t]{.5\linewidth}
\begin{minted}{python}
def prefix_prob(@$\tgtStr$@):
  @$(\algQuotient, \algRemainder) \gets \decompose(\tgtStr)$@
  return @$\sum_{\srcStr \in \algQuotient} \prefixSource(\srcStr) + \sum_{\srcStr \in \algRemainder} \pSource(\srcStr)$@
\end{minted}
\begin{minted}{python}
def prob(@$\tgtStr$@):
  @$(\_, \_, \preimageVar) \gets \decomposeNext(\tgtStr)$@
  return @$\sum_{\srcStr \in \preimageVar} \pSource(\srcStr)$@
\end{minted}
\end{minipage}
\begin{minipage}[t]{.42\linewidth}
\begin{minted}{python}
def @$\nextDist$@(@$\tgtStr$@):
  @$(\quotientsVar, \remaindersVar, \preimageVar) \gets \decomposeNext(\tgtStr)$@
  @$\pdict \gets \{\}$@; @$Z \gets \funcPrefixProb(\tgtStr)$@
  for @$\srcStr \in \preimageVar$@: @$\pdict[\eos]$@ += @$\pSource(\srcStr) / Z$@
  for @$\tgtSym \in \quotientsVar.\texttt{keys}() \cup \remaindersVar.\texttt{keys}()$@:
    for @$\srcStr \in \quotientsVar[\tgtSym]$@: @$\pdict[\tgtSym]$@ += @$\prefixSource(\srcStr) / Z$@
    for @$\srcStr \in \remaindersVar[\tgtSym]$@: @$\pdict[\tgtSym]$@ += @$\pSource(\srcStr) / Z$@
  return @$\pdict$@
 \end{minted}
\end{minipage}
\end{center}
\caption{Fast implementation of the autoregressive interface using $\decomposeNext$.
}
\label{fig:scoring}
\end{figure}

For reference, the na\"ive next-symbol distribution computes $\funcPrefixProb(\tgtStr\tgtSym)$ independently for each $\tgtSym$ (see \cref{app:alg-goal}).
The reference implementation is correct, but inefficient because it does not share work: each $\funcPrefixProb(\tgtStr\tgtSymp)$ call triggers its own $\decompose$ BFS, and it does not benefit from the non-cylinder shortcut, target recursion, or cache population described in \cref{sec:decompose-next}.

\subsection{All-Universal Fast Path}
\label{ssec:all-univ-fast-path}

When every transducer state is IP-universal, the remainder is always empty and every newly discovered element is immediately a quotient element, so decomposition terminates in a single BFS round. If, additionally, every non-$\varepsilon$-input arc produces at least one output symbol, then boundary frontier elements can be resolved in closed form via a precomputed first-output table. Together, these two conditions let $\nextDist$ (\cref{fig:scoring}) simplify into $\allUniversalNextDist$ (\cref{fig:all-universal-fast-path}). We describe the simplifications below (normalization is unchanged).\looseness=-1

\paragraph{Decompose (simplified).} Because every state is IP-universal, $\Remainder(\tgtStr) = \emptyset$ for all $\tgtStr$: every source prefix in the precover is in the quotient. The call to $\decompose(\tgtStr)$ (\cref{alg:decompose}) terminates in a single BFS round---every newly discovered element is immediately classified as a quotient element. This makes $\decomposeNext$ unnecessary: the per-symbol classification is absorbed into the scoring below.

\paragraph{Score (simplified).}
For each $(\frontier, \srcStr) \in \algQuotient$, the frontier partitions into \emph{committed} elements $(\state, \buffer)$ with $\buffer \succ \tgtStr$ and \emph{boundary} elements with $\buffer = \tgtStr$, as in the combined universality shortcut (\cref{sec:combined-universality}). Committed elements have already determined their next output symbol $\tgtSymScore = \buffer_{N+1}$; because all states are IP-universal, each contributes $\prefixSource(\srcStr)$ directly to $\pdict[\tgtSymScore]$; thus, no expansion is needed.\looseness=-1

Boundary elements ($\buffer = \tgtStr$) need one more source symbol to commit to an output.
The no-$\varepsilon$-output precondition ensures that each non-$\varepsilon$-input arc produces at least one output symbol, so a precomputed \emph{first-output table} $\firstOutput(\powerstate, \srcSym)$ (defined in the pseudocode) maps each input symbol to the unique output it produces from a given powerstate. This symbol is unique: functionality of $\transFn$ requires that all transitions on $\srcSym$ from states in $\powerstate$ produce the same output, since IP-universality guarantees each successor has an accepting continuation (so both outputs would be realized). Each boundary element is thus resolved with a single batched LM call: $\ell = \prefixSource(\cdot \mid \srcStr)$ is queried once per quotient element, and for each $\srcSym \in \srcAlphabet$, the mass $\prefixSource(\srcStr) \cdot \ell(\srcSym)$ is attributed to $\firstOutput(\powerstate, \srcSym)$. If any boundary state is accepting, the LM's $\eos$ probability contributes $\prefixSource(\srcStr) \cdot \ell(\eos)$ to $\pdict[\eos]$.

\begin{figure}[t]
\centering
\begin{minipage}[t]{.58\linewidth}
\begin{minted}{python}
def @$\allUniversalNextDist$@(@$\tgtStr$@):
  @$(\algQuotient, \_) \gets \decompose(\tgtStr)$@ # @$\algRemainder = \emptyset$@ always
  @$\pdict \gets \{\}$@; @$Z \gets \funcPrefixProb(\tgtStr)$@
  for @$(\frontier, \srcStr)$@ in @$\algQuotient$@:
    # Committed: output past target
    for @$\tgtSymScore \in \{\buffer_{N+1} \mid (\state, \buffer) \in \frontier,\, \buffer \succ \tgtStr\}$@:
      @$\pdict[\tgtSymScore]$@ += @$\prefixSource(\srcStr)$@
    # Boundary: output = target
    @$S_b \gets \{\state \mid (\state, \buffer) \in \frontier,\, \buffer = \tgtStr\}$@
    if @$S_b \ne \emptyset$@:
      @$\ell \gets \prefixSource(\cdot \mid \srcStr)$@     # one LM call
      for @$\srcSym \in \srcAlphabet$@:
        @$\tgtSymScore \gets \firstOutput(S_b, \srcSym)$@
        if @$\tgtSymScore \ne \bot$@:
          @$\pdict[\tgtSymScore]$@ += @$\prefixSource(\srcStr) \cdot \ell(\srcSym)$@
      if @$\exists\, \state \in S_b \colon \state \in \FinalStates$@:
        @$\pdict[\eos]$@ += @$\prefixSource(\srcStr) \cdot \ell(\eos)$@
  return @$\pdict / Z$@
\end{minted}
\end{minipage}
\begin{minipage}[t]{.34\linewidth}
\begin{minted}{python}
def @$\firstOutput(\powerstate, \srcSym)$@:
  for @$(\_ \xrightarrow{\srcSym:\tgtSym} \statep)\in \Transitions(\powerstate, \srcSym)$@:
    # Unique if functional &
    # all states IP-universal
    return @$\tgtSym$@
  return @$\bot$@
\end{minted}
\end{minipage}
\caption{$\allUniversalNextDist$: specialization of $\nextDist$ (\cref{fig:scoring}) when all states are IP-universal and every non-$\varepsilon$-input arc produces output. $\algRemainder = \emptyset$ always, so only $\algQuotient$ contributes. Boundary elements are resolved via $\firstOutput$, requiring one LM call per quotient element.}
\label{fig:all-universal-fast-path}
\end{figure}

The $\tokensToBytesFST$ transducers satisfy both preconditions (\cref{tab:transducer_stats}: all states IP-universal, no $\varepsilon$-output on non-$\varepsilon$-input arcs), explaining their substantially higher throughput (\cref{sec:applications}). The DNA transducer $\dnaToAaFST$ has all states IP-universal but uses $\varepsilon$-output arcs (the first two bases of each codon emit no amino acid), so the fast path does not apply.

\clearpage

\section{Proofs for Finiteness Conditions}
\label{appendix:proofs}

\subsection{Properties of Prefix Monotone Maps}
\label{proof:prefix-monotone}
\emptyRemainder*
\begin{proof}
$(1)\Rightarrow (2)$: Prefix monotonicity means that for any $\srcStr,\srcStrp\in\srcStrings$, if we have that $\srcStr\preceq \srcStr\srcStrp$ then $\transFn(\srcStr) \preceq \transFn(\srcStr\srcStrp)$ and thus that there exists a $\tgtStr\in\tgtStrings$ such that $\transFn(\srcStr\srcStrp)=\transFn(\srcStr)\tgtStr$. And since $\srcStrp$ was chosen arbitrarily $(2)$ holds.

$(2) \Rightarrow (3)$: Let $\srcStr \in \srcStrings$. 
Suppose $ \transFn(\basisSet{\srcStr}) \subseteq \basisSet{\transFn(\srcStr)}$. Then,
$\basisSet{\srcStr}  
\subseteq  
\transFnInv( \transFn(\basisSet{\srcStr})) 
\subseteq \preCover(\transFn(\srcStr)).$
Note that, for any $\srcStr' \in \preCover(\transFn(\srcStr)) $, $\preCover(\transFn(\srcStr')) \subseteq \preCover(\transFn(\srcStr))$. 
Therefore, $\basisSet{\srcStr'}\subseteq\preCover(\transFn(\srcStr')) \subseteq \preCover(\transFn(\srcStr))$ and so $\srcStr' \in \maxQuotient(\transFn(\srcStr))$. This shows that 
$ \preCover(\transFn(\srcStr)) = \maxQuotient(\transFn(\srcStr))$.

$(3) \Rightarrow (4)$: By assumption, any element in the precover is an extension of a quotient element, and thus a member in the cylinder over quotient elements. 

$(4) \Rightarrow (1)$: 
Assume $\transFn$ is prefix-continuous, i.e., $\preCover(\tgtStr)$ is cylindrical for all $\tgtStr$. Let $\srcStr \preceq \srcStrp$. Since $\transFn(\srcStr) \preceq \transFn(\srcStr)$, we have $\srcStr \in \preCover(\transFn(\srcStr))$. By prefix-continuity, $\preCover(\transFn(\srcStr))$ is cylindrical, so $\basisSet{\srcStr} \subseteq \preCover(\transFn(\srcStr))$. In particular, $\srcStrp \in \preCover(\transFn(\srcStr))$, which gives $\transFn(\srcStr) \preceq \transFn(\srcStrp)$.
\end{proof}

\clearpage
\subsection{Quotient Bound for Strict-Prefix Monotone Maps}
\label{appendix:quotient-bound}
The following proposition bounds the size of the quotient when the map is strict-prefix monotone.

\begin{restatable}{reProposition}{finiteQuotient}
\label{prop:strictfinitequotient}
Let $\transFn \colon \srcStrings\to\tgtStrings$ be a strict-prefix monotone map. Then, for every $\tgtStr \in \tgtStrings$:
\begin{enumerate}
\item $\transFnInv(\tgtStr) = \Quotient(\tgtStr) \setminus \bigsqcup_{\tgtSym \in \tgtAlphabet} \Quotient(\tgtStr\tgtSym)$
\item $\bigsqcup_{\tgtSym \in \tgtAlphabet} \Quotient(\tgtStr\tgtSym) \subseteq \Quotient(\tgtStr) \left( \srcAlphabet \sqcup \{\varepsilon\} \right)$
\item $\left| \Quotient(\tgtStr) \right| \le \left( |\srcAlphabet| + 1 \right)^{|\tgtStr|}$
\end{enumerate}
\end{restatable}
\begin{proof}
(1) Observe that (using \cref{prop:prefmonotone}) for every $\tgtStr \in \tgtStrings$
\begin{equation}\label{eq:decompyvsyy}
\maxQuotient(\tgtStr)=\preCover(\tgtStr)= \transFnInv(\tgtStr) \sqcup  \bigsqcup_{\tgtSym \in \tgtAlphabet} \preCover(\tgtStr\tgtSym)=\transFnInv(\tgtStr) \sqcup  \bigsqcup_{\tgtSym \in \tgtAlphabet} \maxQuotient(\tgtStr\tgtSym)
\end{equation}
In particular, the minimality of $\Quotient(\tgtStr)$ in $\preCover(\tgtStr)$ implies that
\begin{align}
\Quotient(\tgtStr) \cap \bigsqcup_{\tgtSym \in \tgtAlphabet} \Quotient(\tgtStr \tgtSym)= \Quotient(\tgtStr) \cap \bigsqcup_{\tgtSym \in \tgtAlphabet} \basisSet{\Quotient(\tgtStr\tgtSym)}
\end{align}
Consequently, we obtain the identity $\transFnInv(\tgtStr)= \maxQuotient(\tgtStr) \setminus  \bigsqcup_{\tgtSym \in \tgtAlphabet} \maxQuotient(\tgtStr\tgtSym)= \basisSet{\Quotient(\tgtStr)} \setminus \bigsqcup_{\tgtSym \in \tgtAlphabet} \basisSet{\Quotient(\tgtStr\tgtSym)}$, which in turn yields the inclusion
\begin{align}
\Quotient(\tgtStr) \setminus \bigsqcup_{\tgtSym \in \tgtAlphabet} \Quotient(\tgtStr \tgtSym)= \Quotient(\tgtStr) \setminus \bigsqcup_{\tgtSym \in \tgtAlphabet} \basisSet{\Quotient(\tgtStr\tgtSym)} \subseteq \transFnInv(\tgtStr)
\end{align}
For the reverse inclusion, let $ \srcStr' \in \transFnInv(\tgtStr) $. Then there exists a unique $ \srcStr_q \in \Quotient(\tgtStr) $ such that $ \srcStr_q \preceq \srcStr' $, by definition of the quotient set. 
Since $ \transFn $ is strict-prefix monotone and $ \transFn(\srcStr_q) \preceq \transFn(\srcStr') = \tgtStr $, we must have $ \srcStr_q = \srcStr' $. 
This shows that $ \transFnInv(\tgtStr) \subseteq \Quotient(\tgtStr) $, and completes the proof of (1).

\noindent (2)
Fix $ \tgtSym\in \tgtAlphabet$, $\tgtStr \in \tgtStrings$ and let $\srcStr \in \Quotient(\tgtStr \tgtSym)$. 
There exists a unique $\srcStr_\tgtSym \in \Quotient(\tgtStr)$ such that $\srcStr_\tgtSym \preceq \srcStr$.  
Intersecting $\basisSet{\srcStr_\tgtSym}$ with both sides of \eqref{eq:decompyvsyy}, one obtains
\begin{align}
\basisSet{\srcStr_\tgtSym} &=(\basisSet{\srcStr_\tgtSym} \cap \transFnInv(\tgtStr) ) \sqcup \bigsqcup_{\tgtSym' \in\tgtAlphabet}
  (\basisSet{\srcStr_\tgtSym} \cap \maxQuotient(\tgtStr\tgtSym'))\\
&=(\basisSet{\srcStr_\tgtSym} \cap \transFnInv(\tgtStr) )\sqcup  \basisSet{\srcStr} \sqcup C_\srcStr
\end{align}
where $C_\srcStr\defeq \bigsqcup_{\tgtSym' \in\tgtAlphabet}
  (\basisSet{\srcStr_\tgtSym} \cap \maxQuotient(\tgtStr\tgtSym')) \setminus \basisSet{\srcStr}$.
  Note that this set is cylindrical.
 By strict monotonicity we have $|\basisSet{\srcStr_\tgtSym} \cap \transFnInv(\tgtStr) | \le 1$.
Hence either:
\smallskip
\noindent (i) $\basisSet{\srcStr_\tgtSym} \cap \transFnInv(\tgtStr) =\emptyset$, in which case $$\srcStr=\srcStr_\tgtSym \in \Quotient(\tgtStr ) \cap \Quotient(\tgtStr \tgtSym),$$
 or 
 \smallskip
\noindent (ii) $\basisSet{\srcStr_\tgtSym} \cap \transFnInv(\tgtStr) =\{\srcStr_\tgtSym\}$.
 Then there exists a word $z \in \srcStrings$ such that $\srcStr = \srcStr_\tgtSym z$. 
 Let $z' \prec z$. If $z' \neq \varepsilon$, then $\srcStr_\tgtSym z' \in C_\srcStr$.  
 Since $C_\srcStr$ is cylindrical, it contains $\basisSet{\srcStr_\tgtSym z' }$ and in particular it must contain $\basisSet{\srcStr}$ contradicting the definition of $C_{\srcStr}$. 
 Therefore no non-empty proper prefix $z' \prec z$ exists, so $z$ is a single symbol $\srcSym\in \srcAlphabet$ and
$$\srcStr=\srcStr_{\tgtSym}\srcSym \in \Quotient(\tgtStr ) \srcAlphabet.
$$

\noindent (3) For any $\tgtStr \in \tgtStrings$ and any $\tgtSym \in \tgtAlphabet$, we have from (2),
\begin{align}
|\Quotient(\tgtStr\tgtSym)| \le (|\srcAlphabet|+1) |\Quotient(\tgtStr)|
\end{align}
Iterating this bound along any string $\tgtStr \in \tgtStrings $, we have:
\begin{equation*}
    |\Quotient(\tgtStr)| \le (|\srcAlphabet|+1)^{|\tgtStr|}|\Quotient(\varepsilon)|=(|\srcAlphabet|+1)^{|\tgtStr|}\qedhere
\end{equation*}
\end{proof}

\clearpage
\subsection{Sufficient Conditions for Finite Quotients and Remainders}
\label{appendix:finite-decomposition}
\cref{lemma:finite-decomposition} below gives sufficient conditions for when one can derive a finite precover decomposition. An example of such a transducer is given in \cref{example:transducer-finite-example}.

\finiteDecomposition*
\begin{proof}
Fix $\tgtStr \in \tgtStrings$; we show that $\Quotient(\tgtStr)$ and $\Remainder(\tgtStr)$ are finite.
Let $\Paths_{\tgtStr}$ be the set of all paths in $\transST$ that emit exactly $\tgtStr$, i.e.,
$\Paths_{\tgtStr} \defeq \{ \state_0 \xrightarrow{\srcSym_1:\tgtSym_1}\state_1 \cdots \state_{N-1} \xrightarrow{\srcSym_N:\tgtSym_N}\state_N \mid \state_0\in \InitialStates, \tgtSym_1\cdots\tgtSym_N=\tgtStr\}$.
By condition~(i), the $\varepsilon$-output subgraph of $\transST$ is acyclic, so any sub-path that emits no output has length at most $|\States|$. Since the total output is $\tgtStr$, each path has bounded length, and $\Paths_{\tgtStr}$ is finite.
Let $\Paths_{\succeq \tgtStr}$ be the set of all valid paths formed by extending the roots in $\Paths_{\tgtStr}$ until they reach a state satisfying a safety base case.
By condition~(ii), every state of $\transST$ is safe---in particular, the end states of paths in $\Paths_{\tgtStr}$. This implies that no extension path can continue indefinitely without satisfying a base case (IP-universality or finite closure).
Since a finite-state transducer has finite branching and no infinite valid extension paths, Kőnig's Lemma implies that the tree of extensions is finite.\footnote{Kőnig's Lemma states that every infinite tree with finite branching must have an infinite path. Since our alphabet is finite (finite branching) and the Safety condition forbids cycles among non-base-case states (no infinite paths), the resulting tree of extensions must be finite.}
Thus, $\Paths_{\succeq \tgtStr}$ is a finite set.

Let $\States_U \defeq \{ \state \in \States \colon \text{$\state$ is IP-universal} \}$ and $\States_C \defeq \{ \state \in \States \colon |\sem{\forceStart{\transST}{\state}}| < \infty \}$ be the sets of states satisfying conditions~(a) and~(b), respectively, and let $\States_{S} \defeq \States_U \cup \States_C$. We can decompose the set of extended paths disjointly based on their end state:
\begin{align}
    \Paths_{\succeq \tgtStr}
    &= \left\{ \rho \in \Paths_{\succeq \tgtStr} \colon \rho_{|\rho|} \in \States_U \right\} \sqcup \left\{ \rho \in \Paths_{\succeq \tgtStr} \colon \rho_{|\rho|} \in \States_C \right\} \\
    &= \left\{\Path \cdot \state_{|\Path|}\xrightarrow{\dots}\state_{N} \mid \Path \in \Paths_{\tgtStr}, \state_{N} \in \States_U, \{ \state_{|\Path|}, \dots, \state_{N-1} \} \cap \States_{S} = \emptyset \right\} \nonumber \\
    &\quad \sqcup \left\{\Path \cdot \state_{|\Path|}\xrightarrow{\dots}\state_{N} \mid \Path \in \Paths_{\tgtStr}, \state_{N} \in \States_C, \{ \state_{|\Path|}, \dots, \state_{N-1} \} \cap \States_{S} = \emptyset \right\}
\end{align}
The first term corresponds to $\Quotient(\tgtStr)$ by definition. Since $\Paths_{\succeq \tgtStr}$ is finite, this set is finite.
The second term collects paths leading to states $\state \in \States_C$. The remainder $\Remainder(\tgtStr)$ consists of the input strings from these paths, concatenated with the finite language accepted by $\state$. Since the number of paths is finite and the closure of each stopping state is finite, $\Remainder(\tgtStr)$ is finite.
\end{proof}

\begin{myexample}
\label{example:transducer-finite-example}
Consider the transducer below over
$\srcAlphabet = \{\tfont{a},\tfont{b}\}$,
$\tgtAlphabet = \{\charfont{a},\charfont{b},\charfont{d}\}$,
encoding a (partial) function~$\transFn$.
The four states illustrate all cases of safety.
State $\state_3$ is universal: it is accepting and loops on every input
symbol with $\varepsilon$-output, so
$\lang(\state_3) = \srcAlphabet^*$ (base case~a).
State $\state_2$ has finite closure: it is accepting with no outgoing
transitions, so $\lang(\state_2) = \{\varepsilon\}$ (base case~b).
State $\state_1$ is neither universal nor finite closure, but is safe by the
recursive step~(c): its successors $\state_3$ (via~$\tfont{a}$) and
$\state_2$ (via~$\tfont{b}$) are both safe.
Similarly, $\state_0$ is safe because its successors $\state_1$
(via~$\tfont{a}$) and $\state_2$ (via~$\tfont{b}$) are both safe.

For target $\tgtStr = \charfont{a}$, the path set
$\Paths\smash{_{\charfont{a}}} = \{\state_0 \xrightarrow{\tfont{a}:\charfont{a}}
\state_1\}$
is finite (condition~i), with terminal state $\state_1$ safe (condition~ii).
Extending from~$\state_1$: input~$\tfont{a}$ reaches the universal
state~$\state_3$, contributing $\tfont{aa}$ to
$\Quotient(\charfont{a})$; input~$\tfont{b}$ reaches the finite-closure state~$\state_2$, contributing $\tfont{ab}$ to
$\Remainder(\charfont{a})$.
The root string~$\tfont{a}$ is also in $\Remainder(\charfont{a})$:
$\state_1$ is accepting with $\transFn(\tfont{a}) = \charfont{a} \succeq
\charfont{a}$, but $\tfont{a}$ is not cylindrical since
$\tfont{ab}$ cannot be further extended (stuck at~$\state_2$).
Thus $\Quotient(\charfont{a}) = \{\tfont{aa}\}$ and
$\Remainder(\charfont{a}) = \{\tfont{a},\, \tfont{ab}\}$.

\begin{center}
\begin{tikzpicture}[->,>=stealth',auto,semithick,node distance=1.25cm]
\tikzstyle{every state}=[minimum size=10pt]

\node[state, accepting, initial]        (s0) {$\state_0$};
\node[state, accepting, right=of s0]    (s1) {$\state_1$};
\node[state, accepting, right=of s1]    (s3) {$\state_3$};
\node[state, accepting, below=1.5cm of s1] (s2) {$\state_2$};

\path (s0) edge node {\tfont{a}:\charfont{a}} (s1);
\path (s0) edge node[below left] {\tfont{b}:\charfont{b}} (s2);
\path (s1) edge node {\tfont{a}:\charfont{d}} (s3);
\path (s1) edge node[right] {\tfont{b}:\charfont{b}} (s2);
\path (s3) edge[loop above] node
{$\srcAlphabet:\varepsilon$} (s3);
\end{tikzpicture}
\end{center}
\end{myexample}

\clearpage
\section{Example Decompositions}
\label{example-decompositions}

We give three concrete examples of decompositions and prefix probabilities.

\begin{myexample}
\label{ex:backticks} 
The transducer below merges consecutive backticks (\tfont{\textasciigrave}\tfont{\textasciigrave}) into a quotation mark (\charfont{"}) but otherwise keeps the symbols \tfont{a} and \tfont{\btick} fixed. It can be seen as a minimal example of text normalization.
\begin{subequations}
\begin{align}
\transST = \left[
\begin{adjustbox}{width=.4\textwidth}
\begin{tikzpicture}[CenterFix, shorten >=1pt, node distance=3cm, on grid, auto]
\node[state, initial, accepting] (q0) {$q_0$};
\node[state] (q1) [right=3cm of q0] {$q_1$};
\node[state, accepting] (q2) [right=3cm of q1] {$q_2$};
\path[->]
(q0) edge node {$\btick\!:\!\varepsilon$} (q1)
(q1) edge node {$\varepsilon\!:\!\charbtick$} (q2)
(q1) edge [bend left=35] node {$\btick\!:\!\charfont{"}$} (q0)
(q2) edge [bend right=30] node[above] {$\tfont{a}\!:\!\charfont{a}$} (q0)
(q0) edge [loop above] node {$\tfont{a}\!:\!\charfont{a}$} (q0);
\end{tikzpicture}
\label{fig:backticks-fst}
\end{adjustbox}
\right]
\end{align}
When computing the prefix probability $\prefixTarget(\charfont{\textasciigrave})$,
sequences starting with two backticks get mapped to a quote and thus do not appear in the precover. Instead, a single backtick remains in the remainder, while a backtick followed by \tfont{a} goes into the quotient, as any further extensions preserve the initial backtick. We thus have (by visually canceling out continuations that get mapped to quotation marks),
\begin{align}
\preCover(\charbtick) &= 
\set{
  \btick, 
  \cancel{\btick\btick}, 
  \cancel{\btick\btick\btick}, 
  \cancel{\tfont{\btick\btick a}},
  {\ldots}}
  \cup \set{
  \tfont{\btick a},
  \tfont{\btick a\btick},
  \tfont{\btick aa}, 
  {\ldots}
} \\
&=\set{\btick}\sqcup \set{\btick \tfont{a}} \srcStrings
\end{align}
Thus, $\prefixTarget(\charbtick) 
= \pSource(\btick) + \prefixSource(\tfont{\btick a})$.
Similarly, we have $\preCover(\charfont{"})=\{\tfont{\btick\btick}\}\srcStrings$ and thus $\prefixTarget(\charfont{"})=\prefixSource(\btick\btick)$.
\end{subequations}
\end{myexample}
\Cref{ex:backticks} demonstrates the efficiency of using the remainder and the quotient; both sets only have a single element, yet they fully specify what probabilities contribute to $\preCover(\charfont{\textasciigrave})$. There are, however, edge cases that do not lend themselves to such efficient decompositions.
\begin{myexample}
\label{ex:inf}
\begin{subequations}
Consider the following mapping $\transFn$ and its representation as a transducer $\transST$:
\begin{align}
\transFn(\tfont{a}^n) = \charfont{b}^n \textbf{ if } n \text{ is even} \textbf{ else } \charfont{c}^n
\qquad \transST = \left[
\begin{adjustbox}{width=.3\textwidth}
\begin{tikzpicture}[CenterFix, shorten >=1pt, node distance=3cm, on grid, auto]
\node[state, initial, accepting] (q0) {$q_0$};
\node[state, initial] (q2) [below=2cm of q0] {$q_2$};
\node[state] (q1) [right=3cm of q0] {$q_1$};
\node[state, accepting] (q3) [right=3cm of q2] {$q_3$};
\path[->]
(q0) edge [bend left=30] node {\tfont{a}:\charfont{b}} (q1)
(q1) edge node {\tfont{a}:\charfont{b}} (q0)
(q2) edge [bend left=30] node {\tfont{a}:\charfont{c}} (q3)
(q3) edge node {\tfont{a}:\charfont{c}} (q2);
\end{tikzpicture}
\end{adjustbox}
\hspace{0.4cm}
\right]
\end{align}
with $\srcAlphabet = \{\tfont{a}\}$ and $\tgtAlphabet = \{\charfont{b}, \charfont{c}\}$.  
The precover for $\charfont{b}$ is given by
\begin{align}
\preCover(\charfont{b}) = \set{\cancel{\varepsilon}, \cancel{\tfont{a}}, \tfont{aa}, \cancel{\tfont{aaa}}, \tfont{aaaa}, \cancel{\tfont{aaaaa}}, \tfont{aaaaaa}, \ldots}    
\end{align}
This example is, unfortunately, challenging for our approach as the remainder set is not finite:
\begin{align}
\Remainder(\charfont{b}) = \set{\tfont{a}^{2 \, n} \mid n > 0},
\Quotient(\charfont{b}) = \emptyset    
\end{align}
In other words, the mapping never releases the evenness constraint, and we sum forever.
\end{subequations}
\end{myexample}

Finally, we give an example of a simple transducer with a single remainder element in \cref{ex:anbn}.

\begin{myexample}
\label{ex:anbn}
Consider the following mapping $\transFn$, visualized in \Cref{fig:anbn-fst}.
\begin{subequations}
\begin{align}
\transFn(\tfont{a}^n) = \charfont{b}^n \textbf{ if } n \ne 2 \textbf{ else } \charfont{c}
\end{align}
$\srcAlphabet = \{\tfont{a}\}$ and $\tgtAlphabet = \{\charfont{b}, \charfont{c}\}$. 
Suppose we want to compute the prefix probability $\prefixTarget(\charfont{b})$.
\begin{align}
\preCover(\charfont{b}) = \set{\tfont{a}, \cancel{\tfont{aa}}, \tfont{aaa}, \tfont{aaaa}, \tfont{aaaaa}, \ldots}    
\end{align}
\begin{align}
\Remainder(\charfont{b}) = \set{\tfont{a}}, 
\Quotient(\charfont{b}) = \set{\tfont{aaa}}
\end{align}
Thus,
\begin{align}
\prefixTarget(\charfont{b}) 
&= \sum_{\srcStr \in \Remainder(\charfont{b})} \pSource(\srcStr)
+ \sum_{\srcStr \in \Quotient(\charfont{b})} \prefixSource(\srcStr) 
\\
&=
\pSource(\tfont{a}) + \prefixSource(\tfont{aaa})
\end{align}
\end{subequations}

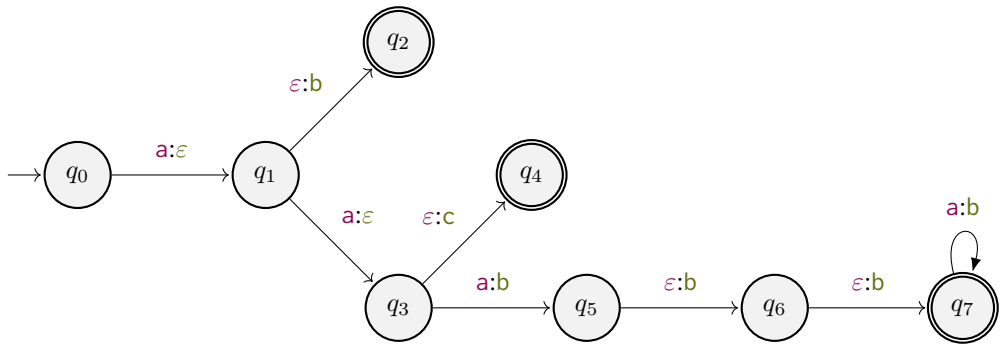
\begin{figure}[t!]
    \centering
    \begin{tikzpicture}[shorten >=1pt, node distance=2.5cm, on grid, auto]
    \node[state, initial] (q0) {$q_0$};
    \node[state] (q1) [right=of q0] {$q_1$};
    \node[state, accepting] (q2) [above right=of q1] {$q_2$};
    \node[state] (q3) [below right=of q1] {$q_3$};
    \node[state, accepting] (q4) [above right=of q3] {$q_4$};
    \node[state] (q5) [right=of q3] {$q_5$};
    \node[state] (q6) [right=of q5] {$q_6$};
    \node[state, accepting] (q7) [right=of q6] {$q_7$};
    \path[->]
    (q0) edge node {\tfont{a}:\charfont{$\varepsilon$}} (q1)
    (q1) edge node {\tfont{$\varepsilon$}:\charfont{b}} (q2)
    (q1) edge node {\tfont{a}:\charfont{$\varepsilon$}} (q3)
    (q3) edge node {\tfont{$\varepsilon$}:\charfont{c}} (q4)
    (q3) edge node {\tfont{a}:\charfont{b}} (q5)
    (q5) edge node {\tfont{$\varepsilon$}:\charfont{b}} (q6)
    (q6) edge node {\tfont{$\varepsilon$}:\charfont{b}} (q7)
    (q7) edge [loop above] node {\tfont{a}:\charfont{b}} (q7);
    \end{tikzpicture}
    \caption{An FST that maps $n$ occurrences of 'a' to the same number of 'b's, except when the input is exactly two 'a's, which are mapped to 'c'.}
    \label{fig:anbn-fst}
\end{figure}
\end{myexample}

\begin{figure}[t]
  \centering
  \begin{tikzpicture}[shorten >=1pt, node distance=2.5cm, on grid, auto]
      \node[state, initial, accepting] (q0) {$q_0$};
      \node[state] (q1) [above right=of q0] {$q_1$};
      \node[state, accepting] (q2) [below right=of q0] {$q_2$};
      \node[state, accepting] (q3) [below right=of q1] {$q_3$};
      \path[->]
      (q0) edge node {$\tfont{a}\!:\!\varepsilon$} (q1)
      (q0) edge node[below left] {$\tfont{a}\!:\!\charfont{c}$} (q2)
      (q1) edge node {$\tfont{b}\!:\!\charfont{c}$} (q3)
      (q2) edge node[below right] {$\tfont{a}\!:\!\charfont{c}$} (q3)
      (q3) edge [loop right] node[right, align=center] {%
          $\tfont{a}\!:\!\charfont{c}$\\
          $\tfont{b}\!:\!\charfont{c}$} (q3);
\end{tikzpicture}
\caption{An FST where the $\varepsilon$-output arc $q_0 \xrightarrow{\tfont{a}:\varepsilon} q_1$ creates a frontier element whose output has not yet reached the target. For target $\tgtStr = \charfont{c}$ and source prefix $\srcStr = \tfont{a}$, the frontier is $\frontier = \{(q_1, \varepsilon), (q_2, \charfont{c})\}$. The covering states $\{q_2\}$ are not input-projection universal ($q_2$ has no arc on $\tfont{b}$), so $\ipUniversal(\tfont{a}, \charfont{c})$ fails and $\tfont{a}$ would be misclassified as non-quotient. However, $q_1$ handles input $\tfont{b}$ and produces $\charfont{c}$, matching the target. The BFS in $\isCylinder$ confirms universality over the full frontier, correctly placing $\tfont{a}$ in $\Quotient(\charfont{c})$.}
\label{fig:target-universal-example}
\end{figure}

\clearpage
\section{Experimental Setup}
\label{sec:experimental-setup}
Here we detail the experimental setup for reproducing the experiments in \Cref{sec:applications} and \Cref{appendix:benchmarking-quotient}.

\subsection{Datasets}
For the tokens-to-byte and PTB experiments in \Cref{ex:token-to-byte}, we choose the first 10 paragraphs (excluding headers) of the test split in the \href{https://huggingface.co/datasets/Salesforce/wikitext}{\texttt{wikitext-2-raw-v1}} dataset \citep{merity2017pointer} (corresponding to the first 7684 bytes) from the \raisebox{-0.2ex}{\huggingface}\kern3pt Hugging Face datasets library. We use the first 256 bytes of the same dataset and split to run the experiments in \Cref{appendix:benchmarking-quotient}.
For the DNA experiments in \Cref{sec:dna}, we sample 65 human proteins\footnote{\url{https://www.uniprot.org/uniprotkb?query=Human}}, each consisting of 4-12 amino acids, with their accession numbers given in \Cref{tab:proteins}.

\begin{table}[H]
  \centering
  \small
  \caption{Accession numbers used in this study.}
  \label{tab:proteins}
  \begin{tabular}{lllll}
    \toprule
    C0HLZ5 & P01858 & P0DPI4 & P0DUS0 & P84464 \\
    P84465 & P0DOY5 & P67857 & P67858 & P67859 \\
    P81826 & P23210 & P85003 & P84071 & P86168 \\
    C0HJF1 & C0HJG0 & P02729 & P81010 & P86909 \\
    P86922 & B3EWE5 & P0DMM6 & P0DQM6 & P0DQM7 \\
    P12481 & P85002 & B3EWR3 & P01358 & P02728 \\
    P0C005 & P0DKX2 & P0DMM7 & P0DQM9 & P0DX30 \\
    P22103 & P84200 & P84785 & P84868 & P86600 \\
    A8C8X2 & B3A0L6 & B3EUR5 & C0HJM6 & C0HLK7 \\
    P0DJC3 & P0DJF4 & P69208 & P85444 & P85870 \\
    P86942 & A0A0A0MT89 & B3EWS0 & C0HJB6 & C0HL84 \\
    C0HL88 & P0C8I8 & P0DQH7 & P0DQH8 & P0DQX4 \\
    P0DQX5 & P58805 & P69437 & P82820 & P83127 \\
    \bottomrule
  \end{tabular}
\end{table}

\subsection{Models}
\label{appendix:models}
We conduct experiments using GPT-2 Large \citep{radford2019language}, Llama 3.2-1B, Llama 3.1-8B and Phi-4 \citep[14B;][]{abdin2024phi4technicalreport}\footnote{Available under \href{https://huggingface.co/openai-community/gpt2-large}{\texttt{openai/gpt2-large}}, \href{https://huggingface.co/meta-llama/Llama-3.2-1B}{\texttt{meta-llama/Llama-3.2-1B}}, \href{https://huggingface.co/meta-llama/Llama-3.1-8B}{\texttt{meta-llama/Llama-3.1-8B}} and \href{https://huggingface.co/microsoft/phi-4}{microsoft/phi-4} at \url{https://huggingface.co}.} from the \raisebox{-0.2ex}{\huggingface}\kern3pt Hugging Face hub~\citep{wolf-etal-2020-transformers}. We use the {\scriptsize\emoji[emoji]{GenLM}} library\footnote{\url{https://github.com/genlm/genlm-backend}} and the {\scriptsize\emoji[emoji]{vllm}} \citep{kwon-etla-2023-vllm} backend to efficiently evaluate the models.

For the PTB experiments we use \raisebox{-0.5ex}{\small\emoji[emoji]{GenLMBytes}}\footnote{\url{https://github.com/genlm/genlm-bytes}} to convert token-level models into byte-level models and compose them with $\ptbFST$. For \raisebox{-0.5ex}{\small\emoji[emoji]{GenLMBytes}} we use a beam size of $K$=5 and a pruning threshold of 0.001. For the DNA experiments we train a custom GPT-2 Small model\footnote{\url{https://huggingface.co/vesteinn/gpt2-dna}} on a human DNA dataset\footnote{\url{https://huggingface.co/datasets/simecek/Human_DNA_v0}.}. The token set of the model is $\srcAlphabet = \set{\tfont{A}, \tfont{C}, \tfont{G}, \tfont{T}}$, eliminating the need for composing the model with a transducer $\tokensToBytesFST$ that maps from subword tokens to bytes. For training parameters, see \Cref{tab:training-parameters}, for training and validation metrics, see \Cref{tab:train-val-metrics}.

\subsection{Parameters}
For all experiments, we use the pruning heuristic described in \cref{sec:pruning}. For the experiments in \cref{sec:dna} we report the results for different values of $\maxCand \in \{5000, 10000, 15000, 20000, 25000, 30000\}$. For all other experiments, we set $\maxCand=\infty$.

\subsection{GPU Usage}
\label{appendix:gpu-usage}
All experiments in \cref{sec:applications} use a single NVIDIA GeForce RTX 4090 (24\,GB), except Phi-4, which requires two. The benchmarks in \cref{appendix:benchmarking-quotient} use an NVIDIA GeForce RTX 3090.

\begin{minipage}[t]{.44\linewidth}
\centering
\small
\captionof{table}{Training parameters for the GPT-2 small model, trained on a human DNA.}
\begin{tabular}{@{}ll@{}}
\toprule
\textbf{Parameter} & \textbf{Value} \\
\midrule
Learning Rate & 0.0003 \\
Optimizer & \begin{tabular}[t]{@{}l@{}} 
AdamW \\ 
$\beta_1=0.9$, $\beta_2=0.999$, \\ 
$\epsilon=1\mathrm{e}{-8}$ 
\end{tabular} \\
Learning Rate Scheduler & Linear \\
Warm-up Steps & 1000 \\
Train Batches (per device) & 64 \\
Eval Batches (per device) & 8 \\
Total Train Batches & 256 (262{,}144 tokens)\\
Total Eval Batches & 32 \\
Epochs & 10 \\
Seed & 42 \\
Distributed Training & Multi-GPU (4 devices) \\
Mixed Precision & Native AMP \\
\bottomrule
\end{tabular}
\label{tab:training-parameters}
    
\end{minipage}
\hfill
\begin{minipage}[t]{.49\linewidth}
\centering
\small
\captionof{table}{Training and validation metrics for the GPT-2 small model, trained on a human DNA.}
\begin{tabular}{@{}rrrrrr@{}}
\toprule
\textbf{Step} & \textbf{Epoch} & \textbf{Train Loss} & \textbf{Val Loss} & \textbf{Acc (\%)} \\
\midrule
5k  & 0.69   & 1.1252 & 1.1206 & 47.45 \\
10k & 1.38   & 1.0835 & 1.0814 & 49.91 \\
15k & 2.07   & 1.0641 & 1.0639 & 51.03 \\
20k & 2.76   & 1.0563 & 1.0547 & 51.63 \\
25k & 3.45   & 1.0504 & 1.0486 & 52.04 \\
30k & 4.14   & 1.0439 & 1.0439 & 52.33 \\
35k & 4.84   & 1.0425 & 1.0407 & 52.54 \\
40k & 5.53   & 1.0365 & 1.0380 & 52.71 \\
45k & 6.22   & 1.0325 & 1.0361 & 52.84 \\
50k & 6.91   & 1.0322 & 1.0341 & 52.96 \\
55k & 7.60   & 1.0307 & 1.0328 & 53.05 \\
60k & 8.29   & 1.0267 & 1.0316 & 53.13 \\
65k & 8.98   & 1.0273 & 1.0306 & 53.20 \\
70k & 9.67   & 1.0270 & 1.0299 & 53.24 \\
\bottomrule
\end{tabular}
\label{tab:train-val-metrics}
\end{minipage}

\subsection{Details on Transducers Used in Experiments}
\label{appendix:transducer-stats}
\Cref{tab:transducer_stats} contains the number of states, IP-universal states, and transitions for the transducers described in \Cref{sec:applications}. We construct all finite-state transducers using Pynini \citep{gorman-2016-pynini}. Note that for experiments using the Penn Treebank FST ($\ptbFST$), we realize $\pSource \circ \tokensToBytesFST$ using \raisebox{-0.5ex}{\small\emoji[emoji]{GenLMBytes}}, thereby keeping the number of states and arcs constant.
\begin{table}[H]
\centering
\caption{Number of states, IP-universal states, and transitions.}
\label{tab:transducer_stats}
\begin{tabular}{@{}l|ccc@{}}
\toprule
\textbf{Model} & \textbf{States} & \textbf{IP-Univ.\ States} & \textbf{Transitions}  \\
\midrule
\multicolumn{4}{c}{\textbf{Tokens to Bytes}} \\
\midrule
$\gpt \circ \tokensToBytesFST$ & 75{,}723 & 75{,}723 & 125{,}979\\
$\llama \circ \tokensToBytesFST$ & 176{,}990 & 176{,}990 & 305{,}244 \\
$\llamalarge \circ \tokensToBytesFST$ & 176{,}990 & 176{,}990 & 305{,}244 \\
$\phifour \circ \tokensToBytesFST $ & 115{,}244 & 115{,}244 & 215{,}593 \\
\midrule
\multicolumn{4}{c}{\textbf{Tokens to Words}} \\
\midrule
$\gpt \circ \tokensToBytesFST \circ \ptbFST$ & 479 & 56 & 26{,}962\\
$\llama \circ \tokensToBytesFST \circ \ptbFST$ & 479 & 56 & 26{,}962\\
$\llamalarge \circ \tokensToBytesFST \circ \ptbFST$ & 479 & 56 & 26{,}962\\
$\phifour \circ \tokensToBytesFST \circ \ptbFST$ & 479 & 56 & 26{,}962\\
\midrule
\multicolumn{4}{c}{\textbf{DNA to amino acids}} \\
\midrule
$\dna$ $\circ \dnaToAaFST$ & 21 & 21 & 84  \\
\bottomrule
\end{tabular}
\end{table}

\paragraph{Constructing the PTB transducer.}
\label{appendix:ptb-tokenizer}
We construct the PTB FST by encoding each tokenizer rule\footnote{See \url{https://www.nltk.org/_modules/nltk/tokenize/treebank.html\#TreebankWordTokenizer} for the full specification.} as an FST that segments character sequences by inserting a distinguished separator symbol \SEP{} $\notin \tgtAlphabet$. Specifically, the transducer inserts \SEP{} at punctuation and clitic boundaries, and collapses whitespace characters (spaces, tabs, newlines, etc.) into \SEP{}. Note that the resulting transduced language model is thus not a true distribution over PTB tokens, but over characters and separators corresponding to the same boundaries that the PTB tokens would have. This is a pragmatic decision, as the PTB tokenizer can tokenize any sentence into orthographic words. In other words, it would accept an infinite vocabulary. Such a transducer can be built on the fly and would be equivalent to one with infinitely many states. In this paper, we only use the finite version. An example of such a rule is given in \Cref{fig:commaRewrite}, which inserts \SEP{} before a comma if it is not followed by a digit. We then compose these FSTs into a single transducer ($\ptbFST$). Note that context-dependent rules, such as the one given in \Cref{fig:commaRewrite}, introduce non-IP-universal states. For example, state $q_2$ only accepts $\srcSym \in \srcAlphabet \setminus \{\tfont{\texttt{0--9,}}\}$.
In fact, of the 479 states in $\ptbFST$, just 56 are IP-universal.

\paragraph{The DNA to amino acid transducer.} The DNA to amino-acid transducer, described in \cref{sec:dna}, is partially shown in \cref{fig:dnafst}.
\begin{figure}[H]
    \centering
\begin{tikzpicture}[
    node distance=10mm and 12mm, 
    base/.style={font=\normalsize},
    aas/.style={font=\normalsize},
]
\node[state, initial, accepting] (q0) {$q_0$};
\node[state, accepting] (qA) [right=of q0] {$q_A$};
\node[state, accepting] (qAA) [above right=of qA] {$q_{AA}$};
\node[state, accepting] (qAC) [below=of qAA] {$q_{AC}$};
\node[state, accepting] (qAG) [below=of qAC] {$q_{AG}$};
\node[state, accepting] (qAT) [below=of qAG] {$q_{AT}$};

\draw[->] (q0) to node[base, above] {\tfont{A}:\charfont{$\varepsilon$}} (qA);
\draw[->, dotted, thick] (q0) to[bend right=40] node[base, below, yshift=-3mm] {\tfont{C,G,T}:\charfont{$\varepsilon$}} ++(0,-10mm);

\draw[->] (qA) to node[base, sloped, above] {\tfont{A}:\charfont{$\varepsilon$}} (qAA);
\draw[->] (qA) to[bend right=10] node[base, right, yshift=2mm, xshift=-2mm] {\tfont{C}:\charfont{$\varepsilon$}} (qAC);
\draw[->] (qA) to[bend right=20] node[base, right] {\tfont{G}:\charfont{$\varepsilon$}} (qAG);
\draw[->] (qA) to[bend right=30] node[base, right] {\tfont{T}:\charfont{$\varepsilon$}} (qAT);

\draw[->] (qAA) to[bend right=50] node[aas, above] {\tfont{A}:\charfont{K}, \tfont{C}:\charfont{N}, \tfont{G}:\charfont{K}, \tfont{T}:\charfont{N}} (q0);

\draw[->, dotted, thick] (qAC) to[bend right=10] node[base, above] {} ++(10mm,0);
\draw[->, dotted, thick] (qAG) to[bend right=10] node[base, above] {} ++(10mm,0);
\draw[->, dotted, thick] (qAT) to[bend right=10] node[base, above] {} ++(10mm,0);
\node[font=\scriptsize] at ($(qAC) + (7mm,0)$) {} ;
\node[font=\scriptsize] at ($(qAG) + (7mm,0)$) {} ;
\node[font=\scriptsize] at ($(qAT) + (7mm,0)$) {} ;
\end{tikzpicture}
\caption{An FST for converting DNA sequences to amino acids. Each triplet of nucleobases maps to one of 20 different amino acids. We only show a proportion of the machine.}
\label{fig:dnafst}
\end{figure}
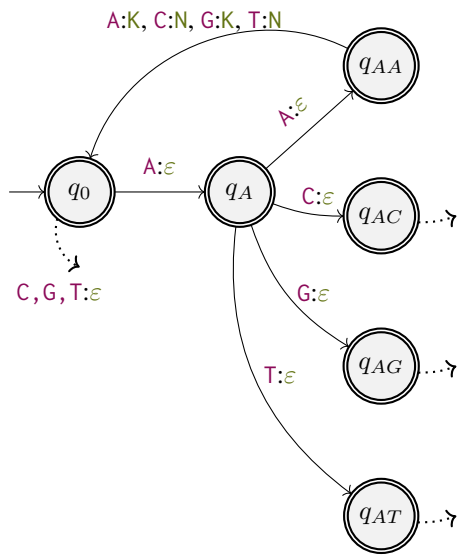

\clearpage
\section{Additional Experimental Results}
 \label{sec:detailed_results}
\begingroup
\raggedbottom
Here we provide complementary results to the experiments presented in \cref{sec:applications}, using the algorithms described in \cref{sec:efficient-algorithms}. First, in \cref{appendix:t2b-baseline-comparison} we compare our transducer-based method using $\tokensToBytesFST$ to \raisebox{-0.5ex}{\small\emoji[emoji]{GenLMBytes}} \citep{vieira2024languagemodelstokenslanguage}, then give full JSD (\cref{sec:jsd-tables}) comparisons for $\tokensToBytesFST$, $\ptbFST$, and $\dnaToAaFST$. In \cref{appendix:cross-entropy} we report cross-entropy comparisons for $\tokensToBytesFST$, and finally ablate the importance of IP-universal states (\cref{appendix:benchmarking-quotient}).\looseness=-1

\subsection{\texorpdfstring{Comparison to \citet{vieira2024languagemodelstokenslanguage}}{Comparison to Vieira et al.}}
\label{appendix:t2b-baseline-comparison}

\citet{vieira2024languagemodelstokenslanguage} convert token-level models to byte-level models using a beam-search algorithm parameterized by a beam size $\beamsize$. We compare our transducer-based approach with their method by computing the JSD between the two resulting distributions over the paragraphs from WikiText (\cref{sec:experimental-setup}). \Cref{tab:jsd_t2b_vs_baseline} uses genlm-bytes ($\beamsize\!=\!32$, pruning threshold of 0.0) as a reference and varies our pruning threshold $\threshold$. As $\threshold$ decreases, JSD drops, and our distributions converge to the baseline. At moderate thresholds ($\threshold \le$ 1e-3), JSD is already below 1.5e-3 for all models, confirming that the two methods produce closely matching distributions.

\begin{wraptable}{r}{0.38\textwidth}
\vspace{-1.2em}
\small
\centering
\caption{Exact comparison: our method ($\threshold=0$) vs.\ genlm-bytes (exact) for $\gpt \circ \tokensToBytesFST$.}
\label{tab:exact_comparison}
\begin{tabular}{clcc}
\toprule
Pos & Byte & JSD & $\max|\Delta p|$ \\
\midrule
0 & \charfont{R} & 3.9e-15 & 7.9e-7 \\
1 & \charfont{o} & 6.5e-16 & 1.2e-6 \\
2 & \charfont{b} & 3.5e-15 & 1.2e-6 \\
3 & \charfont{e} & 4.4e-14 & 1.7e-6 \\
4 & \charfont{r} & 4.5e-15 & 3.4e-6 \\
5 & \charfont{t} & 1.8e-15 & 2.4e-6 \\
6 & \charfont{\wsp{}} & 1.1e-14 & 1.4e-6 \\
7 & \charfont{B} & 1.9e-15 & 1.8e-6 \\
8 & \charfont{o} & 6.1e-15 & 1.2e-6 \\
9 & \charfont{u} & 1.3e-14 & 1.5e-6 \\
\bottomrule
\end{tabular}
\vspace{-1em}
\end{wraptable}
To verify that the two methods compute the same underlying distribution in the exact setting, we compare them with no pruning: $\threshold=0$ (no pruning) for our method and exact inference for genlm-bytes ($K=\infty$). \Cref{tab:exact_comparison} shows the per-position JSD and maximum absolute difference for the first 10 bytes of the WikiText evaluation data using $\gpt \circ \tokensToBytesFST$. The JSD values are on the order of 1e-14--1e-15, and the maximum absolute probability difference is at most 3.4e-6, confirming that both methods give the same distribution within floating-point precision.

\Cref{tab:baseline_k_comparison} reports the JSD and throughput of the baseline itself at $\beamsize \in \{8, 16, 32\}$, showing the accuracy--speed trade-off within their method. Reducing K from 32 to 8 substantially increases throughput while introducing only modest changes in JSD, indicating that the baseline's approximation error at $\beamsize\!=\!32$ is small. Note that the method of \citet{vieira2024languagemodelstokenslanguage} is designed specifically for strictly prefix monotone transformations, a property that $\tokensToBytesFST$ satisfies. This specialization enables a trie-based algorithm that achieves a better accuracy--speed trade-off than our more general-purpose approach on this class of transformations.

\begin{table}[H]
\small
\caption{Average Jensen--Shannon divergence (JSD) and bytes/sec for various thresholds $\threshold$ using $\pSource \circ \tokensToBytesFST$ against a reference distribution from \citet{vieira2024languagemodelstokenslanguage} with a beam size of $K$=32. 95$\%$ confidence intervals are given in parentheses.}
\label{tab:jsd_t2b_vs_baseline}
\centering
\begin{tabular}{c|cc|cc}
\toprule
 & \multicolumn{2}{c}{$\gpt \circ \tokensToBytesFST$} & \multicolumn{2}{c}{$\llama \circ \tokensToBytesFST$} \\
$\threshold$ & average JSD / byte & bytes / sec & average JSD / byte & bytes / sec \\
\midrule
1e-1 & 4.0e-2 (3.7e-2, 4.2e-2) & 70.95 (64.14, 79.65) & 2.8e-2 (2.5e-2, 3.0e-2) & 84.21 (71.77, 99.54) \\
3e-2 & 1.9e-2 (1.8e-2, 2.1e-2) & 44.68 (40.59, 48.66) & 1.2e-2 (1.1e-2, 1.4e-2) & 72.23 (65.87, 78.10) \\
1e-2 & 6.7e-3 (5.8e-3, 7.8e-3) & 25.13 (23.83, 26.69) & 4.2e-3 (3.5e-3, 4.9e-3) & 40.45 (37.60, 43.43) \\
3e-3 & 2.5e-3 (1.9e-3, 3.3e-3) & 10.48 (9.89, 11.12) & 1.1e-3 (8.4e-4, 1.3e-3) & 22.09 (20.94, 23.51) \\
1e-3 & 1.4e-3 (9.3e-4, 1.9e-3) & 6.21 (5.89, 6.59) & 4.0e-4 (2.9e-4, 5.6e-4) & 11.86 (11.27, 12.49) \\
3e-4 & 7.6e-5 (6.0e-5, 9.9e-5) & 1.97 (1.84, 2.11) & 1.3e-4 (1.1e-4, 1.5e-4) & 5.90 (5.60, 6.22) \\
1e-4 & 3.7e-5 (3.0e-5, 4.7e-5) & 0.65 (0.61, 0.71) & 8.8e-5 (8.1e-5, 9.5e-5) & 2.15 (2.04, 2.26) \\
3e-5 & 2.5e-5 (2.4e-5, 2.7e-5) & 0.21 (0.19, 0.22) & 7.8e-5 (7.3e-5, 8.3e-5) & 1.20 (1.15, 1.27) \\
1e-5 & 2.5e-5 (2.3e-5, 2.7e-5) & 0.19 (0.17, 0.21) & 7.4e-5 (6.9e-5, 7.7e-5) & 0.62 (0.59, 0.66) \\
\bottomrule
\end{tabular}

\begin{tabular}{c|cc|cc}
\toprule
 & \multicolumn{2}{c}{$\llamalarge \circ \tokensToBytesFST$} & \multicolumn{2}{c}{$\phifour \circ \tokensToBytesFST$} \\
$\threshold$ & average JSD / byte & bytes / sec & average JSD / byte & bytes / sec \\
\midrule
1e-1 & 2.2e-2 (2.0e-2, 2.4e-2) & 79.03 (71.08, 87.28) & 2.5e-2 (2.3e-2, 2.7e-2) & 41.04 (36.96, 45.33) \\
3e-2 & 8.8e-3 (7.7e-3, 1.0e-2) & 66.41 (62.69, 70.69) & 9.7e-3 (8.6e-3, 1.1e-2) & 36.20 (33.26, 38.87) \\
1e-2 & 3.5e-3 (2.9e-3, 4.2e-3) & 45.47 (43.10, 47.91) & 4.6e-3 (3.8e-3, 5.4e-3) & 22.85 (21.14, 24.77) \\
3e-3 & 1.3e-3 (9.3e-4, 1.8e-3) & 26.71 (25.11, 28.34) & 1.8e-3 (1.4e-3, 2.4e-3) & 14.80 (13.70, 15.83) \\
1e-3 & 5.1e-4 (2.9e-4, 8.4e-4) & 15.73 (14.88, 16.60) & 8.5e-4 (5.0e-4, 1.3e-3) & 8.75 (8.30, 9.27) \\
3e-4 & 1.3e-4 (1.1e-4, 1.5e-4) & 9.08 (8.68, 9.57) & 1.9e-4 (1.4e-4, 2.5e-4) & 4.87 (4.60, 5.16) \\
1e-4 & 9.1e-5 (8.0e-5, 1.0e-4) & 4.03 (3.84, 4.23) & 1.0e-4 (8.8e-5, 1.2e-4) & 2.19 (2.07, 2.33) \\
3e-5 & 7.5e-5 (6.8e-5, 8.3e-5) & 2.38 (2.27, 2.49) & 7.6e-5 (6.8e-5, 8.4e-5) & 1.30 (1.23, 1.37) \\
1e-5 & 7.3e-5 (6.6e-5, 8.1e-5) & 1.40 (1.33, 1.47) & 7.9e-5 (7.2e-5, 8.8e-5) & 0.74 (0.70, 0.79) \\
\bottomrule
\end{tabular}

\end{table}

\begin{table}[H]                                                                                                                      
  \small
  \caption{Throughput and Jensen--Shannon divergence (JSD) of \citet{vieira2024languagemodelstokenslanguage} at beam sizes $\beamsize   
  \in \{8, 16, 32\}$. JSD is measured against $\beamsize\!=\!32$. 95$\%$ confidence intervals are given in parentheses.}              
  \label{tab:baseline_k_comparison}
  \centering
  \begin{tabular}{c|cc|cc}
  \toprule
   & \multicolumn{2}{c|}{$\gpt \circ \tokensToBytesFST$} & \multicolumn{2}{c}{$\llama \circ \tokensToBytesFST$} \\
  $\beamsize$ & JSD vs $\beamsize\!=\!32$ & bytes / sec & JSD vs $\beamsize\!=\!32$ & bytes / sec \\
  \midrule
  8 & 3.0e-5 (2.5e-5, 3.9e-5) & 37.81 (37.59, 37.98) & 7.1e-5 (6.7e-5, 7.6e-5) & 46.94 (45.69, 47.97) \\
  16 & 2.4e-5 (2.3e-5, 2.7e-5) & 28.83 (28.53, 29.13) & 7.1e-5 (6.7e-5, 7.5e-5) & 28.76 (28.42, 29.08) \\
  32 & \textcolor{black!40}{(not applicable)} & 18.12 (17.87, 18.36) & \textcolor{black!40}{(not applicable)} & 5.34 (4.21, 7.19) \\
  \bottomrule
  \end{tabular}

  \begin{tabular}{c|cc|cc}
  \toprule
   & \multicolumn{2}{c|}{$\llamalarge \circ \tokensToBytesFST$} & \multicolumn{2}{c}{$\phifour \circ \tokensToBytesFST$} \\
  $\beamsize$ & JSD vs $\beamsize\!=\!32$ & bytes / sec & JSD vs $\beamsize\!=\!32$ & bytes / sec \\
  \midrule
  8 & 6.8e-5 (6.3e-5, 7.4e-5) & 18.30 (18.17, 18.43) & 4.6e-5 (4.3e-5, 4.9e-5) & 11.59 (11.29, 11.80) \\
  16 & 6.9e-5 (6.2e-5, 7.8e-5) & 10.55 (10.45, 10.65) & 5.3e-5 (4.9e-5, 5.6e-5) & 7.90 (7.84, 7.96) \\
  32 & \textcolor{black!40}{(not applicable)} & 5.85 (5.79, 5.91) & \textcolor{black!40}{(not applicable)} & 4.51 (4.44, 4.56) \\
  \bottomrule
  \end{tabular}
  \end{table}

\subsection{Jensen--Shannon Divergence}
\label{sec:jsd-tables}

We benchmark how well the algorithm given in \cref{sec:efficient-algorithms} approximates the exact distribution when using high pruning thresholds $\threshold$ in the probability mass pruning described in \cref{sec:pruning}. Across all three settings---token-to-byte, PTB tokenization, and DNA-to-amino-acid---JSD decreases as $\threshold$ decreases, at the cost of throughput (bytes/sec). 

\Cref{tab:jsd_t2b} reports the token-to-byte results ($\pSource \circ \tokensToBytesFST$) for all four models. JSD drops by two to three orders of magnitude from $\threshold=$ 1e-1 to $\threshold=$ 3e-5, with the tightest thresholds reaching JSD below 1e-4 for all models.

\Cref{tab:jsd_ptb} gives the corresponding results for PTB tokenization ($\pSource \circ \tokensToBytesFST \circ \ptbFST$), complementing \cref{fig:jsd_t2b_ptb} (right). Although $\ptbFST$ has fewer states than the token-to-byte transducers, only 56 of its 479 states are IP-universal, making lower thresholds computationally expensive; the reference is therefore $\threshold=$ 1e-4 (3e-4 for $\phifour$) rather than 1e-5. Despite this, JSD converges quickly, reaching the order of 1e-5 at $\threshold=$ 3e-4 for $\gpt$, $\llama$, and $\llamalarge$, while $\phifour$ converges more slowly.

\Cref{tab:jsd_dna} reports the DNA-to-amino-acid results ($\dna \circ \dnaToAaFST$), where we additionally vary the candidate-set cap $\maxCand$ to mitigate the combinatorial blow-up inherent in the three-to-one nucleotide-to-amino-acid mapping. Tighter thresholds generally reduce JSD. However, the interaction between $\threshold$ and $\maxCand$ is less uniform, as both the evaluated and reference distributions depend on the cap.

\paragraph{Incomplete runs.}
\label{appendix:incomplete-runs}
As shown in \cref{sec:decomposition-size}, the decomposition size grows rapidly at tight thresholds---mean $|\algQuotient|$ exceeds 35{,}000 at $\threshold=$ 1e-5 for $\tokensToBytesFST$. For some model--threshold combinations, this growth prevents calculating the full distribution at each position in all ten evaluation paragraphs. Specifically, $\gpt \circ \tokensToBytesFST$ in \cref{tab:jsd_t2b} completed only 7 of 10 paragraphs at $\threshold=$ 1e-5. In \cref{tab:jsd_ptb}, $\phifour \circ \tokensToBytesFST \circ \ptbFST$ completed 7 of 10 paragraphs at its reference threshold of $\threshold=$ 3e-4, and $\llama \circ \tokensToBytesFST \circ \ptbFST$ completed 9 of 10 at $\threshold=$ 1e-4. Additionally, $\phifour \circ \tokensToBytesFST \circ \ptbFST$ completed only 9 of 10 paragraphs at $\threshold=$ 1e-1 due to an unrecoverable backtracking failure (\cref{alg:backtracking}). In these cases, JSD and throughput are computed over the paragraphs shared across all thresholds for that model--transducer combination.

\begin{table}[H]
\small
\caption{Average Jensen--Shannon divergence (JSD) and bytes/sec for various thresholds $\threshold$ using $\pSource \circ \tokensToBytesFST$ against a reference distribution ($\threshold=$1e-5). 95$\%$ confidence intervals are given in parentheses.}
\label{tab:jsd_t2b}
\centering
\begin{tabular}{c|cc|cc}
\toprule
 & \multicolumn{2}{c}{$\gpt \circ \tokensToBytesFST$} & \multicolumn{2}{c}{$\llama \circ \tokensToBytesFST$} \\
$\threshold$ & average JSD / byte & bytes / sec & average JSD / byte & bytes / sec \\
\midrule
1e-1 & 3.8e-2 (3.4e-2, 4.1e-2) & 76.50 (66.94, 89.01) & 2.8e-2 (2.6e-2, 3.0e-2) & 84.21 (72.49, 99.24) \\
3e-2 & 1.7e-2 (1.5e-2, 1.9e-2) & 45.61 (39.82, 51.46) & 1.2e-2 (1.1e-2, 1.4e-2) & 72.23 (66.25, 78.94) \\
1e-2 & 5.9e-3 (4.8e-3, 7.1e-3) & 25.71 (24.08, 27.89) & 4.2e-3 (3.5e-3, 5.0e-3) & 40.45 (37.65, 43.43) \\
3e-3 & 2.1e-3 (1.4e-3, 2.9e-3) & 11.22 (10.44, 12.07) & 1.1e-3 (8.5e-4, 1.3e-3) & 22.09 (20.90, 23.34) \\
1e-3 & 1.5e-3 (9.1e-4, 2.3e-3) & 6.49 (6.02, 6.99) & 4.0e-4 (2.8e-4, 5.4e-4) & 11.86 (11.25, 12.54) \\
3e-4 & 5.5e-5 (4.7e-5, 6.7e-5) & 2.28 (2.09, 2.48) & 1.2e-4 (1.1e-4, 1.4e-4) & 5.90 (5.59, 6.23) \\
1e-4 & 3.6e-5 (2.9e-5, 4.8e-5) & 1.00 (0.92, 1.09) & 7.4e-5 (6.9e-5, 8.0e-5) & 2.15 (2.03, 2.26) \\
3e-5 & 2.4e-5 (2.2e-5, 2.7e-5) & 0.44 (0.40, 0.49) & 5.2e-5 (4.9e-5, 5.6e-5) & 1.20 (1.14, 1.26) \\
1e-5 & \textcolor{black!40}{(not applicable)} & 0.19 (0.17, 0.21) & \textcolor{black!40}{(not applicable)} & 0.62 (0.59, 0.65) \\
\bottomrule
\end{tabular}

\begin{tabular}{c|cc|cc}
\toprule
 & \multicolumn{2}{c}{$\llamalarge \circ \tokensToBytesFST$} & \multicolumn{2}{c}{$\phifour \circ \tokensToBytesFST$} \\
$\threshold$ & average JSD / byte & bytes / sec & average JSD / byte & bytes / sec \\
\midrule
1e-1 & 2.3e-2 (2.1e-2, 2.5e-2) & 79.03 (71.62, 87.24) & 2.5e-2 (2.3e-2, 2.7e-2) & 41.04 (36.75, 45.30) \\
3e-2 & 8.9e-3 (7.8e-3, 1.0e-2) & 66.41 (61.96, 70.51) & 9.9e-3 (8.7e-3, 1.1e-2) & 36.20 (33.36, 38.85) \\
1e-2 & 3.5e-3 (2.9e-3, 4.4e-3) & 45.47 (43.16, 48.08) & 4.7e-3 (3.8e-3, 5.6e-3) & 22.85 (21.06, 24.55) \\
3e-3 & 1.3e-3 (9.4e-4, 1.8e-3) & 26.71 (25.17, 28.24) & 1.8e-3 (1.3e-3, 2.4e-3) & 14.80 (13.74, 15.78) \\
1e-3 & 5.2e-4 (3.0e-4, 8.4e-4) & 15.73 (14.89, 16.61) & 8.3e-4 (4.8e-4, 1.2e-3) & 8.75 (8.30, 9.27) \\
3e-4 & 1.2e-4 (1.1e-4, 1.4e-4) & 9.08 (8.64, 9.60) & 1.9e-4 (1.1e-4, 2.9e-4) & 4.87 (4.61, 5.19) \\
1e-4 & 8.3e-5 (7.3e-5, 9.6e-5) & 4.03 (3.83, 4.26) & 7.4e-5 (6.1e-5, 9.2e-5) & 2.19 (2.05, 2.33) \\
3e-5 & 5.9e-5 (5.3e-5, 6.6e-5) & 2.38 (2.27, 2.49) & 3.7e-5 (3.4e-5, 4.0e-5) & 1.30 (1.23, 1.37) \\
1e-5 & \textcolor{black!40}{(not applicable)} & 1.40 (1.34, 1.47) & \textcolor{black!40}{(not applicable)} & 0.74 (0.70, 0.79) \\
\bottomrule
\end{tabular}

\end{table}

\begin{table}[H]
\small
\caption{Average Jensen--Shannon divergence (JSD) and bytes/sec for various thresholds $\threshold$ using $\pSource \circ \tokensToBytesFST \circ \ptbFST$ against a reference distribution ($\threshold=$ 1e-4; 3e-4 for $\phifour$). 95$\%$ confidence intervals are given in parentheses.}
\label{tab:jsd_ptb}
\centering

\begin{tabular}{c|cc|cc}
\toprule
 & \multicolumn{2}{c}{$\gpt \circ \tokensToBytesFST \circ \ptbFST$} & \multicolumn{2}{c}{$\llama \circ \tokensToBytesFST \circ \ptbFST$} \\
$\threshold$ & average JSD / byte & bytes / sec & average JSD / byte & bytes / sec \\
\midrule
1e-1 & 3.9e-3 (3.3e-3, 4.7e-3) & 26.65 (25.79, 27.48) & 3.0e-3 (2.6e-3, 3.5e-3) & 30.44 (28.31, 32.19) \\
3e-2 & 1.4e-3 (1.2e-3, 1.5e-3) & 17.03 (16.42, 17.66) & 1.1e-3 (9.3e-4, 1.3e-3) & 17.45 (16.67, 18.27) \\
1e-2 & 5.7e-4 (5.0e-4, 6.7e-4) & 10.79 (10.23, 11.36) & 3.7e-4 (3.4e-4, 4.0e-4) & 10.54 (10.03, 11.09) \\
3e-3 & 1.7e-4 (1.5e-4, 1.9e-4) & 4.52 (4.29, 4.79) & 1.3e-4 (1.2e-4, 1.4e-4) & 8.30 (7.98, 8.67) \\
1e-3 & 5.5e-5 (5.1e-5, 6.1e-5) & 2.01 (1.88, 2.15) & 6.1e-5 (5.8e-5, 6.5e-5) & 4.13 (3.91, 4.36) \\
3e-4 & 2.0e-5 (1.7e-5, 2.4e-5) & 0.81 (0.77, 0.85) & 3.3e-5 (3.1e-5, 3.5e-5) & 1.67 (1.59, 1.76) \\
1e-4 & \textcolor{black!40}{(not applicable)} & 0.20 (0.16, 0.25) & \textcolor{black!40}{(not applicable)} & 0.70 (0.66, 0.74) \\
\bottomrule
\end{tabular}

\begin{tabular}{c|cc|cc}
\toprule
 & \multicolumn{2}{c}{$\llamalarge \circ \tokensToBytesFST \circ \ptbFST$} & \multicolumn{2}{c}{$\phifour \circ \tokensToBytesFST \circ \ptbFST$} \\
$\threshold$ & average JSD / byte & bytes / sec & average JSD / byte & bytes / sec \\
\midrule
1e-1 & 2.5e-3 (2.1e-3, 3.0e-3) & 19.25 (18.79, 19.76) & 4.6e-3 (3.2e-3, 6.5e-3) & 10.07 (9.53, 10.67) \\
3e-2 & 1.0e-3 (8.6e-4, 1.2e-3) & 13.38 (12.99, 13.83) & 3.8e-3 (2.6e-3, 5.1e-3) & 5.87 (5.37, 6.40) \\
1e-2 & 6.0e-4 (4.7e-4, 7.8e-4) & 9.49 (9.16, 9.88) & 3.2e-3 (2.2e-3, 4.4e-3) & 2.94 (2.59, 3.35) \\
3e-3 & 1.1e-4 (1.0e-4, 1.2e-4) & 5.90 (5.62, 6.20) & 4.2e-4 (3.0e-4, 5.8e-4) & 2.95 (2.68, 3.26) \\
1e-3 & 5.1e-5 (4.7e-5, 5.5e-5) & 3.29 (3.13, 3.45) & 3.6e-4 (2.5e-4, 4.8e-4) & 1.54 (1.37, 1.74) \\
3e-4 & 2.3e-5 (2.1e-5, 2.5e-5) & 1.42 (1.35, 1.49) & \textcolor{black!40}{(not applicable)} & 0.48 (0.43, 0.54) \\
1e-4 & \textcolor{black!40}{(not applicable)} & 0.56 (0.53, 0.59) & \textcolor{black!40}{(not applicable)} & \textcolor{black!40}{(not applicable)} \\
\bottomrule
\end{tabular}
\end{table}

\begin{table}[H]
\setlength{\tabcolsep}{0.3em}
    \small
    \centering
    \caption{Average Jensen--Shannon divergence (JSD) and bytes/sec for various thresholds $\threshold$ and a reference ($\threshold=$1e-6) using $\dna \circ \dnaToAaFST$. 95$\%$ confidence intervals are given in parentheses. We limit the candidate-set size ($\maxCand$) to mitigate the combinatorial blow-up with increasing sequence length.}
    \label{tab:jsd_dna}
    \begin{tabular}{c|cc|cc}
\toprule
 & \multicolumn{2}{c}{$\dna \circ \dnaToAaFST$ ($\maxCand$ = 5000)} & \multicolumn{2}{c}{$\dna \circ \dnaToAaFST$ ($\maxCand$ = 10000)} \\
$\threshold$ & average JSD / byte & bytes / sec & average JSD / byte & bytes / sec \\
\midrule
1e-1 & 2.8e-2 (2.4e-2, 3.2e-2) & 9.55 (8.44, 10.90) & 2.8e-2 (2.5e-2, 3.3e-2) & 9.54 (8.45, 10.82) \\
3e-2 & 3.2e-3 (2.8e-3, 3.6e-3) & 2.90 (2.57, 3.26) & 3.3e-3 (2.9e-3, 3.7e-3) & 2.03 (1.74, 2.39) \\
1e-2 & 6.6e-4 (4.4e-4, 9.0e-4) & 2.20 (1.99, 2.47) & 7.4e-4 (5.0e-4, 1.0e-3) & 1.42 (1.25, 1.63) \\
3e-3 & 1.6e-4 (7.8e-5, 2.7e-4) & 1.89 (1.72, 2.11) & 2.7e-4 (1.1e-4, 4.9e-4) & 1.16 (1.03, 1.31) \\
1e-3 & 7.1e-5 (1.7e-5, 1.6e-4) & 1.79 (1.62, 1.97) & 6.6e-5 (1.7e-5, 1.5e-4) & 1.08 (0.97, 1.22) \\
3e-4 & 4.9e-5 (9.1e-6, 1.1e-4) & 1.73 (1.58, 1.93) & 3.5e-5 (6.7e-6, 7.9e-5) & 1.04 (0.93, 1.17) \\
1e-4 & 5.8e-5 (1.1e-5, 1.2e-4) & 1.72 (1.56, 1.91) & 2.1e-5 (9.9e-7, 7.9e-5) & 1.02 (0.92, 1.16) \\
3e-5 & 3.0e-5 (7.9e-7, 7.9e-5) & 1.71 (1.55, 1.89) & 2.1e-5 (6.6e-7, 6.2e-5) & 1.02 (0.92, 1.14) \\
1e-5 & 2.2e-5 (1.0e-6, 5.4e-5) & 1.71 (1.56, 1.89) & 3.9e-6 (5.1e-7, 8.9e-6) & 1.02 (0.91, 1.15) \\
3e-6 & 1.0e-5 (2.3e-7, 2.9e-5) & 1.70 (1.55, 1.89) & 1.2e-5 (6.1e-7, 3.2e-5) & 1.01 (0.91, 1.12) \\
1e-6 & \textcolor{black!40}{(not applicable)} & 1.71 (1.55, 1.90) & \textcolor{black!40}{(not applicable)} & 1.02 (0.92, 1.14) \\
\bottomrule
\end{tabular}

    \begin{tabular}{c|cc|cc}
\toprule
 & \multicolumn{2}{c}{$\dna \circ \dnaToAaFST$ ($\maxCand$ = 15000)} & \multicolumn{2}{c}{$\dna \circ \dnaToAaFST$ ($\maxCand$ = 20000)} \\
$\threshold$ & average JSD / byte & bytes / sec & average JSD / byte & bytes / sec \\
\midrule
1e-1 & 2.9e-2 (2.5e-2, 3.3e-2) & 9.52 (8.46, 10.82) & 2.9e-2 (2.5e-2, 3.3e-2) & 9.35 (8.21, 10.65) \\
3e-2 & 3.2e-3 (2.8e-3, 3.5e-3) & 1.63 (1.37, 1.94) & 3.2e-3 (2.8e-3, 3.7e-3) & 1.41 (1.18, 1.71) \\
1e-2 & 6.8e-4 (4.5e-4, 9.8e-4) & 1.10 (0.96, 1.27) & 8.3e-4 (5.0e-4, 1.3e-3) & 0.91 (0.78, 1.07) \\
3e-3 & 2.2e-4 (8.2e-5, 4.2e-4) & 0.88 (0.77, 1.00) & 2.0e-4 (7.6e-5, 3.9e-4) & 0.71 (0.62, 0.83) \\
1e-3 & 5.0e-5 (8.5e-6, 1.3e-4) & 0.80 (0.71, 0.91) & 6.6e-5 (1.7e-5, 1.5e-4) & 0.64 (0.56, 0.73) \\
3e-4 & 2.8e-5 (3.6e-6, 7.2e-5) & 0.77 (0.68, 0.88) & 3.0e-5 (3.9e-6, 7.9e-5) & 0.61 (0.55, 0.71) \\
1e-4 & 2.4e-5 (1.1e-6, 7.0e-5) & 0.75 (0.66, 0.84) & 6.0e-5 (1.8e-6, 1.6e-4) & 0.60 (0.54, 0.69) \\
3e-5 & 6.0e-5 (1.0e-6, 1.6e-4) & 0.74 (0.66, 0.84) & 1.8e-4 (5.0e-5, 3.7e-4) & 0.60 (0.53, 0.68) \\
1e-5 & 3.0e-7 (7.1e-9, 8.3e-7) & 0.74 (0.66, 0.84) & 7.8e-7 (5.3e-8, 2.1e-6) & 0.59 (0.53, 0.68) \\
3e-6 & 9.3e-7 (1.1e-8, 2.7e-6) & 0.73 (0.65, 0.83) & 7.3e-7 (8.9e-9, 2.1e-6) & 0.59 (0.53, 0.67) \\
1e-6 & \textcolor{black!40}{(not applicable)} & 0.74 (0.65, 0.84) & \textcolor{black!40}{(not applicable)} & 0.60 (0.53, 0.69) \\
\bottomrule
\end{tabular}

    \begin{tabular}{c|cc|cc}
\toprule
 & \multicolumn{2}{c}{$\dna \circ \dnaToAaFST$ ($\maxCand$ = 25000)} & \multicolumn{2}{c}{$\dna \circ \dnaToAaFST$ ($\maxCand$ = 30000)} \\
$\threshold$ & average JSD / byte & bytes / sec & average JSD / byte & bytes / sec \\
\midrule
1e-1 & 2.9e-2 (2.5e-2, 3.3e-2) & 9.50 (8.31, 10.78) & 2.8e-2 (2.5e-2, 3.3e-2) & 9.52 (8.43, 10.99) \\
3e-2 & 3.3e-3 (2.9e-3, 3.7e-3) & 1.28 (1.06, 1.60) & 3.3e-3 (2.9e-3, 3.8e-3) & 1.18 (0.96, 1.45) \\
1e-2 & 7.9e-4 (4.9e-4, 1.3e-3) & 0.81 (0.70, 0.95) & 8.2e-4 (4.9e-4, 1.2e-3) & 0.72 (0.61, 0.86) \\
3e-3 & 2.3e-4 (8.1e-5, 4.7e-4) & 0.62 (0.54, 0.72) & 2.5e-4 (7.9e-5, 4.8e-4) & 0.55 (0.47, 0.64) \\
1e-3 & 5.9e-5 (1.1e-5, 1.4e-4) & 0.56 (0.49, 0.65) & 6.0e-5 (1.1e-5, 1.4e-4) & 0.49 (0.42, 0.57) \\
3e-4 & 2.6e-5 (3.5e-6, 6.8e-5) & 0.53 (0.46, 0.61) & 3.4e-5 (4.9e-6, 8.5e-5) & 0.46 (0.41, 0.53) \\
1e-4 & 2.4e-5 (8.2e-7, 6.9e-5) & 0.52 (0.46, 0.59) & 2.4e-5 (1.0e-6, 8.0e-5) & 0.45 (0.39, 0.51) \\
3e-5 & 7.2e-5 (1.9e-5, 1.5e-4) & 0.51 (0.45, 0.59) & 2.4e-5 (5.6e-7, 7.1e-5) & 0.45 (0.39, 0.52) \\
1e-5 & 1.9e-6 (5.6e-8, 4.6e-6) & 0.51 (0.45, 0.59) & 1.7e-7 (5.7e-9, 3.8e-7) & 0.44 (0.39, 0.51) \\
3e-6 & 1.3e-6 (1.2e-8, 3.3e-6) & 0.51 (0.45, 0.58) & 1.6e-6 (1.7e-8, 3.7e-6) & 0.44 (0.38, 0.51) \\
1e-6 & \textcolor{black!40}{(not applicable)} & 0.51 (0.45, 0.59) & \textcolor{black!40}{(not applicable)} & 0.44 (0.39, 0.51) \\
\bottomrule
\end{tabular}

\end{table}

\subsection{Decomposition Size}
\label{sec:decomposition-size}

\Cref{tab:qr_growth} and \cref{fig:qr-growth} show how the quotient and remainder sizes grow as $\threshold$ decreases, explaining the throughput reduction observed in the JSD experiments.

For the all-IP-universal token-to-byte and DNA transducers, the remainder is empty ($|\algRemainder|\!=\!0$) at every position, so only the quotient size is reported. In the token-to-byte setting, the mean quotient size grows from 62 at $\threshold=$ 1e-1 to over 35{,}000 at $\threshold=$ 1e-5, directly accounting for the throughput reduction at tight thresholds. The DNA transducer saturates early due to $\maxCand = 5000$: mean $|\algQuotient|$ reaches 465 by $\threshold=$ 3e-5 and remains flat beyond that.
\looseness=-1

For $\ptbFST$, which has non-IP-universal states, both $|\algQuotient|$ and $|\algRemainder|$ are reported. $|\algQuotient|$ grows steadily with a tighter $\threshold$, while $|\algRemainder|$ initially decreases as more candidates enter the quotient but then grows as the expansion loop over non-IP-universal states discovers additional remainder elements. The presence of a non-trivial remainder is the key structural difference from the all-IP-universal transducers: each remainder element requires a full string probability $\pSource(\srcStr)$ rather than just a prefix probability, making it more expensive per element.

\begin{table}[H]
\centering
\caption{Mean and maximum quotient size $|\algQuotient|$ across sequence positions for three transducers at varying pruning thresholds~$\threshold$. Results are computed on paragraph~1 of WikiText (833~bytes for $\gpt \circ \tokensToBytesFST$, 850~bytes for $\gpt \circ \tokensToBytesFST \circ \ptbFST$) and on the longest protein in the test set (P83127, 12~amino acids for $\dna \circ \dnaToAaFST$). The $\tokensToBytesFST$ and $\dnaToAaFST$ transducers have all-universal states ($|\algRemainder|=0$ everywhere). The $\ptbFST$ transducer has non-universal states, so we additionally report the remainder size $|\algRemainder|$.}
\label{tab:qr_growth}
\small

\begin{subtable}[t]{0.49\textwidth}
\centering
\caption{$\gpt \circ \tokensToBytesFST$}
\begin{tabular}{rrr}
\toprule
$\tau$ & Mean $|\algQuotient|$ & Max $|\algQuotient|$ \\
\midrule
1e-1 & 62 & 2,238 \\
3e-2 & 200 & 5,736 \\
1e-2 & 439 & 11,998 \\
3e-3 & 873 & 18,761 \\
1e-3 & 1,536 & 25,341 \\
3e-4 & 3,161 & 65,536 \\
1e-4 & 7,800 & 131,072 \\
3e-5 & 17,232 & 524,288 \\
1e-5 & 35,768 & 2,097,152 \\
\bottomrule
\end{tabular}
\end{subtable}
\hfill
\begin{subtable}[t]{0.49\textwidth}
\centering
\caption{$\dna \circ \dnaToAaFST$}
\begin{tabular}{rrr}
\toprule
$\tau$ & Mean $|\algQuotient|$ & Max $|\algQuotient|$ \\
\midrule
1e-1 & 104 & 344 \\
3e-2 & 288 & 1,274 \\
1e-2 & 346 & 1,329 \\
3e-3 & 439 & 2,011 \\
1e-3 & 454 & 2,027 \\
3e-4 & 462 & 2,044 \\
1e-4 & 463 & 2,044 \\
3e-5 & 465 & 2,046 \\
1e-5 & 465 & 2,046 \\
3e-6 & 465 & 2,046 \\
\bottomrule
\end{tabular}
\end{subtable}

\vspace{0.6em}

\begin{subtable}[t]{\textwidth}
\centering
\caption{$\gpt \circ \tokensToBytesFST \circ \ptbFST$}
\begin{tabular}{rrrrr}
\toprule
$\tau$ & Mean $|\algQuotient|$ & Max $|\algQuotient|$ & Mean $|\algRemainder|$ & Max $|\algRemainder|$ \\
\midrule
1e-1 & 1.3 & 31 & 1.0 & 113 \\
3e-2 & 1.9 & 57 & 0.38 & 113 \\
1e-2 & 3.6 & 142 & 0.48 & 25 \\
3e-3 & 7.3 & 353 & 1.2 & 53 \\
1e-3 & 16 & 686 & 3.5 & 129 \\
3e-4 & 47 & 1,807 & 11.4 & 408 \\
1e-4 & 128 & 5,101 & 33.9 & 1,195 \\
\bottomrule
\end{tabular}
\end{subtable}
\begin{figure}[H]
\centering
 \includegraphics[width=\linewidth]{figures/qr_growth.pdf}
\caption{Growth of the quotient size $|\algQuotient|$ (solid lines) and remainder size $|\algRemainder|$ (dashed lines) with sequence position for three transducer compositions at varying pruning thresholds~$\threshold$. Left: $\gpt \circ \tokensToBytesFST$ on paragraph~1 of WikiText (833~bytes). Center: $\gpt \circ \tokensToBytesFST \circ \ptbFST$ on paragraph~1 (850~bytes after transduction); dashed lines show $|\algRemainder|$, which is non-zero because $\ptbFST$ contains non-IP-universal states. Right: $\dna \circ \dnaToAaFST$ on the longest protein in the test set (P83127, 12~amino acids). For $\tokensToBytesFST$ and $\dnaToAaFST$, all FST states are IP-universal, so $|\algRemainder|=0$ everywhere. The $\dnaToAaFST$ panel uses a candidate-set cap of $\maxCand = 5{,}000$, which limits the quotient size at lower thresholds and breaks monotonic growth at later positions.}
\label{fig:qr-growth}
\end{figure}
\end{table}

\newpage

\subsection{Cross-Entropy}
\label{appendix:cross-entropy}
Cross-entropy measures how well the transduced model assigns probability to held-out text; unlike JSD, it only requires evaluating $\prefixTarget(\tgtSym \mid \tgtStr)$ for the observed next symbol rather than computing the full next-symbol distribution $\prefixTarget(\cdot \mid \tgtStr)$. \Cref{tab:ce_t2b} reports cross-entropy (in nats and bits per byte) together with throughput for ($\pSource \circ \tokensToBytesFST$). Because only a single probability is needed per position, throughput is higher than in the JSD tables above. Cross-entropy converges quickly as $\threshold$ decreases: for all four models, the tightest thresholds yield nearly identical values, confirming that moderate pruning suffices for accurate sequence scoring.

\begin{table}[H]
\centering
\caption{Cross-entropy and throughput for various thresholds $\threshold$ using $\pSource \circ \tokensToBytesFST$. 95$\%$ confidence intervals are given in parentheses. $\gpt \circ \tokensToBytesFST$ at $\threshold=$ 1e-5 is computed over 6 paragraphs; all other configurations use 10.}
\label{tab:ce_t2b}
\small
\begin{tabular}{lcccccc}
\toprule
{$\threshold$} & \multicolumn{2}{c}{bytes / sec} & \multicolumn{2}{c}{Bits / byte} & \multicolumn{2}{c}{Cross-entropy} \\
\cmidrule(lr){2-3}\cmidrule(lr){4-5}\cmidrule(lr){6-7}
 & Mean & 95\% CI  & Mean & 95\% CI  & Mean & 95\% CI  \\
\midrule
\multicolumn{7}{c}{$\gpt \circ \tokensToBytesFST$} \\
\midrule
1e-1 & 82.61 & (73.33, 92.34) & 1.2434 & (1.1884, 1.2936) & 0.8619 & (0.8237, 0.8966) \\
3e-2 & 49.10 & (44.48, 54.24) & 1.1466 & (1.0978, 1.1949) & 0.7948 & (0.7609, 0.8282) \\
1e-2 & 28.46 & (26.73, 30.33) & 1.0652 & (1.0199, 1.1068) & 0.7383 & (0.7070, 0.7672) \\
3e-3 & 14.20 & (13.47, 14.98) & 1.0385 & (0.9956, 1.0806) & 0.7199 & (0.6901, 0.7490) \\
1e-3 & 6.46 & (6.08, 6.84) & 1.0293 & (0.9909, 1.0664) & 0.7134 & (0.6868, 0.7392) \\
3e-4 & 2.18 & (2.04, 2.33) & 1.0220 & (0.9831, 1.0604) & 0.7084 & (0.6815, 0.7350) \\
1e-4 & 0.75 & (0.70, 0.80) & 1.0220 & (0.9823, 1.0610) & 0.7084 & (0.6809, 0.7355) \\
3e-5 & 0.35 & (0.33, 0.39) & 1.0200 & (0.9781, 1.0633) & 0.7070 & (0.6779, 0.7370) \\
1e-5 & 0.34 & (0.30, 0.38) & 1.0025 & (0.9461, 1.0550) & 0.6949 & (0.6558, 0.7313) \\
\midrule
\multicolumn{7}{c}{$\llama \circ \tokensToBytesFST$} \\
\midrule
1e-1 & 129.22 & (112.82, 146.19) & 0.9922 & (0.9467, 1.0405) & 0.6877 & (0.6562, 0.7212) \\
3e-2 & 90.74 & (82.63, 99.33) & 0.9128 & (0.8682, 0.9542) & 0.6327 & (0.6018, 0.6614) \\
1e-2 & 47.81 & (44.04, 51.52) & 0.8626 & (0.8252, 0.9027) & 0.5979 & (0.5720, 0.6257) \\
3e-3 & 26.15 & (24.71, 27.73) & 0.8402 & (0.8037, 0.8790) & 0.5823 & (0.5571, 0.6093) \\
1e-3 & 14.89 & (14.18, 15.69) & 0.8364 & (0.8001, 0.8725) & 0.5797 & (0.5546, 0.6048) \\
3e-4 & 6.80 & (6.46, 7.18) & 0.8358 & (0.7999, 0.8724) & 0.5793 & (0.5545, 0.6047) \\
1e-4 & 2.62 & (2.49, 2.77) & 0.8362 & (0.8018, 0.8699) & 0.5796 & (0.5558, 0.6030) \\
3e-5 & 1.42 & (1.35, 1.49) & 0.8361 & (0.8002, 0.8717) & 0.5796 & (0.5547, 0.6042) \\
1e-5 & 0.73 & (0.69, 0.77) & 0.8360 & (0.8002, 0.8717) & 0.5795 & (0.5546, 0.6042) \\
\midrule
\multicolumn{7}{c}{$\llamalarge \circ \tokensToBytesFST$} \\
\midrule
1e-1 & 88.65 & (79.18, 99.61) & 0.8176 & (0.7757, 0.8661) & 0.5667 & (0.5377, 0.6003) \\
3e-2 & 76.39 & (71.27, 81.35) & 0.7392 & (0.7007, 0.7759) & 0.5124 & (0.4857, 0.5378) \\
1e-2 & 52.05 & (49.41, 54.93) & 0.7153 & (0.6778, 0.7507) & 0.4958 & (0.4698, 0.5203) \\
3e-3 & 28.81 & (27.18, 30.44) & 0.6959 & (0.6627, 0.7282) & 0.4824 & (0.4594, 0.5047) \\
1e-3 & 16.86 & (16.01, 17.82) & 0.6913 & (0.6584, 0.7251) & 0.4791 & (0.4564, 0.5026) \\
3e-4 & 9.73 & (9.24, 10.25) & 0.6870 & (0.6513, 0.7179) & 0.4762 & (0.4515, 0.4976) \\
1e-4 & 4.35 & (4.15, 4.58) & 0.6870 & (0.6557, 0.7221) & 0.4762 & (0.4545, 0.5005) \\
3e-5 & 2.60 & (2.47, 2.73) & 0.6869 & (0.6533, 0.7208) & 0.4761 & (0.4528, 0.4996) \\
1e-5 & 1.57 & (1.49, 1.66) & 0.6869 & (0.6552, 0.7180) & 0.4761 & (0.4541, 0.4977) \\
\midrule
\multicolumn{7}{c}{$\phifour \circ \tokensToBytesFST$} \\
\midrule
1e-1 & 42.39 & (37.95, 47.19) & 0.8714 & (0.8217, 0.9208) & 0.6040 & (0.5695, 0.6382) \\
3e-2 & 34.48 & (31.73, 37.33) & 0.7748 & (0.7317, 0.8180) & 0.5371 & (0.5072, 0.5670) \\
1e-2 & 24.89 & (23.05, 26.59) & 0.7547 & (0.7165, 0.7959) & 0.5231 & (0.4966, 0.5517) \\
3e-3 & 16.55 & (15.34, 17.92) & 0.7301 & (0.6959, 0.7690) & 0.5061 & (0.4824, 0.5330) \\
1e-3 & 9.91 & (9.41, 10.52) & 0.7241 & (0.6882, 0.7595) & 0.5019 & (0.4770, 0.5264) \\
3e-4 & 5.81 & (5.51, 6.14) & 0.7170 & (0.6803, 0.7509) & 0.4970 & (0.4716, 0.5205) \\
1e-4 & 2.55 & (2.41, 2.70) & 0.7163 & (0.6824, 0.7493) & 0.4965 & (0.4730, 0.5194) \\
3e-5 & 1.54 & (1.46, 1.62) & 0.7162 & (0.6793, 0.7490) & 0.4965 & (0.4709, 0.5192) \\
1e-5 & 0.86 & (0.81, 0.91) & 0.7162 & (0.6828, 0.7502) & 0.4964 & (0.4733, 0.5200) \\
\bottomrule
\end{tabular}
\end{table}

\newpage

\subsection{Benchmarking the Computational Shortcut}
\label{appendix:benchmarking-quotient}

We benchmark the computational shortcut inherent in the prefix decomposition described in \cref{sec:definition}, which allows us to decompose the precover into remainder and quotient representatives. 
We generate suboptimal decompositions by randomly selecting $n$ IP-universal states in the transducer (see \Cref{sec:decomposingprecover}) and treating them as non-IP-universal in our algorithms (see \cref{sec:algorithms}). This gradually decreases the size of the quotient and increases the remainder correspondingly. \Cref{tab:jsd_sample_universal} shows the average JSD between the original and modified distributions over the first 256 bytes of the first 5 paragraphs of the \href{https://huggingface.co/datasets/Salesforce/wikitext}{\texttt{wikitext-2-raw-v1}} dataset \citep{merity2017pointer}. For each value of $n$, we repeat the sampling of new non-universal states three times, and report the mean JSD. 

The distributions diverge rapidly, and after converting roughly 15--20\% of IP-universal states, the algorithm repeatedly encounters dead ends. This number varies between runs; as shown in \cref{fig:drop-universal}, each repeat exhibits a distinct step function: JSD remains low until a particular high-connectivity state is converted, then jumps by 2--3 orders of magnitude and plateaus. In the PTB transducer, one IP-universal state (state 193) serves as a routing hub with 274 arcs spanning 245 output symbols. When this state is converted, frontiers containing it move from the quotient $\algQuotient$ to the remainder $\algRemainder$, causing the expansion loop to require fallback scoring (\cref{alg:backtracking}). Different runs encounter this hub at different values depending on the permutation order, producing the staircase pattern. This shows that IP-universal states have a hierarchical importance structure: distribution quality is governed by a small number of high-connectivity hub states, while the majority can be converted with negligible impact ($\text{JSD} \le$ 1e-4). \looseness=-1

\begin{table}[H]
\small
\centering
\caption{Average Jensen--Shannon divergence (JSD) for the PTB transducer ($\threshold=$1e-3) after randomly converting $n$ of the IP-universal states to non-IP-universal. 95$\%$ confidence intervals are given in parentheses. JSD is computed against the unmodified transducer ($n=0$).}
\label{tab:jsd_sample_universal}

\begin{tabular}{c|c|c|c}
\toprule
Converted States ($n$) & $\llama \circ \tokensToBytesFST \circ \ptbFST$ & $\llamalarge \circ \tokensToBytesFST \circ \ptbFST$ & $\phifour \circ \tokensToBytesFST \circ \ptbFST$ \\
\midrule
0 & \textcolor{black!40}{(not applicable)} & \textcolor{black!40}{(not applicable)} & \textcolor{black!40}{(not applicable)} \\
2 & 3.9e-5 (3.5e-5, 4.3e-5) & 1.3e-4 (7.4e-5, 2.1e-4) & 6.6e-5 (3.1e-5, 1.1e-4) \\
5 & 9.5e-4 (7.5e-4, 1.2e-3) & 9.0e-4 (7.5e-4, 1.1e-3) & 7.8e-4 (6.4e-4, 9.4e-4) \\
8 & 9.8e-4 (8.0e-4, 1.2e-3) & 9.2e-4 (7.7e-4, 1.1e-3) & 8.1e-4 (6.6e-4, 9.6e-4) \\
11 & 3.0e-2 (2.6e-2, 3.3e-2) & 2.8e-2 (2.5e-2, 3.1e-2) & 2.6e-2 (2.3e-2, 2.9e-2) \\
14 & 2.9e-2 (2.6e-2, 3.3e-2) & 2.8e-2 (2.5e-2, 3.1e-2) & 2.6e-2 (2.3e-2, 2.9e-2) \\
16 & 2.9e-2 (2.6e-2, 3.2e-2) & 2.8e-2 (2.5e-2, 3.2e-2) & 2.6e-2 (2.3e-2, 2.9e-2) \\
19 & 2.9e-2 (2.6e-2, 3.3e-2) & 2.8e-2 (2.5e-2, 3.2e-2) & 2.6e-2 (2.3e-2, 2.9e-2) \\
\bottomrule
\end{tabular}
\end{table}

\begin{figure}[ht!]
\centering
 \includegraphics[width=\linewidth]{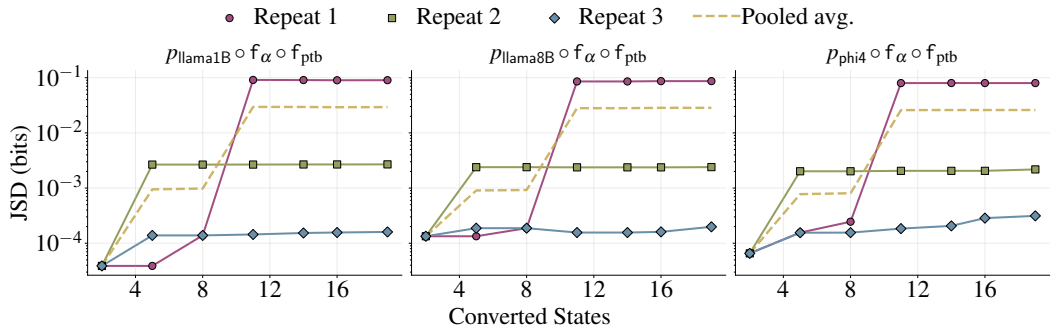}
\caption{Per-repeat Jensen--Shannon divergence (JSD) vs.\ number of states converted from IP-universal to non-IP-universal for the PTB transducer ($\threshold=$1e-3). Each repeat uses a different random conversion order. The abrupt jumps are caused by individual high-connectivity states, whose conversion causes a jump in JSD.}
\label{fig:drop-universal}
\end{figure}

\clearpage
\section{State-Based Decomposition}
\label{appendix:dfa-decomposition}

The algorithms presented in the main text enumerate source \emph{strings}: the quotient $\Quotient(\tgtStr)$ and remainder $\Remainder(\tgtStr)$ are represented as explicit sets of strings.
When these sets are infinite, the string-enumeration approach does not terminate without pruning.

An alternative is to operate on the \emph{state space} of the precover DFA rather than its string space.
Since the precover DFA $\preCoverMachine_\tgtStr$ (\cref{sec:dfa-alg}) has finitely many states, a state-based traversal always terminates, even when $\Quotient(\tgtStr)$ or $\Remainder(\tgtStr)$ are infinite.
The key idea is to represent the quotient and remainder as DFAs $\algQuotient$ and $\algRemainder$---automata that \emph{accept} the (possibly infinite) sets $\Quotient(\tgtStr)$ and $\Remainder(\tgtStr)$ respectively---rather than enumerating their elements.
The full precover is then recovered as $\preCover(\tgtStr) = \basisSet{\sem{\algQuotient}} \sqcup \sem{\algRemainder}$.

The algorithm in \cref{fig:dfa-decomposition} performs a BFS over the states of $\preCoverMachine_\tgtStr$.
At each state $\state$, it checks whether $\state$ is \emph{universal}---i.e., whether the sub-automaton rooted at $\state$ accepts $\srcStrings$ (see $\isCylinder$).
If so, $\state$ is marked as a quotient member and accepting state, and its successors are not explored, since all extensions from a universal state are covered.
If $\state$ is accepting but not universal, it is marked as a remainder member and accepting state, and exploration continues through its outgoing arcs.

The result is a pair of DFAs that share the same state space and truncated arc set but differ in their accepting states.
The quotient automaton $\algQuotient$ has the universal states as accepting states, and arcs leaving universal states are dropped (since the BFS does not expand past them); thus $\sem{\algQuotient} = \Quotient(\tgtStr)$, accepting exactly the quotient elements.
The remainder automaton $\algRemainder$ uses the same truncated arc set but marks only the non-universal accepting states; thus, $\sem{\algRemainder} = \Remainder(\tgtStr)$.

\begin{figure}[H]
\centering
\begin{minipage}[t]{.5\linewidth}
\begin{minted}{python}
def dfa_decomposition@$(\transST, \tgtStr)$@:
  @$\preCoverMachine_\tgtStr \gets \trim(\inputDeterminize(\ProjA(\transST \circ \tgtStr \tgtStrings)))$@
  @$\queue \gets \textsc{Queue}(\InitialStates_\tgtStr)$@
  @$\Visited \gets \emptyset$@; arcs @$\gets \emptyset$@
  @$\algQuotient \gets \emptyset$@; @$\algRemainder \gets \emptyset$@
  while @$\queue$@:
    @$\state \gets \queue$@.pop()
    if @$\state \in \Visited$@: continue
    @$\Visited$@.add(@$\state$@)
    if @$\state \in \FinalStates_\tgtStr$@:
      if @$\isCylinder(\state)$@:
        @$\algQuotient$@.add(@$\state$@)
        continue  # do not expand
      else:
        @$\algRemainder$@.add(@$\state$@)
    for @$\srcSym \in \srcAlphabet$@:
      @$\statep \gets \step_\tgtStr(\state, \srcSym)$@
      if @$\statep = \emptyset$@: continue
      @$\queue$@.push(@$\statep$@)
      arcs.add(@$\state \xrightarrow{\srcSym} \statep$@)
  return DFA(@$\Visited$@,arcs,@$\algQuotient$@), DFA(@$\Visited$@,arcs,@$\algRemainder$@)
\end{minted}
\end{minipage}
\begin{minipage}[t]{.42\linewidth}
\begin{minted}{python}
def @$\isCylinder(\state)$@:
  @$\Visited \gets \{\state\}$@; @$\queue \gets \textsc{Queue}(\{\state\})$@
  while @$\queue$@:
    @$\state \gets \queue$@.pop()
    if @$\state \notin \FinalStates_\tgtStr$@:
      return False
    for @$\srcSym \in \srcAlphabet$@:
      @$\statep \gets \step_\tgtStr(\state, \srcSym)$@
      if @$\statep = \emptyset$@: return False
      if @$\statep \notin \Visited$@:
        @$\Visited$@.add(@$\statep$@); @$\queue$@.push(@$\statep$@)
  return True
\end{minted}
\end{minipage}
\caption{State-based decomposition of the precover into DFAs. \emph{Left}: BFS over the precover DFA $\preCoverMachine_\tgtStr$. Universal states become accepting in $\algQuotient$; non-universal accepting states become accepting in $\algRemainder$. Arcs leaving universal states are not collected. \emph{Right}: $\isCylinder$ is the same universality BFS as in \cref{fig:check-instantiation}, but takes a DFA state directly rather than computing it from a source string via $\run_\tgtStr$.
}
\label{fig:dfa-decomposition}
\end{figure}

\paragraph{Comparison with string-based algorithms.}
The state-based algorithm always terminates in finite time---even when $\Quotient(\tgtStr)$ or $\Remainder(\tgtStr)$ are infinite---because $\preCoverMachine_\tgtStr$ has finitely many states.
The resulting DFAs $\algQuotient$ and $\algRemainder$ provide compact, finite representations of these potentially infinite sets.

However, implementing the autoregressive interface (\cref{app:alg-goal}) still requires enumerating the strings accepted by these machines: computing $\prefixTarget(\tgtStr)$ via \cref{eq:decompose-qr} sums $\pSource(\srcStr)$ over elements of $\Quotient(\tgtStr)$ and $\Remainder(\tgtStr)$, and when these sets are infinite, the sum must be truncated regardless.
Thus, while the DFA representation guarantees a finite decomposition, it does not, on its own, yield finite-time scoring.

\clearpage
\section{Related Work}
\label{sec:related-work}

Modern language models define probability distributions over sequences of tokens (see \cref{sec:lm-basics}). For efficiency and vocabulary (a.k.a.\ their alphabet) management, they usually rely on subword schemes such as BPE \citep{sennrich-etal-2016-neural, Gage1994ANA} or Unigram \citep{kudo-2018-subword}. Although these approaches have been remarkably successful, their units often don't coincide with linguistic boundaries, and any given string typically admits an exponential number of tokenization variants with non-zero probability mass under the language model. Recent work has tackled this issue by enforcing canonical tokenization to remove probability mass from noncanonical encodings \citep{vieira2025canonical}, while \citet{geh-etal-2024-signal} have shown that aggregating the probability mass of noncanonical tokenization choices carries a useful signal that can boost downstream accuracy.

Subword segmentation also gives rise to the prompt-boundary problem \citep{vieira2024languagemodelstokenslanguage}, where imperceptible changes to the final characters of a prompt (e.g., appending a single whitespace) can push the encoded token sequence onto a completely different path in token space, causing the model to abandon otherwise highly probable continuations. To overcome these issues, \citet{vieira2024languagemodelstokenslanguage} introduce an algorithm for transforming token-based language models into language models over characters. Although their contribution centers around characters, the underlying idea can be generalized (as shown in this work).

Many applications need a method to accurately convert the probability mass learned on subword tokens to other types of units, such as bytes, words, or morphemes in NLP, or amino acids in computational biology, as pointed out in \cref{sec:introduction}.
An additional example is found in psycholinguistics, where researchers often require fine-grained estimates of surprisal, e.g., to predict a reader's likelihood of skipping a word based on how predictable its first three characters are \citep{rayner1982availability, blanchard-etal-1989-acquisition}. 
To this end, recent studies have tackled the challenges posed by subword tokenization  \citep{nair-resnik-2023-words, beinborn-pinter-2023-analyzing, pimentel-meister-2024-compute, oh-schuler-2024-leading, giulianelli-etal-2024-proper}. For example, \citet{oh-schuler-2024-leading} and \citet{pimentel-meister-2024-compute} argue that leading whitespace tokenization introduces a confounder in surprisal estimates and instead advocate for incorporating the probability of trailing whitespaces into such calculations. 

Furthermore, \citet{pimentel-meister-2024-compute} give a bespoke procedure for converting token-based language models to word-based language models. 
However, their method does not model the contextually sensitive nature of English word segmentation, e.g., it treats both periods in \Cref{ex:original} identically, where English orthography does not.
Additionally, the justification of the procedure requires that there exists a set of distinguished end-of-word markers that appear at the end of a token, if at all. We now consider how such a transducer can be constructed.
Let $\tokensToBytesFST$ be a transducer that converts a token alphabet to a character alphabet, and $\delims$ be the set of delimiters. The transducer $\delimFST$ is given in \cref{fig:delimiter_segmentation}. Given a language model $\pSource$ over $\srcAlphabet$, we can then compose them into a transducer $\pSource \circ \tokensToBytesFST \circ \delimFST$ to get a transduced language model over separator-delimited words. However, such an approach would be rather naïve. Unfortunately, delimiter-based separation would not be able to distinguish when the dot should be its own symbol or not, as in \cref{ex:planckhum}.
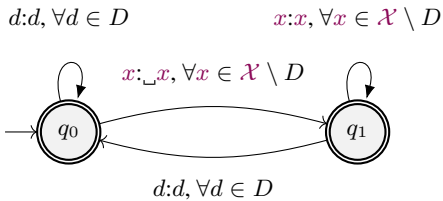
\begin{figure}[H]
    \centering
    \begin{tikzpicture}[->, auto, node distance=30mm, minimum size=12mm]
    \footnotesize
    \node[state, accepting, initial] (q0) {$q_0$};
    \node[state, accepting] (q1) [right=of q0] {$q_1$};

    \draw[->] (q0) edge[loop above] node[above] {
        $d$:$d$, $\forall d \in \delims$
    } ();

    \draw[->] (q0) to[bend left=15] node[above, yshift=-5pt] {
        $\srcSym$:\wsp{}$\srcSym$, $\forall \srcSym \in \srcAlphabet \setminus \delims$
    } (q1);

    \draw[->] (q1) edge[loop above] node[above] {
        $\srcSym$:$\srcSym$, $\forall \srcSym \in \srcAlphabet \setminus \delims$
    } ();

    \draw[->] (q1) to[bend left=15] node[below, yshift=5pt] {
        $d$:$d$, $\forall d \in \delims$
    } (q0);

    \end{tikzpicture}
    \caption{A simple FST that segments character streams into words without contextual information, inserting \wsp{} at the start of each word.}
    \label{fig:delimiter_segmentation}
\end{figure}
The delimiter-based approach also fails for most BPE-based language models because of the clustering of delimiter candidates. 
For instance, GPT-4o's alphabet contains the token \tokenfont[10880]{!!!},  which consists solely of end-of-word symbols.
Under PTB guidelines, for example, \tokenfont[10880]{!!!} should be broken into three consecutive orthographic words \wordfont{!} \wordfont{!} \wordfont{!}. 
In contrast, we argue that the proper tokenization scheme for psycholinguistic modeling should be specified based on the goals of the study and not based on the properties of any one specific tokenizer. 

Tokenization challenges are not unique to modeling natural language. In computational biology,  DNA, RNA, and protein sequences are long, unsegmented strings over small alphabets that pose challenges in tokenization. Researchers thus alternate between different tokenization schemas, such as k-mers, learned subwords, and motif-aware segmenters \citep{ji-2021, nguyen2023hyenadna, dotan-2024, wang-2024-multi, qiao2024model}. Because language models are often trained under different tokenization schemas, their autoregressive predictions---step-by-step probabilities and conditional distributions---are not directly comparable across tokenization schemes.

Transducer-based approaches to tokenization are well-established---WordPiece \citep{wu2016googlesneuralmachinetranslation} can be implemented as a transducer \citep{song-etal-2021-fast}, and deterministic finite automata have been constructed for BPE \citep{Berglund-2023-formalizing, berglund2024bpe}. Moreover, transducers also have a long history in language modeling \citep{mohri-1997-finite} and have been adopted for constrained decoding, where an FST enforces lexical or structural constraints \citep{allauzen-etal-2014-pushdown, ghazvininejad-etal-2016-generating, stahlberg-etal-2019-neural, willard2023efficientguidedgenerationlarge, koo2024automatabased, cognetta2025pitfalls}.\looseness=-1

Closely related are neural finite-state transducers \citep{lin-etal-2019-neural} and earlier neural FST hybrids \citep{rastogi-etal-2016-weighting}, which assign (neural) weights to transitions and compute prefix/sequence probabilities as path-sums over all paths consistent with a given output. Our construction admits a similar view: we marginalize over all source strings (paths) that map to a given output. However, rather than learning transition weights, we focus on transducing an off-the-shelf pretrained LM, enabling efficient inference via quotient-remainder decomposition. 

In this study, we generalize character-level conversion and extend \citet{vieira2024languagemodelstokenslanguage} into a framework that enables transforming a language model into another language model, beyond the limited setting of strict-prefix monotonic mappings. We support conversions between sets of units and unit-preserving transformations, provided that the mapping between them can be described by a finite-state transducer.

\clearpage
\section{Limitations}
\label{sec:limitations}

\textbf{Empirical scope.}
Although our framework theoretically enables transduction of any language model to any unit of interest, given a valid transducer, we test only a limited set of architectures (GPT-2, LLaMA 3, and Phi-4) and target specific units (bytes, Penn Treebank tokens, and amino acids). Future research could broaden the analysis to a wider range of models, datasets, and units.

\textbf{Expressiveness.}
Our analysis has focused on functional finite-state transducers and regular languages. Future work could consider dynamically built transducers and distributions over more expressive languages, as well as stochastic maps encoded by non-functional transducers---where the notion of universality would need to be adjusted.

\textbf{Approximation quality.}
Our pruning-based inference algorithm performs well when the source language model concentrates most of its probability mass on a small number of prefixes that map to the current target prefix under the transducer---i.e., when the effective size of the quotient and remainder after pruning is small. When the mass is dispersed across many source prefixes (such as in the DNA to amino-acid case), pruning must discard a larger fraction of the total probability, and the approximation degrades. An importance-sampling approach (\cref{sec:pruning}) could provide a less systematically biased alternative.

\textbf{Speed.}
The speed of our algorithms and implementations may not suit every use case. Obtaining the full distribution (for example, for decoding or model comparisons) currently requires speeds of around 10--20 bytes/sec. While this is not prohibitive for many tasks, it leaves ample room for improvement.\looseness=-1

\textbf{Marginalization.}
A core property of transduced language models is the \emph{marginalization}: the transduction sums source-string probabilities to compute target-string probabilities, aggregating mass across all source strings that map to the same target. Since natural language allows the same meaning to be expressed in multiple ways, a transduction could normalize these to obtain a more representative output distribution over the space of interest. Consider, for instance, a math problem: ``If 12 students are younger than Tom and 12 students are older, how many are there in total?'' The answer could be `25', or `twenty-five', or perhaps  `12+1+12'. If any of these have significant probability mass, we would like to sum them. In chain-of-thought reasoning, we can similarly imagine summing and normalizing probabilities across samples from the system, an approach known as \emph{self-consistency} \citep{wang2023selfconsistency}. We hope to see transduced language models applied in such settings in future work.

\endgroup

\end{document}

%% file: macros.tex
\usepackage[normalem]{ulem}     
\usepackage{xcolor}             
\usepackage{mathtools}
\usepackage{ellipsis}
\usepackage{tikz}
\usepackage{amsmath}
\usepackage{scalerel}

\definecolor{ETHPetrol}{RGB}{0,120,148}
\colorlet{MacroColor}{ETHPetrol}

\renewcommand{\Pr}[2][]{\mathop{\mathrm{Pr}}_{#1}\mleft[#2\mright]}

\newcommand{\huggingface}[0]{\emoji[emoji]{huggingface-LaTeX}}

\makeatletter
\renewcommand*{\mathellipsis}{%
  \mathinner{%
    \kern\ellipsisbeforegap%
    {\ldotp}\kern\ellipsisgap%
    {\ldotp}\kern\ellipsisgap%
    {\ldotp}\kern\ellipsisaftergap%
  }%
}
\renewcommand*{\dotsb@}{%
  \mathinner{%
    \kern\ellipsisbeforegap%
    {\cdotp}\kern\ellipsisgap%
    {\cdotp}\kern\ellipsisgap%
    {\cdotp}\kern\ellipsisaftergap%
  }%
}
\renewcommand*{\@cdots}{%
  \mathinner{%
    \kern\ellipsisbeforegap%
    {\cdotp}\kern\ellipsisgap%
    {\cdotp}\kern\ellipsisgap%
    {\cdotp}\kern\ellipsisaftergap%
  }%
}
\renewcommand*{\ellipsis@default}{%
  \ellipsis@before
  \kern\ellipsisbeforegap
  .\kern\ellipsisgap
  .\kern\ellipsisgap
  .\kern\ellipsisgap
  \ellipsis@after\relax}
\renewcommand*{\ellipsis@centered}{%
  \ellipsis@before
  \kern\ellipsisbeforegap
  .\kern\ellipsisgap
  .\kern\ellipsisgap
  .\kern\ellipsisaftergap
  \ellipsis@after\relax}
\AtBeginDocument{%
  \DeclareRobustCommand*{\dots}{%
    \ifmmode\@xp\mdots@\else\@xp\textellipsis\fi}}
\def\ellipsisgap{-.05em}
\def\ellipsisbeforegap{-.05em}
\def\ellipsisaftergap{.05em}
\makeatother

\newcommand{\defn}[1]{\textbf{#1}}
\newcommand{\defeq}{\mathrel{\overset{\raisebox{-0.25ex}{\textnormal{\tiny def}}}{=}}}

\newcommand{\card}[1]{\left|#1\right|}
\newcommand{\set}[1]{\left\{ #1 \right\}}

\newcommand{\bigo}{\mathcal{O}}

\newcommand{\wsp}{␣}

\newcommand{\sem}[1]{\llbracket #1 \rrbracket}

\definecolor{CharacterColor}{RGB}{98,115,19}
\definecolor{WordColor}{RGB}{102,0,204}
\definecolor{UTFColor}{RGB}{102,178,255}
\definecolor{TokenColor}{RGB}{140,10,89}

\definecolor{TargetColor}{RGB}{98,115,19}   
\definecolor{SourceColor}{RGB}{140,10,89}   

\definecolor{berryblush}{HTML}{A34E81}
\definecolor{palmleaf}{HTML}{939E5D}
\definecolor{slatelake}{HTML}{6A8FA8}
\definecolor{saffron}{HTML}{C4A94D}
\definecolor{jademist}{HTML}{5A9E8F}
\definecolor{dustyiris}{HTML}{7B6B9E}
\definecolor{StormBlue}{RGB}{42,90,120}
\definecolor{Saffron}{RGB}{168,140,22}
\definecolor{JadeMist}{RGB}{30,120,100}
\definecolor{DustyIris}{RGB}{88,60,130}

\newcommand{\tokenfont}[2][]{%
    \ensuremath{%
        \stackrel{\raisebox{-0.5ex}{{\color{TokenColor}\strut\texttt{#2}}}}%
        {\text{\scriptsize \raisebox{-0.5ex}{{\color{black!50}\texttt{\strut#1}}}}}%
    }%
}
\newcommand{\tokenfontSing}[1]{%
    {{\color{TokenColor}\strut\texttt{#1}}}%
}

\newcommand{\charfont}[1]{{\color{CharacterColor}\strut\texttt{#1}}}
\newcommand{\charfontDuo}[2][]{%
    \ensuremath{%
        \stackrel{\raisebox{-0.5ex}{{\footnotesize\color{CharacterColor}\strut\texttt{#2}}}}%
        {\raisebox{-0.5ex}{{\tiny\color{black!50}\texttt{\strut#1} }}}%
    }%
}

\newcommand{\tikzbracketuline}[1]{%
  \tikz[baseline=(t.base)]{
    \node[inner sep=0, outer sep=0] (t) {\strut#1};
    \draw[line width=0.6pt] (t.south west) -- ++(0,-2pt);
    \draw[line width=0.6pt] ($(t.south west)+(0,-2pt)$) -- ($(t.south east)+(0,-2pt)$);
    \draw[line width=0.6pt] ($(t.south east)+(0,-2pt)$) -- (t.south east);
  }%
}
\newcommand{\wordfont}[1]{{\color{WordColor}\tikzbracketuline{\textsc{#1}}}}
\newcommand{\wordfontSimp}[1]{{\color{WordColor}\textsc{#1}}}

\newcommand{\sourceFont}[1]{{\color{SourceColor}\texttt{#1}}}
\newcommand{\targetFont}[1]{{\color{TargetColor}\texttt{#1}}}
\newcommand{\tfont}[1]{{\color{TokenColor}\texttt{#1}}}
\newcommand{\btick}{\tfont{\textasciigrave}}
\newcommand{\charbtick}{\charfont{\textasciigrave}}
\newcommand{\byteRep}[1]{{\tt \textbackslash}#1}

\newcommand{\srcAlphabet}[0]{{\color{TokenColor}\ensuremath{\mathcal{X}}}\xspace}
\newcommand{\tgtAlphabet}[0]{{\color{CharacterColor}\ensuremath{\mathcal{Y}}}\xspace}
\newcommand{\srcStrings}[0]{\ensuremath{\srcAlphabet^*}\xspace}
\newcommand{\tgtStrings}[0]{\ensuremath{\tgtAlphabet^*}\xspace}

\newcommand{\eos}{\textsc{eos}}
\newcommand{\SEP}[0]{\textsc{sep}}

\newcommand{\srcSym}{{\color{TokenColor}x}}
\newcommand{\srcSymp}{{\color{TokenColor}x^\prime}}
\newcommand{\srcSympp}{{\color{TokenColor}x^{\prime\prime}}}

\newcommand{\srcStr}
{\ensuremath{{\color{TokenColor}\boldsymbol{x}}}\xspace}
\newcommand{\srcStrp}[0]{{\color{TokenColor}\srcStr^\prime}}
\newcommand{\srcStrpp}[0]{{\color{TokenColor}\srcStr^{\prime\prime}}}

\newcommand{\tgtSym}{{\color{CharacterColor}y}}
\newcommand{\tgtSymp}{{\color{CharacterColor}y^\prime}}
\newcommand{\tgtSymScore}{{\color{CharacterColor}\widehat{y}}}

\newcommand{\tgtStr}{{\color{CharacterColor}\boldsymbol{y}}}
\newcommand{\tgtStrp}{{\color{CharacterColor}\boldsymbol{y}^\prime}}

\newcommand{\rvX}[0]{{\color{TokenColor}X}}

\newcommand{\srcStringsSub}{{\color{TokenColor}Z}}
\newcommand{\srcStringsSubp}{{\color{TokenColor}Z^\prime}}

\newcommand{\pr}{{\color{TokenColor}\operatorname{pf}}}

\newcommand{\transFnSym}[0]{f}
\newcommand{\transFn}{\ensuremath{{\color{CharacterColor}\transFnSym}}\xspace}
\newcommand{\transFnInv}{\ensuremath{{\color{TokenColor}\transFnSym^{-1}}}\xspace}
\newcommand{\delims}{D}

\newcommand{\transST}{{\color{CharacterColor}\texttt{f}}}

\newcommand{\tokensToBytesFST}{{\transST}_{\alpha}}
\newcommand{\delimFST}{{\transST}_{\delims}}
\newcommand{\ptbFST}{{\transST}_{\mathrm{ptb}}}
\newcommand{\dnaToAaFST}{{\transST}_{\mathrm{dna2aa}}}

\newcommand{\inputDeterminize}[0]{\mathsf{determinize}}
\newcommand{\trim}{\mathsf{trim}}

\newcommand{\preCover}{{\color{TokenColor}\mathcal{P}}}
\newcommand{\Quotient}{{\color{TokenColor}\mathcal{Q}}}
\newcommand{\Remainder}{{\color{TokenColor}\mathcal{R}}}

\newcommand{\algQuotient}{{\color{SourceColor}\texttt{Q}}}
\newcommand{\algRemainder}{{\color{SourceColor}\texttt{R}}}
\newcommand{\algQuotientp}{\algQuotient'}
\newcommand{\algRemainderp}{\algRemainder'}
\newcommand{\maxQuotient}{{\color{TokenColor}\mathcal{C}}}

\newcommand{\basisSet}[1]{\langle #1\rangle}

\newcommand{\preCoverMachine}{{\color{TokenColor}\texttt{P}}}
\newcommand{\ipNFA}{{\color{SourceColor}\texttt{A}}}

\newcommand{\pSource}{p_{\srcAlphabet}}
\newcommand{\pTarget}{p_{\tgtAlphabet}}

\DeclareRobustCommand{\prefixLanguageOp}[1]{\overrightarrow{#1}}
\newcommand{\prefixSource}{{\prefixLanguageOp{\pSource}}}
\newcommand{\prefixTarget}{{\prefixLanguageOp{\pTarget}}}

\newcommand{\gpt}{p_{\mathsf{gpt2}}}
\newcommand{\llama}{p_{\mathsf{llama1B}}}
\newcommand{\llamalarge}{p_{\mathsf{llama8B}}}
\newcommand{\phifour}{p_{\mathsf{phi4}}}
\newcommand{\dna}{p_{\mathsf{dna}}}

\newcommand{\States}{\mathsf{S}}
\newcommand{\Transitions}{\mathsf{T}}
\newcommand{\InitialStates}[0]{\mathsf{I}}
\newcommand{\FinalStates}{\mathsf{F}}
\newcommand{\Path}{\boldsymbol{\pi}}
\newcommand{\Paths}{\Pi}

\newcommand{\state}{s}
\newcommand{\statep}{s^\prime}
\newcommand{\statepp}{s^{\prime\prime}}

\newcommand{\forceStart}[2]{#1_{[#2]}}

\newcommand{\UniversalNode}{\mathsf{Q}}
\newcommand{\RemainderNode}{\mathsf{R}}

\newcommand{\Proj}{\mathsf{proj}}
\newcommand{\ProjA}{\Proj_{\srcAlphabet}}

\newcommand{\powerstate}{S}
\newcommand{\powerstatep}{\powerstate^{\prime}}

\newcommand{\myMintedFunc}[1]{\mbox{\texttt{{\color{blue}#1}}}}
\newcommand{\mintedFunc}[1]{\myMintedFunc{{\color{orange!50!black}#1}}}
\newcommand{\run}[0]{\myMintedFunc{run}}
\renewcommand{\run}[0]{\myMintedFunc{run}}
\newcommand{\step}[0]{\myMintedFunc{step}}
\newcommand{\closure}[0]{\myMintedFunc{closure}}
\newcommand{\truncBuf}[0]{\myMintedFunc{trunc\_buf}}
\newcommand{\prune}[0]{\myMintedFunc{prune}}
\newcommand{\decompose}[0]{\myMintedFunc{decompose}}
\newcommand{\decomposeNext}[0]{\myMintedFunc{decompose\_next}}
\newcommand{\allUniversalNextDist}[0]{\myMintedFunc{all\_universal\_next\_dist}}
\newcommand{\nextDist}[0]{\myMintedFunc{next\_dist}}

\newcommand{\isCylinder}[0]{\myMintedFunc{is\_cylinder}}
\newcommand{\isMember}[0]{\myMintedFunc{is\_member}}
\newcommand{\isLive}[0]{\myMintedFunc{is\_live}}
\newcommand{\isExactMember}[0]{\myMintedFunc{is\_exact\_member}}

\newcommand{\reachableOutputs}[0]{\myMintedFunc{reachable\_outputs}}
\newcommand{\reachableInputs}[0]{\myMintedFunc{reachable\_inputs}}
\newcommand{\ipUniversal}[0]{\myMintedFunc{fast\_univ\_filter}}
\newcommand{\ipUniversalStates}[0]{\mathsf{U}}

\newcommand{\funcPrefixProb}{\myMintedFunc{prefix\_prob}}
\newcommand{\funcProb}{\myMintedFunc{prob}}

\newcommand{\combinedUniversal}{\mintedFunc{combined\_universal}}
\newcommand{\firstOutput}{\mintedFunc{fo}}

\newcommand{\queue}[0]{\texttt{q}}
\newcommand{\queuep}[0]{\ensuremath{\texttt{q}'}}
\newcommand{\Cache}{\texttt{cache}}
\newcommand{\frontier}{\mathcal{F}}
\newcommand{\frontierp}{\frontier^\prime}
\newcommand{\Visited}{V}

\newcommand{\buffer}{{\color{CharacterColor}\boldsymbol{b}}}
\newcommand{\bufferp}{{\color{CharacterColor}\boldsymbol{b}'}}

\newcommand{\quotientsVar}[0]{\texttt{qs}}
\newcommand{\remaindersVar}[0]{\texttt{rs}}
\newcommand{\preimageVar}[0]{\texttt{ps}}

\newcommand{\pdict}{\overline{p}}

\newcommand{\threshold}{\tau}
\newcommand{\beamsize}{K}
\newcommand{\maxCand}{n_\texttt{max}}

\newcommand{\GenLMBytes}[0]{\texttt{genlm-bytes}}

\newlength{\barwidth}
\newlength{\barsep}   
\newlength{\barht}    

\newcommand{\probbarRWlab}[4][blue!30]{
  \pgfmathsetlengthmacro{\w}{#3}
  \coordinate (barstart) at ($(lhs.east |- #2) + (\barsep,-2.5pt)$);
  \fill[#1] (barstart) rectangle ++(\w,\barht);
  \node[font=\tiny,anchor=west]
        at ($(barstart)+(\w+\labelsep,3pt)$) {#4};}

\newcommand{\lang}{L}